\newtheorem{mycondition}{Condition}
\newcites{sec}{Secondary Literature}
\patchcmd{\thebibliography}{\section*{\refname}}{}{}{}
\title{Global Convergence of Stochastic Gradient Descent for Some Non-convex Matrix Problems}
\author{
Christopher De Sa,
Kunle Olukotun, and
Christopher R{\'e} \\
\texttt{cdesa@stanford.edu},
\texttt{kunle@stanford.edu},
\texttt{chrismre@stanford.edu} \\
Departments of Electrical Engineering and Computer Science\\
Stanford University, Stanford, CA 94309
}
\begin{document}

\maketitle

\begin{abstract} 
Stochastic gradient descent (SGD) on a low-rank factorization~\cite{burer2003}
is commonly employed to speed up matrix problems including matrix
completion, subspace
tracking, and SDP relaxation. 
In this paper, we exhibit a step size scheme for SGD on
a low-rank least-squares problem, and 
we prove that, under broad sampling
conditions, our method converges globally from a random
starting point within $O(\epsilon^{-1} n \log n)$ steps with constant
probability for constant-rank problems.
Our modification of SGD relates it to stochastic power
iteration.  We also show experiments to illustrate the runtime
and convergence of the algorithm.
\end{abstract}

\section{Introduction}
We analyze an algorithm to solve the stochastic
optimization problem
\begin{equation}
  \label{eqnFullProblem}
  \begin{array}{ll}
    \mbox{minimize} & \Exv{\normf{\tilde A - X}^2} \\
    \mbox{subject to} & X \in \R^{n \times n}, \rank{X} \le p, X \succeq 0,
  \end{array}
\end{equation}
where $p$ is an integer and $\tilde A$ is a symmetric matrix drawn from
some distribution with bounded covariance.
The solution to this problem is the matrix formed by
zeroing out all but the largest $p$ eigenvalues of the matrix $\mathbf{E}[\tilde A]$. 
This problem, or problems that can be transformed to this problem, appears
in a variety of machine learning
applications including matrix completion~\cite{jain2012,teflioudi2013,chen2011},
general data analysis~\cite{zou2004}, subspace tracking~\cite{grouse},
principle component analysis~\cite{Arora2012},
optimization~\cite{burer2005,journee,mishra2013low,Tropp2014},
and recommendation systems~\cite{Gupta:2013,Boykin2014}.

Sometimes, (\ref{eqnFullProblem}) arises under conditions in which
the samples $\tilde A$ are sparse, but the matrix $X$ would be too large
to store and operate on efficiently; a standard heuristic to use in this
case is a low-rank factorization~\cite{burer2003}.  
The idea is to substitute $X = Y Y^T$ and solve the
problem
\begin{equation}
  \label{eqnBMProblem}
  \begin{array}{ll}
    \mbox{minimize} & \Exv{\normf{\tilde A - Y Y^T}^2} \\
    \mbox{subject to} & Y \in \R^{n \times p}.
  \end{array}
\end{equation}
By construction, if we set $X = Y Y^T$, then $X \in \R^{n \times n}$,
$\rank{X} \le p$, and $X \succeq 0$; this allows us
to drop these constraints.  Instead
of having to store the matrix $X$ (of size $n^2$), we only need
to store the matrix $Y$ (of size $np$).

In practice, many people use stochastic
gradient descent (SGD) to solve (\ref{eqnBMProblem}).
Efficient SGD implementations can scale
to very large
datasets~\cite{jellyfish,hogwild,teflioudi2013,VowpalWabbit,Bottou,AdaGrad,Bottou2008,Hu09acceleratedgradient}.
However, standard stochastic gradient descent on (\ref{eqnBMProblem})
does not converge globally, in the sense that there will always
be some initial values for which the norm of the iterate will diverge
(see Appendix \ref{ssCounterexamples}).

People have attempted to compensate for this with
sophisticated methods like geodesic step rules~\cite{journee} and manifold
projections~\cite{absil}; however, even these methods cannot
guarantee global convergence.
Motivated by this, we describe Alecton, an algorithm for solving
(\ref{eqnBMProblem}), and analyze its convergence.  Alecton is
an SGD-like algorithm that has a simple update rule with a step
size that is a simple function of the norm of the iterate $Y_k$.
We show that Alecton converges globally.
We make the following contributions:
\begin{itemize}
  \item We establish the convergence rate to a global optimum 
    of Alecton using a random initialization;
    in contrast, prior analyses~\cite{candes2014phase,jain2012} have required more expensive
    initialization methods, such as the singular value decomposition
    of an empirical average of the data.
  \item In contrast to previous work that uses bounds on the magnitude
    of the noise~\cite{Hardt2014}, our analysis depends only on the variance of
    the samples.  As a result,
    we are able to be robust to different noise models, and we apply
    our technique to these problems, which did not previously
    have global convergence rates:
    \begin{itemize}
    \item \emph{matrix completion}, in which we observe entries
      of $A$ one at a time~\cite{jain2012,keshavan2010} (Section \ref{sec:matrix:c}),
    \item \emph{phase retrieval}, in which we observe
      $\mathrm{tr}(u^TAv)$ for randomly selected $u,v$~\cite{candes2014phase,phaselift2014} (Section~\ref{sec:phase}), and
    \item \emph{subspace tracking}, in which $A$ is a projection matrix
      and we observe random entries of a random vector in its column space~\cite{grouse}
      (Section~\ref{sec:subspace}).
    \end{itemize}
    Our result is also robust to different noise models.
  \item We describe a martingale-based analysis technique that is novel
    in the space of non-convex optimization.  We are able to generalize
    this technique to some simple regularized problems, and we are optimistic
    that it has more applications.
\end{itemize}

\subsection{Related Work}

Much related work exists in the space of solving low-rank factorized
optimization problems.  Foundational work in this space was done by
Burer and Monteiro~\cite{burer2003,burer2005}, who analyzed the
low-rank factorization of general semidefinite programs.
Their results focus on the classification of the local minima of such
problems, and on conditions under which no non-global minima exist.
They do not analyze the convergence rate of SGD.

Another general analysis in \citet{journee} exhibits a second-order
algorithm that converges to a local solution.  Their results
use manifold optimization
techniques to optimize over the manifold of low-rank matrices.
These approaches have attempted
to correct for falling off the manifold using Riemannian
retractions~\cite{journee},
geodesic steps~\cite{grouse}, or projections back onto the manifold.
General non-convex manifold optimization techniques~\cite{absil}
tell us that first-order
methods, such as SGD, will converge to a fixed point, but they
provide no convergence rate to the global optimum.
Our algorithm only involves a simple rescaling, and we are able
to provide global convergence results.

Our work follows others who have studied individual problems
that we consider.
~\citet{jain2012} study matrix completion and provides a convergence
rate for an exact recovery algorithm, alternating minimization.
\citet{candes2014phase}
provide a similar result for phase retrieval.
In contrast to these results, which require expensive 
SVD-like operations to initialize, our results allow random
initialization.  Our provided convergence
rates apply to additional problems and SGD
algorithms that are used in practice (but are not covered by
previous analysis).  However, our convergence rates are slower
in their respective settings.  This is likely unavoidable
in our setting, as we show that our convergence rate is optimal
in this more general setting (see Appendix \ref{ssLowerBoundSGD}).

A related class of algorithms that are similar to Alecton is
stochastic power iteration~\cite{Arora2012}.
These algorithms reconsider (\ref{eqnFullProblem}) as an
eigenvalue problem, and uses the familiar power iteration
algorithm, adapted to a stochastic setting.
Stochastic power iteration has been applied to a wide variety of
problems~\cite{Arora2012,Goes2014}.
\citet{Oja1985} show convergence of this algorithm,
but provides no rate.
\citet{Arora2013} analyze this problem, and state that
``obtaining a theoretical understanding of the stochastic power
method, or of how the step size should be set, has proved
elusive.''  Our paper addresses this by providing a method for
selecting the step size, although our analysis shows convergence
for any sufficiently small step size.

\citet{Shamir2014} provide exponential-rate local convergence
results for a stochastic power iteration algorithm for PCA.
As they note, it can be used in practice to improve the accuracy
of an estimate returned by another, globally-convergent algorithm
such as Alecton.

Also recently, \citet{Balsubramani2013} and \citet{Hardt2014}
provide a global convergence rate for the stochastic power
iteration algorithm.
Our result only
depends on the variance of the samples, while both their results
require absolute bounds on the magnitude of the noise.  This allows
us to analyze a different class of noise models, which enables us
to do matrix completion, phase retrieval, and subspace tracking
in the same model.

\section{Algorithmic Derivation}

We focus on the low-rank factorized stochastic optimization
problem (\ref{eqnBMProblem}).
We can rewrite the objective as $\Exv{\tilde f(Y)}$,
with sampled objective function
\[
  \tilde f(Y)
  =
  \trace{Y Y^T Y Y^T}
  -
  2 \trace{Y \tilde A Y^T}
  +
  \normf{\tilde A}^2.
\]
In the analysis that follows, we let $A = \Exv{\tilde A}$, and
let its eigenvalues be
$\lambda_1 \ge \lambda_2 \ge \cdots \ge \lambda_n$ with
corresponding orthonormal eigenvectors $u_1, u_2, \ldots, u_n$
(such a decomposition is guaranteed since $A$ is symmetric).
The standard stochastic gradient descent update rule for this problem
is, for some step size $\alpha_k$,
\begin{align*}
  Y_{k+1}
  &=
  Y_k - \alpha_k \nabla \tilde f_k(Y) \\
  &=
  Y_k
  -
  4 \alpha_k \left(Y_k Y_k^T Y_k - \tilde A_k Y_k \right),
\end{align*}
where $\tilde A_k$ is the sample we use at timestep $k$.

The low-rank factorization introduces symmetry into the problem.
If we let
\[
  \mathcal{O}_p = \left\{ U \in \R^{p \times p} \mid U^T U = I_p \right\}
\]
denote the set of orthogonal matrices in $\R^{p \times p}$, then
$\tilde f(Y) = \tilde f(Y U)$ for any $U \in \mathcal{O}_p$.
Previous work has used manifold
optimization techniques to solve such symmetric
problems~\cite{journee}.  \citet{absil} state that stochastic
gradient descent on a manifold has the general form
\[
  x_{k+1}
  =
  x_k - \alpha_k G^{-1}_{x_k} \nabla \tilde f_k(x_k), 
\]
where $G_x$ is the matrix such that for all $u$ and $v$,
\[
  u^T G_x v = \langle u, v \rangle_x,
\]
where the right side of this equation denotes the
\emph{Riemannian metric}~\cite{do1992riemannian}
of the manifold at $x$.  For (\ref{eqnBMProblem}), the manifold in
question is
\[
  \mathcal{M} = \R^{n \times p} / \mathcal{O}_p,
\]
which is the quotient manifold of $\R^{n \times p}$ under the
orthogonal group action.  According to \citet{absil}, this
manifold has induced Riemannian metric
\begin{equation}
  \label{eqnAbsilRMetric}
  \langle U, V \rangle_Y = \trace{U Y^T Y V^T}.
\end{equation}
For Alecton, we are free to pick any Riemannian metric and step
size.  Inspired by (\ref{eqnAbsilRMetric}), we pick a new step
size parameter $\eta$, and let $\alpha_k = \frac{1}{4} \eta$ and set
\[
  \langle U, V \rangle_Y = \trace{U (I + \eta Y^T Y) V^T}.
\]
With this, the SGD update rule becomes
\begin{align*}
  Y_{k+1}
  &=
  Y_k
  -
  \eta \left(Y_k Y_k^T Y_k - \tilde A_k Y_k \right) \left(I + \eta Y_k^T Y_k \right)^{-1} \\
  &=
  \left(
    Y_k \left(I + \eta Y_k^T Y_k \right)
    -
    \eta \left(Y_k Y_k^T Y_k - \tilde A_k Y_k \right)
  \right)
  \left(I + \eta Y_k^T Y_k \right)^{-1} \\
  &=
  \left(I + \eta \tilde A_k\right) Y_k
  \left(I + \eta Y_k^T Y_k \right)^{-1}.
\end{align*}
% \begin{align*}
%   Y_{k+1}
%   &=
%   Y_k
%   -
%   \eta \left(Y_k Y_k^T Y_k - \tilde A_k Y_k \right) \left(I + \eta Y_k^T Y_k \right)^{-1} \\
%   &=
%   \left(
%     Y_k \left(I + \eta Y_k^T Y_k \right)
%     -
%     \eta \left(Y_k Y_k^T Y_k - \tilde A_k Y_k \right)
%   \right) \\
%   &\hspace{2em}
%   \left(I + \eta Y_k^T Y_k \right)^{-1} \\
%   &=
%   \left(I + \eta \tilde A_k\right) Y_k
%   \left(I + \eta Y_k^T Y_k \right)^{-1}.
% \end{align*}
For $p = 1$, choosing a Riemannian metric to use with SGD results
in the same algorithm as choosing an SGD step size that depends on the
iterate $Y_k$.  The same update rule would result if we substituted
\[
  \alpha_k = \frac{1}{4} \eta \left(1 + \eta Y^T Y \right)^{-1}
\]
into the standard SGD update formula.  We can think of this as the manifold
results giving us intuition on how to set our step size.

The reason why selecting this particular step size/metric is useful in
practice is that we can run the simpler update rule
\begin{equation}
  \label{eqPowerIterUpdate}
  \bar Y_{k+1} = \left(I + \eta \tilde A_k\right) \bar Y_k.
\end{equation}
If $\bar Y_0 = Y_0$,
the iteration will satisfy the property that the column space
of $Y_k$ will always be equal to the column space of $\bar Y_k$,
(since $C(XY) = C(X)$ for any invertible matrix $Y$).
That is, if we just care about computing the column space of $Y_k$,
we can do it using the much simpler update rule (\ref{eqPowerIterUpdate}).
Intuitively, we have transformed an optimization
problem operating in the whole space $\R^n$ to one operating on the
Grassmannian; one benefit of Alecton is that we don't have to work
on the actual Grassmannian, but get some of the same benefits from a
rescaling of the $Y_k$ space.
In this specific case, the Alecton update rule is akin to stochastic power
iteration, since it involves a repeated multiplication by the sample;
this would not hold for optimization on other manifolds.

We can use (\ref{eqPowerIterUpdate}) to compute the
column space (or ``angular component'') of the solution, before then recovering
the rest of the solution (the ``radial component'') using averaging.
Doing this corresponds to Algorithm \ref{algAlecton}, Alecton.  
Notice that, unlike most iterative algorithms for matrix recovery, Alecton
does not require any special initialization phase and can be initialized randomly.

\begin{algorithm}[h]
  \caption{Alecton: Solve stochastic matrix problem}
  \begin{algorithmic}
  \label{algAlecton}
    \REQUIRE $\eta \in \R$, $K \in \N$, $L \in \N$, and a sampling
      distribution $\mathcal{A}$
    \STATE \(\triangleright\) \textbf{\textit{Angular component (eigenvector) estimation phase}}
    \STATE Select $Y_0$ uniformly in $\R^{n \times m}$ s.t. $Y_0^T Y_0 = I$.
    \FOR{$k = 0$ \TO $K-1$}
      \STATE Select $\tilde A_k$ uniformly and independently at random from
        the sampling distribution $\mathcal{A}$.
      \STATE $Y_{k+1} \leftarrow Y_k + \eta \tilde A_k Y_k$
    \ENDFOR
    \STATE $\hat Y \leftarrow Y_K \left(Y_K^T Y_K\right)^{-\frac{1}{2}}$
    \STATE \(\triangleright\) \textbf{\textit{Radial component (eigenvalue) estimation phase}}
    \STATE $R_0 \leftarrow 0$
    \FOR{$l = 0$ \TO $L-1$}
      \STATE Select $\tilde A_l$ uniformly and independently at random from
        the sampling distribution $\mathcal{A}$.
      \STATE $R_{l+1} \leftarrow R_l + \hat Y^T \tilde A_l \hat Y$
    \ENDFOR
    \STATE $\bar R \leftarrow R_L / L$
    \RETURN $\hat y \bar R^{\frac{1}{2}}$
  \end{algorithmic}
\end{algorithm}

\paragraph{Analysis}
Analyzing this algorithm is challenging, as
the low-rank decomposition also introduces symmetrical families of
fixed points.  Not all these points are globally optimal: in fact,
a fixed point will occur whenever
\[
  Y Y^T = \sum_{i \in C} \lambda_i u_i u_i^T
\]
for any set $C$ of size less than $p$.

One consequence of the non-optimal fixed points is that
the standard proof of SGD's convergence, in which we choose a
Lyapunov function and show that this function's expectation decreases
with time, cannot work.  This is because, if such a Lyapunov
function were to exist, it would show that no matter where we
initialize the iteration, convergence to a global optimum
will still occur rapidly; this
cannot be possible due to the presence of the non-optimal fixed points.
Thus, a standard statement of global convergence,
that convergence occurs uniformly regardless of initial condition, 
cannot hold.

We therefore use martingale-based methods to show convergence.
Specifically, our attack involves defining a process $x_k$ with
respect to the natural filtration $\F_k$ of the iteration, such that
$x_k$ is a supermartingale, that is $\Exvc{x_{k+1}}{\F_k} \le x_k$. We
then use the \emph{optional stopping theorem}~\cite{fleming1991}
to bound both the
probability and rate of convergence of $x_k$, from which we derive
convergence of the original algorithm.  We describe this analysis
in the next section.

\section{Convergence Analysis}
First, we need a way to define convergence for the angular phase.
For most problems, we want $C(Y_k)$ to be as close as possible to the
span of $u_1,u_2,\ldots,u_p$.  However, for some cases, this is not what
we want.  For example, consider the case where $p = 1$ 
but $\lambda_1 = \lambda_2$.  In this case, the algorithm could not
recover $u_1$, since it is indistinguishable
from $u_2$.  Instead, it is reasonable to expect $C(Y_k)$ to converge
to the span of $u_1$ and $u_2$.

To handle this case, we instead want to measure convergence to
the subspace spanned by some number, $q \ge p$, of the algebraically
largest
eigenvectors (in most cases, $q = p$).  For a particular $q$, let $U$ be
the projection matrix onto the subspace spanned by $u_1, u_2, \ldots, u_q$,
and define $\Delta$, the \emph{eigengap}, as $\Delta = \lambda_q - \lambda_{q+1}$.
We now let $\epsilon > 0$ be an arbitrary error term, and
define an angular success condition for Alecton.
\begin{definition}
  \label{defnAlectonSuccess}
  When running the angular phase of Alecton, we say that
  \emph{success has occurred} at timestep $k$ if and only if for all
  $z \in \R^p$,
  \[
    \frac{\norm{U Y_k z}^2}{\norm{Y_k z}^2} \ge 1 - \epsilon.
  \]
  This condition requires that all members of the column
  space of $Y_k$ are close to the desired subspace.
  We say that \emph{success has occurred by time} $t$ if success has
  occurred for some timestep $k < t$.  Otherwise, we say the algorithm
  has \emph{failed}, and we let $F_t$ denote this failure event.
\end{definition}

To prove convergence, we need to put some restrictions on the problem.
Our theorem requires the following three conditions.
\begin{mycondition}[Alecton Variance]
  A sampling distribution $\mathcal{A}$ with expected value
  $A$ satisfies the \emph{Alecton Variance Condition} (AVC) with parameters
  $(\sigma_a, \sigma_r)$ if and only if for any $y \in \R$ and for any symmetric
  matrix $W \succeq 0$ that commutes with $A$,
  if $\tilde A$ is sampled from $\mathcal{A}$, the following bounds hold:
  \[
    \Exv{y^T \tilde A^T W \tilde A y}
    \le
    \sigma_a^2 \trace{W} \norm{y}^2
  \]
  and
  \[
    \Exv{\left(y^T \tilde A y\right)^2}
    \le
    \sigma_r^2 \norm{y}^4.
  \]
\end{mycondition}
In Section \ref{secExamples}, we show several models that satisfy AVC.
\begin{mycondition}[Alecton Rank]
  An instance of Alecton satisfies the \emph{Alecton Rank Condition} if
  either $p = 1$ (rank-$1$ recovery), or each sample $\tilde A$ from $\mathcal{A}$
  is rank-$1$ (rank-$1$ sampling).
\end{mycondition}
Most of the noise models we analyze have rank-1 samples, and so satisfy the rank
condition.
\begin{mycondition}[Alecton Step Size]
  Define $\gamma$ as
  \[
    \gamma
    =
    \frac{
      2 n \sigma_a^2 p^2 (p + \epsilon)
    }{
      \Delta \epsilon
    }
    \eta.
  \]
  This represents a constant step size parameter that is independent of problem
  scaling.  An instance of Alecton satisfies the 
  \emph{Alecton Step Size Condition} if and only if $\gamma \le 1$.
\end{mycondition}
Note that the step size condition is only an upper bound on the step size.
This means that, even if we do not know the problem parameters exactly, we can still choose a
feasible step size as long as we can bound them.  (However, smaller step sizes
imply slower convergence, so it is a good idea to choose $\eta$ as large as possible.)

We will now define a useful function, then 
state our main theorem that bounds the probability of failure.
\begin{definition}
  \label{defnSFxn}
  For some $p$, let $R \in \R^{p \times p}$ be a random matrix the entries of which
  are independent standard normal random variables.  Define function $Z_p$ as
  \[
    Z_p(\gamma) = 2 \left(1 - \Exv{\Det{I + \gamma p^{-1} (R^T R)^{-1}}^{-1}} \right).
  \]
\end{definition}

\begin{theorem}
  \label{thmAlecton}
  Assume that we run an instance of Alecton that satisfies the variance,
  rank, and step size conditions.  Then for any $t$, the probability
  that the angular phase will have failed up to time $t$ is
  \begin{equation}
    \label{eqnFailureBound}
    \Prob{F_t}
    \le
    Z_p(\gamma)
    +
    \frac{4 n \sigma_a^2 p^2 (p + \epsilon)}{\Delta^2 \gamma \epsilon t}
    \log\left( \frac{n p^2}{\gamma q \epsilon} \right).
  \end{equation}
  Also, in the radial phase, for any constant $\psi$ it holds that
  \[
    \Prob{
      \normf{\bar R - \hat Y^T A \hat Y}^2
      \ge 
      \psi
    }
    \le
    \frac{p^2 \sigma_r^2}{L \psi}.
  \]
\end{theorem}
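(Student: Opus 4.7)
The plan is to construct a supermartingale bounding the failure probability, following the outline in the paper. For the angular phase I would first reduce to the simpler iteration $\bar Y_{k+1} = (I + \eta \tilde A_k) \bar Y_k$, which has the same column space (and therefore the same success status) as the algorithm's iterates. Decomposing with respect to the orthogonal projectors $U$ and $V = I - U$, the success condition is equivalent to $Y^T V Y$ being small relative to $Y^T U Y$ in the positive semidefinite order. Motivated by the form of $Z_p(\gamma)$, I would track the Lyapunov functional
\[
  G(Y) = \Det{I + \gamma p^{-1} (Y^T U Y)^{-1} (Y^T V Y)}^{-1}.
\]
Because $Y_0$ is uniform on the Stiefel manifold, rotational invariance lets one evaluate $\Exv{G(Y_0)}$ in terms of a $p \times p$ Wishart ratio that matches $\Exv{\Det{I + \gamma p^{-1}(R^T R)^{-1}}^{-1}}$, linking the initialization contribution of the bound to $Z_p(\gamma)$.

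The main step is verifying that $G(\bar Y_k)$ is a submartingale up to the first success time $T$. Expanding $\bar Y_{k+1}^T U \bar Y_{k+1}$ and $\bar Y_{k+1}^T V \bar Y_{k+1}$ through the multiplicative update and taking conditional expectations, the first-order contribution is driven by $A$: because $A$ commutes with both $U$ and $V$, and its spectrum on $\mathrm{range}(U)$ exceeds that on $\mathrm{range}(V)$ by at least $\Delta$, there is a deterministic drift of order $\eta \Delta$ pushing $G$ toward its maximum value $1$. The second-order $\eta^2$ terms are controlled using AVC's first inequality, with the rank condition preventing the appearance of uncontrollable quartic forms; they are dominated by the first-order drift precisely when $\gamma \le 1$. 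This is the main obstacle: tracking the derivative of the determinant in the matrix case ($p > 1$) with noncommuting products will require care, and I would lean on matrix determinant identities and operator monotonicity to keep the expansion tractable.

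With the submartingale property in hand, I would apply the optional stopping theorem to $G(\bar Y_{k \wedge T})$, giving $\Exv{G(\bar Y_{T \wedge t})} \ge \Exv{G(\bar Y_0)}$. On the failure event $F_t = \{T > t\}$, $G(\bar Y_t)$ is bounded away from $1$, so rearranging produces the $Z_p(\gamma)$ contribution plus a slack term measuring how far the process still has to drift to reach success by time $t$. Estimating this slack via a logarithmic-potential drift argument (equivalently, computing $\Exvc{-\log G(\bar Y_{k+1}) + \log G(\bar Y_k)}{\F_k}$ and applying Markov's inequality to the associated hitting time) yields the $\frac{1}{t} \log(n p^2 / (\gamma q \epsilon))$ factor, with the $\frac{n \sigma_a^2 p^2 (p+\epsilon)}{\Delta^2 \gamma \epsilon}$ prefactor collecting the drift rate and variance constants.

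The radial bound is a short Chebyshev calculation. Conditional on $\hat Y$, the matrices $\hat Y^T \tilde A_l \hat Y$ are i.i.d.\ with mean $\hat Y^T A \hat Y$, so
\[
  \Exv{\normf{\bar R - \hat Y^T A \hat Y}^2}
  = \frac{1}{L}\, \Exv{\normf{\hat Y^T \tilde A \hat Y - \hat Y^T A \hat Y}^2}
  \le \frac{1}{L}\, \Exv{\normf{\hat Y^T \tilde A \hat Y}^2}.
\]
Writing the squared Frobenius norm entrywise as $\sum_{i,j}(\hat y_i^T \tilde A \hat y_j)^2$, each scalar quadratic form is bounded via AVC's radial inequality on the unit vectors $\hat y_i$ (using polarization for the off-diagonal $i \ne j$ entries), giving $\Exv{\normf{\hat Y^T \tilde A \hat Y}^2} \le p^2 \sigma_r^2$. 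Markov's inequality at threshold $\psi$ then delivers the stated $p^2 \sigma_r^2 / (L \psi)$ bound.
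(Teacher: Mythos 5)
Your plan follows the paper's proof architecture essentially step for step: a determinant-ratio Lyapunov functional, a conditional drift bound, optional stopping to isolate the failure probability, a second (logarithmic) supermartingale plus optional stopping to bound $\Exv{T}$, and Markov with a union bound to assemble the final estimate; the radial bound is the same Chebyshev/Markov calculation. So the approach is correct in spirit, but the specific functional you write down has a scaling error that would break the argument if carried out literally.

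You set $G(Y)=\Det{I+\gamma p^{-1}(Y^TUY)^{-1}(Y^TVY)}^{-1}$ ``motivated by the form of $Z_p(\gamma)$.'' That coefficient $\gamma p^{-1}$ is the right object to compare against the Wishart ratio defining $Z_p$, but it is \emph{not} the coefficient that should multiply $(Y^TUY)^{-1}(Y^TVY)$ inside the tracked potential. The paper's $\tau_k$ can be rewritten exactly as $\tau_k=\Det{I+\gamma n^{-1}p^{-2}q\,(Y_k^TUY_k)^{-1}(Y_k^TVY_k)}^{-1}$; the extra $q n^{-1} p^{-1}$ factor is essential. At initialization, $Y_0^TVY_0$ scales like $(n-q)I$ while $Y_0^TUY_0$ scales like $qI$, so a coefficient of order $q/n$ is precisely what keeps the potential $\Theta(1)$ at $k=0$. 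With your $\gamma p^{-1}$ the product inside the determinant would be $\Theta(\gamma n/(pq))$, making $\Exv{G(Y_0)}$ tiny and the optional-stopping bound vacuous. The match with $Z_p(\gamma)$ in the paper is \emph{derived}, not built in: the argument uses Jensen's inequality (on the convex map $M\mapsto\Det{M}^{-1}$) to replace $Z^TZ$ by its expectation $(n-q)I$, then a further random-projection trick to reduce a $q\times p$ Gaussian to a $p\times p$ Gaussian $R$, and only then does $\gamma p^{-1}(R^TR)^{-1}$ appear. Similarly, your logarithmic-potential argument should be applied to $\log(1-\tau_k)$ (which the paper uses as $\psi_k=\log(1-\tau_k)+\tfrac12\eta\Delta k$), not to $\log G$; the two are not interchangeable since $\tau$ and $1-\tau$ have different monotone drifts, and the paper's hitting-time bound depends on lower-bounding $1-\tau_T$ by $\gamma n^{-1}p^{-2}q\epsilon$ via the no-success lemma, which is where the $\log(np^2/(\gamma q\epsilon))$ factor comes from. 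Once you replace your $G$ with the correctly scaled $\tau$ and use $\log(1-\tau)$ for the hitting-time potential, your outline becomes the paper's proof.
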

In particular, if $\sigma_a \Delta^{-1}$ does not vary with $n$, this
theorem implies convergence of the angular phase
with constant probability after $O(\epsilon^{-1} n p^3 \log n)$
iterations and in the same amount of time.
Note that since we do not reuse samples in Alecton, our rates do not
differentiate between sampling and computational complexity, unlike many other
algorithms (see Appendix \ref{ssComparison}).  We also do not consider
numerical error or overflow: periodically re-normalizing the iterate may
be necessary to prevent these in an implementation of Alecton.

Since the upper bound expression uses $Z_p$, which is
obscure, we plot it here (Figure \ref{figSBound}).
We also can make a more precise statement about the failure rate
for $p = 1$.
\begin{lemma}
\label{lemmaZ1}
For the case of rank-$1$ recovery,
  \[
    Z_1(\gamma)
    =
    \sqrt{2 \pi \gamma}
    \exp\left(\frac{\gamma}{2} \right)
    \mathrm{erfc}\left(\sqrt{\frac{\gamma}{2}} \right)
    \le
    \sqrt{2 \pi \gamma}.
  \]
\end{lemma}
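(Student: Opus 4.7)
The plan is to directly evaluate the expectation defining $Z_1$ and then bound the resulting closed form.

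First, I would specialize Definition~\ref{defnSFxn} to $p=1$. When $p=1$ the matrix $R$ collapses to a single standard normal $R$, so $R^T R = R^2$ and $I + \gamma p^{-1}(R^T R)^{-1}$ is just the scalar $1 + \gamma/R^2$. Hence
\[
  Z_1(\gamma)
  = 2\left(1 - \Exv{\frac{R^2}{R^2+\gamma}}\right)
  = 2\,\Exv{\frac{\gamma}{R^2+\gamma}},
\]
a one-dimensional Gaussian integral.

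Second, I would evaluate this integral in closed form. Writing the expectation against the standard normal density and making the substitution $r = \sqrt{\gamma}\,s$ reduces the problem to
\[
  \Exv{\frac{\gamma}{R^2+\gamma}}
  = \frac{\sqrt{\gamma}}{\sqrt{2\pi}}\int_{-\infty}^{\infty}\frac{e^{-\gamma s^2/2}}{s^2+1}\,ds.
\]
I would then apply the classical identity $\int_0^\infty \frac{e^{-a s^2}}{s^2+1}\,ds = \tfrac{\pi}{2}\,e^{a}\,\mathrm{erfc}(\sqrt{a})$ with $a = \gamma/2$ (this can be derived by differentiating both sides with respect to $a$ and checking that they satisfy the same first-order ODE with the same value at $a=0$, or by the Plancherel argument on $1/(s^2+1)$). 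This yields
\[
  \Exv{\frac{\gamma}{R^2+\gamma}}
  = \sqrt{\frac{\pi\gamma}{2}}\,e^{\gamma/2}\,\mathrm{erfc}\!\left(\sqrt{\gamma/2}\right),
\]
so multiplying by $2$ gives exactly the claimed identity for $Z_1(\gamma)$.

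Third, I would prove the bound $Z_1(\gamma) \le \sqrt{2\pi\gamma}$, which is equivalent to $e^{t^2}\mathrm{erfc}(t) \le 1$ for $t = \sqrt{\gamma/2} \ge 0$. Setting $h(t) = e^{-t^2} - \mathrm{erfc}(t)$, we have $h(0) = 0$ and $h'(t) = 2 e^{-t^2}\bigl(\pi^{-1/2} - t\bigr)$; thus $h$ increases on $[0, \pi^{-1/2}]$, decreases afterwards, and tends to $0$ from above as $t\to\infty$ (since $\mathrm{erfc}(t) \sim e^{-t^2}/(t\sqrt{\pi})$). Consequently $h(t)\ge 0$ on $[0,\infty)$, i.e.\ $\mathrm{erfc}(t) \le e^{-t^2}$, which gives the desired inequality.

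The only step with any real content is the integral evaluation in the second paragraph; everything else is bookkeeping. I would be most careful to justify the tabulated identity for $\int_0^\infty e^{-as^2}/(s^2+1)\,ds$, either by citing it or by the brief ODE argument sketched above, since it is the source of the $e^{\gamma/2}\,\mathrm{erfc}(\sqrt{\gamma/2})$ factor that distinguishes $Z_1$ from more elementary expressions.
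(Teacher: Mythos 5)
Your proof is correct and reaches the same closed form for $Z_1$, but the route for the Gaussian integral differs from the paper's. The paper (in its auxiliary Lemma on $\Exv{a^2/(x^2+a^2)}$) evaluates the integral by a Fourier/Parseval argument: it uses the known Fourier transform of the Cauchy kernel $a/(x^2+a^2)$, the fact that the Gaussian is an eigenfunction of the Fourier transform, and Plancherel to convert the convolution-like integral into an elementary one that is then completed to $\mathrm{erfc}$ by a change of variable. Your proposal instead cites (and offers to re-derive) the tabulated identity $\int_0^\infty e^{-as^2}/(s^2+1)\,ds = \tfrac{\pi}{2}e^{a}\mathrm{erfc}(\sqrt a)$ via a first-order linear ODE in $a$: differentiation under the integral sign and the elementary fact $F'(a) = F(a) - \tfrac12\sqrt{\pi/a}$ with $F(0)=\pi/2$, which indeed pins down the solution uniquely. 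This is a more self-contained and elementary route, trading the Fourier machinery for a one-line ODE check; either is fine. For the final inequality the paper merely asserts $\mathrm{erfc}(\sqrt x)\le e^{-x}$ (with a sign typo in the exponent), whereas you actually prove it by monotonicity of $h(t)=e^{-t^2}-\mathrm{erfc}(t)$, which is a small but genuine improvement in rigor. One minor caution if you write this up: when invoking the asymptotic $\mathrm{erfc}(t)\sim e^{-t^2}/(t\sqrt\pi)$ you don't really need it — the fact that $h$ is decreasing on $(\pi^{-1/2},\infty)$ and tends to $0$ already forces $h\ge 0$ there — so you can safely drop that remark.
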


\begin{figure}[h]%
\centering
\resizebox{!}{.30\textwidth}{%
\Large \input{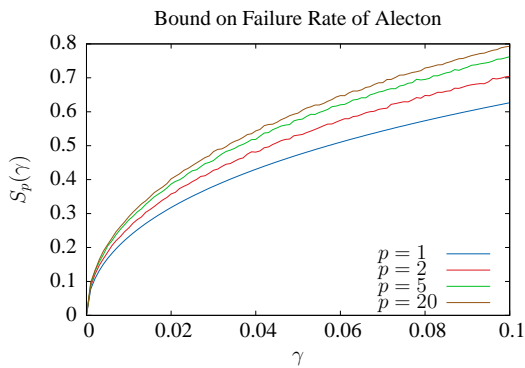}%
}%
\caption{Value of $Z_p$ computed as average of $10^5$ samples.}%
\label{figSBound}%
\end{figure}

% We also plot the upper bound of the failure rate
% (Figure \ref{figExpBound}).

% \begin{figure}[h]%
% \centering
% \resizebox{!}{.30\textwidth}{%
% \Large \input{plotexpbound.tex}%
% }%
% \caption{Upper bound for Alecton failure rate.  By choosing $\gamma$, we can make the failure rate arbitrarily small.}%
% \label{figExpBound}%
% \end{figure}

\subsection{Martingale Technique}
\label{ssMartingaleTechnique}
A proof for Theorem \ref{thmAlecton} and full formal definitions
will appear in Appendix \ref{ssaProofs} of this document,
but since the
method is nonstandard for non-convex optimization
(although it has been used in \citet{Shamir2011} to show convergence for convex problems),
we will outline it here.  
First, we define a
\emph{failure event} $f_k$ at each timestep,
that occurs if the iterate gets ``too close''
to the unstable fixed points.
Next, we define a sequence $\tau_k$, where
\[
  \tau_k
  =
  \frac{
    \Det{Y_k^T U Y_k}
  }{
    \Det{Y_k^T \left(
      \gamma n^{-1} p^{-2} q I 
      +
      (1 - \gamma n^{-1} p^{-2} q) U
    \right) Y_k}
  }
\]
(where $\Det{X}$ denotes the determinant of $X$);
the intuition here is that $\tau_k$ is close to $1$ if and only
if success occurs, and close to $0$ when failure occurs.
We show that, if neither success nor failure occurs at time $k$,
\begin{equation}
  \label{taukeqn}
  \Exvc{\tau_{k+1}}{\F_k} \ge \tau_k \left(1 + R \left(1 - \tau_k \right) \right)
\end{equation}
for some constant $R$; here, $\F_k$ denotes the \emph{filtration} at time $k$, 
which contains all the events that have occurred up to time
$k$~\cite{fleming1991}.
If we let $T$ denote the first time at which either success or failure occurs,
then this implies that $\tau_k$ is a submartingale for $k < T$.  
We use the optional stopping Theorem~\cite{fleming1991} (here we state
a discrete-time version).
\begin{definition}[Stopping Time]
  A random variable $T$ is a stopping time with respect to a filtration $\F_k$
  if and only if $\left\{T \le k \right\} \in \F_k$ for all $k$.  That is, we can tell whether
  $T \le k$ using only events that have occurred up to time $k$.  
\end{definition}
\begin{theorem}[Optional Stopping Theorem]
  If $x_k$ is a martingale (or submartingale) with respect to a filtration
  $\F_k$, and $T$ is a stopping time with respect to the same filtration, then
  $x_{k \wedge T}$ is also a martingale (resp. submartingale) with respect to
  the same filtration, where $k \wedge T$ denotes the minimum of $k$ and $T$.
  In particular, for bounded submartingales, this implies that
  $\Exv{x_0} \le \Exv{x_T}$.
\end{theorem}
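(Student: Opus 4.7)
The plan is to prove the stopped process is adapted, integrable, and satisfies the (sub)martingale inequality, then pass to the limit under boundedness to get the final consequence. First I would verify that $x_{k \wedge T}$ is $\F_k$-measurable: since $T$ is a stopping time, $\{T \le j\} \in \F_j \subseteq \F_k$ for every $j \le k$, so the indicator decomposition $x_{k \wedge T} = \sum_{j=0}^{k-1} x_j \mathbf{1}\{T=j\} + x_k \mathbf{1}\{T \ge k\}$ exhibits $x_{k\wedge T}$ as a measurable function of $\F_k$-measurable quantities. Integrability is immediate because $|x_{k \wedge T}| \le |x_0| + |x_1| + \cdots + |x_k|$ and each $x_j$ is integrable by hypothesis.

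The heart of the argument is the telescoping identity
\[
  x_{(k+1) \wedge T} - x_{k \wedge T} = (x_{k+1} - x_k)\, \mathbf{1}\{T > k\}.
\]
This holds pointwise: on $\{T \le k\}$ both sides are zero, and on $\{T > k\}$ the left side is exactly $x_{k+1}-x_k$. Since $\{T > k\} = \{T \le k\}^c \in \F_k$, the indicator can be pulled out of the conditional expectation, yielding
\[
  \Exvc{x_{(k+1) \wedge T}}{\F_k} - x_{k \wedge T}
  = \mathbf{1}\{T > k\} \cdot \Exvc{x_{k+1} - x_k}{\F_k}.
\]
If $x_k$ is a martingale, the right-hand side is $0$, so $x_{k \wedge T}$ is a martingale; if $x_k$ is a submartingale, the right-hand side is nonnegative (the indicator is nonnegative and the conditional increment is $\ge 0$), so $x_{k \wedge T}$ is a submartingale.

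For the ``in particular'' statement, suppose $|x_k| \le M$ for all $k$. Then $|x_{k \wedge T}| \le M$ as well, and the submartingale property just established gives $\Exv{x_0} = \Exv{x_{0 \wedge T}} \le \Exv{x_{k \wedge T}}$ for every $k$. Assuming $T < \infty$ almost surely (which the paper's application will ensure by taking $T$ to be the first time a success or failure event occurs, both of which are certain), we have $x_{k \wedge T} \to x_T$ almost surely as $k \to \infty$, and the uniform bound $M$ lets us invoke dominated convergence to pass the limit under the expectation, producing $\Exv{x_0} \le \Exv{x_T}$. The main subtlety — and the only place where the boundedness hypothesis is essential — is precisely this interchange of limit and expectation; without some such integrability control one can only conclude $\Exv{x_0} \le \Exv{x_{k \wedge T}}$ for each finite $k$, which is not strong enough for the supermartingale bounds the paper needs in applying this theorem to $\tau_k$.
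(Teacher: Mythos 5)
Your argument is correct: the decomposition of $x_{k\wedge T}$ for adaptedness and integrability, the increment identity $x_{(k+1)\wedge T}-x_{k\wedge T}=(x_{k+1}-x_k)\mathbf{1}\{T>k\}$ with $\{T>k\}\in\F_k$, and the passage from $\Exv{x_0}\le\Exv{x_{k\wedge T}}$ to $\Exv{x_0}\le\Exv{x_T}$ via boundedness and dominated convergence is exactly the standard proof of this discrete-time optional stopping statement. Note, however, that the paper does not prove this theorem at all: it is quoted as a known result with a citation, so there is no internal proof to compare against; your write-up would serve as a self-contained justification. Two small points worth keeping in mind. First, as you correctly flag, the conclusion $\Exv{x_0}\le\Exv{x_T}$ requires $T<\infty$ almost surely (or a convention defining $x_T$ on $\{T=\infty\}$, e.g.\ as the a.s.\ limit of the bounded stopped process, which exists by martingale convergence); the theorem as stated in the paper leaves this implicit, and the paper's applications take $T$ to be the first success-or-failure time, whose finiteness is itself established only later via the stopping-time expectation bound, so being explicit about this hypothesis is a genuine improvement rather than a defect. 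Second, your closing remark about ``supermartingale bounds'' is slightly off target: the paper applies the submartingale direction to $\tau_k$ and the supermartingale direction (equivalently, the submartingale direction for $-\psi_k$) to the process $\psi_k=\log(1-\tau_k)+\tfrac{1}{2}\eta\Delta k$; your proof covers both cases, since the martingale/submartingale dichotomy you prove immediately yields the supermartingale statement by negation.
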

Here, $T$ is a stopping
time since it depends only on events occurring before timestep $T$.
Applying this to the submartingale $\tau_k$ results in
\begin{align*}
  \Exv{\tau_0}
  & \le
  \Exv{\tau_T} \\
  &=
  \Exvc{\tau_T}{F_T} \Prob{f_T} + \Exvc{\tau_T}{\lnot F_T} (1 - \Prob{f_T}) \\
  & \le
  \delta \Prob{f_T} + (1 - \Prob{f_T}).
\end{align*}
This isolates the probability of the failure event occurring.
Next, subtracting $1$ from
both sides of (\ref{taukeqn}) and taking the logarithm results in
\begin{align*}
  \Exvc{\log\left(1 - \tau_{k+1}\right)}{\F_k}
  & \le 
    \log(1 - \tau_k) + \log\left(1 - R \tau_k \right) \\
  & \le
    \log(1 - \tau_k) - R \delta.
\end{align*}
So, if we let $W_k = \log(1 - \tau_k) + R \delta k$, then $W_k$ is a supermartingale.
We again apply the optional stopping theorem to produce
\[
  \Exv{W_0}
  \ge
  \Exv{W_T}
  =
  \Exv{\log(1 - \tau_T)} + R \delta \Exv{T}.
\]
This isolates the expected value of the stopping time.  Finally, we notice
that success occurs before time $t$ if $T \le t$ and $f_T$ does not occur.
By the union bound, this implies that
\[
  P_{\text{failure}}
  \le
  \Prob{f_T} + \Prob{T \le t},
\]
and by Markov's inequality,
\[
  P_{\text{failure}}
  \le
  \Prob{f_T} + t^{-1} \Exv{T}.
\]
Substituting the isolated values for $\Prob{f_T}$ and $\Exv{T}$ produces
the expression above in (\ref{eqnFailureBound}).

The radial part of the theorem
follows from an application of Chebychev's inequality to the average of
$L$ samples of $\hat y^T \tilde A \hat y$ --- we do not devote any discussion
to it since averages are already well understood.

\section{Application Examples}
\label{secExamples}

\subsection{Entrywise Sampling}
\label{sec:matrix:c}
One sampling distribution that arises in many applications (most importantly,
matrix completion~\cite{candes2008}) is
\emph{entrywise sampling}.  This occurs when the samples are independently
chosen from the entries of $A$.  Specifically,
\[
  \tilde A = n^2 e_i e_i^T A e_j e_j^T,
\]
where $i$ and $j$ are each independently drawn from ${1,\ldots,n}$.
It is standard for these types of problems to introduce
a \emph{matrix coherence bound}~\cite{jain2012}.
\begin{definition}
  A matrix $A \in \R^{n \times n}$ is incoherent with parameter $\mu$ if and
  only if for every unit eigenvector $u_i$ of the matrix,
  and for all standard basis vectors $e_j$,
  \[
    \Abs{e_j^T u_i} \le \mu n^{-\frac{1}{2}}.
  \]
\end{definition}
Under an incoherence assumption, we can provide a bound on the second
moment of $\tilde A$, which is all that we need to apply Theorem
\ref{thmAlecton} to this problem.
\begin{lemma}
  \label{lemmaEntrywiseAVC}
  If $A$ is incoherent with parameter $\mu$, and $\tilde A$ is sampled uniformly from the
  entries of $A$, then the distribution of $\tilde A$ satisfies the Alecton variance
  condition with parameters
  $\sigma_a^2 = \mu^4 \normf{A}^2$ and $\sigma_r^2 = \mu^4 \trace{A}^2$.
\end{lemma}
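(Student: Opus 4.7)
The plan is to exploit the rank-one structure of the entrywise sample $\tilde A = n^2 A_{ij} e_i e_j^T$ (with $i,j$ uniform and independent on $\{1,\ldots,n\}$) and reduce both AVC inequalities to elementary sums over entries of $A$ that are controlled by the coherence hypothesis.

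For the angular bound I would first compute $\tilde A^T W \tilde A = n^4 A_{ij}^2 W_{ii}\, e_j e_j^T$, so that taking expectations over $i,j$ gives
\[
  \Exv{y^T \tilde A^T W \tilde A y} = n^2 \sum_{i,j} A_{ij}^2 W_{ii} y_j^2.
\]
Because $W \succeq 0$ commutes with $A$, the two matrices are simultaneously diagonalizable: $W = \sum_k w_k u_k u_k^T$ with $w_k \ge 0$. Coherence applied in this common eigenbasis yields the two pointwise inequalities
\[
  W_{ii} = \sum_k w_k (u_k)_i^2 \le \frac{\mu^2}{n} \trace{W}, \qquad
  \sum_i A_{ij}^2 = (A^2)_{jj} = \sum_k \lambda_k^2 (u_k)_j^2 \le \frac{\mu^2}{n} \normf{A}^2.
\]
Substituting both and using $\sum_j y_j^2 = \norm{y}^2$ exactly cancels the $n^2$ prefactor and produces the bound with $\sigma_a^2 = \mu^4 \normf{A}^2$.

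For the radial bound, $y^T \tilde A y = n^2 A_{ij} y_i y_j$, so after squaring and averaging,
\[
  \Exv{(y^T \tilde A y)^2} = n^2 \sum_{i,j} A_{ij}^2 y_i^2 y_j^2.
\]
Since $A \succeq 0$ in the problem setup, coherence gives a uniform bound on the entries, $|A_{ij}| = |\sum_k \lambda_k (u_k)_i (u_k)_j| \le (\mu^2/n) \sum_k \lambda_k = \mu^2 \trace{A}/n$. Pulling this maximum out and using $\sum_{i,j} y_i^2 y_j^2 = \norm{y}^4$ yields $\sigma_r^2 = \mu^4 \trace{A}^2$.

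The one step that is not purely mechanical, and the place where the commutation hypothesis earns its keep, is the control of $W_{ii}$: incoherence constrains only the eigenvectors of $A$, so without simultaneous diagonalizability there is no way to transfer incoherence of $A$ into a bound on the diagonal of an arbitrary PSD $W$. Once that step is in hand, both inequalities fall out by bounding one factor by coherence of the eigenvectors and the other by either $\normf{A}^2$ or $\trace{A}$.
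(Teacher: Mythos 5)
Your proof is correct and follows essentially the same route as the paper's: both bound $W_{ii}$ (equivalently $e_i^T W e_i$) and the $A$-dependent factor through incoherence of the eigenvectors shared with $W$, and both rest on the same implicit assumption that the eigenvalues of $A$ are nonnegative. The only cosmetic difference is in the radial bound, where the paper first applies Cauchy--Schwarz as $(e_i^T A e_j)^2 \le (e_i^T A e_i)(e_j^T A e_j)$ and then invokes incoherence on each factor, while you bound $|A_{ij}|$ directly from the eigendecomposition; both give $A_{ij}^2 \le \mu^4 \trace{A}^2/n^2$.
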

For problems in which the matrix $A$ is of constant rank, and its eigenvalues do not vary
with $n$, neither $\normf{A}$ nor $\trace{A}$ will vary with $n$.  In this case,
$\sigma_a^2$, $\sigma_r^2$, and $\Delta$ will be constants, and the
$O(\epsilon^{-1} n \log n)$ bound on convergence time will hold.

% This lemma applied to Theorem \ref{thmAlecton} above produces a more specialized result.

% \begin{corollary}[Convergence of Alecton for Entrywise Sampling]
%   Assume that we run Alecton using entrywise sampling on a matrix $A$ such
%   that $A$ is incoherent with parameter $\mu$.  Then under the conditions
%   of Theorem \ref{thmAlecton}, the expected stopping time will be
%   \[
%     \Exv{T}
%     \le
%     \frac{
%       4 n \mu^4 \normf{A}^2 (\epsilon + 1)
%     }{
%       \Delta^2 \gamma \epsilon
%     }
%     \log\left(
%       \frac{n}{\epsilon \gamma q}
%     \right).
%   \]
% \end{corollary}
% Notice that, if the eigenvalues of $A$ are independent of $n$, this corollary implies
% $O(\epsilon^{-1} n \log n)$ convergence.

\subsection{Rectangular Entrywise Sampling}
\label{sec:matrix:rectc}
Entrywise sampling also commonly appear in rectangular matrix recovery problems.  In these
cases, we are trying to solve something like 
\[
  \begin{array}{ll}
    \mbox{minimize} & \normf{M - X}^2 \\
    \mbox{subject to} & X \in \R^{m \times n}, \rank{X} \le p.
  \end{array}
\]
To solve this problem using Alecton, we first convert it into a symmetric
matrix problem by constructing the block matrix
\[
  A = \left[ \begin{array}{cc} 0 & M \\ M^T & 0 \end{array} \right];
\]
it is known that recovering the dominant eigenvectors of $A$
is equivalent to recovering the dominant singular vectors of $M$.

Entrywise sampling on $M$ corresponds to choosing a random $i \in {1,\ldots,m}$
and $j \in {1,\ldots,n}$,
and then sampling $\tilde A$ as
\[
  \tilde A = m n M_{ij} (e_i e_{m+j}^T + e_{m+j} e_i^T).
\]
In the case where we can bound the entries of $M$ (this is natural for
recommender systems), we can prove the following.
\begin{lemma}
  \label{lemmaEntrywiseRectAVC}
  If $M \in \R^{m \times n}$ satisfies the entry bound
  \[
    M_{ij}^2 \le \xi m^{-1} n^{-1} \normf{M}^2
  \]
  for all $i$ and $j$, then the rectangular entrywise sampling distribution on $M$
  satisfies the Alecton variance condition with parameters
  \[
    \sigma_a^2 = \sigma_r^2 = 2 \xi \normf{M}^2.
  \]
\end{lemma}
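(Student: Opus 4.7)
My plan is to prove both bounds in the AVC by direct computation, exploiting only the PSD structure of $W$ (the commutation with $A$ turns out to be unnecessary here).

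First I would compute $\tilde A y$ explicitly. Since $\tilde A$ is symmetric and supported on the two rank-one blocks indexed by $(i,m+j)$,
\[
  \tilde A y = mn\, M_{ij}\bigl(y_{m+j}\, e_i + y_i\, e_{m+j}\bigr),
\]
so expanding $y^T \tilde A W \tilde A y = (\tilde A y)^T W (\tilde A y)$ yields
\[
  y^T \tilde A W \tilde A y = m^2 n^2 M_{ij}^2 \bigl[y_{m+j}^2 W_{ii} + y_i^2 W_{m+j,m+j} + 2 y_i y_{m+j} W_{i,m+j}\bigr].
\]
The diagonal terms are innocuous, but the off-diagonal term $2 y_i y_{m+j} W_{i,m+j}$ can have either sign and is the main obstacle: a naive bound like $v^T W v \le \lambda_{\max}(W)\|v\|^2$ would lose a factor of $\max(m,n)$ relative to $\trace(W)$, so that approach fails.

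The key observation is that $W \succeq 0$ forces its $2\times 2$ principal submatrix indexed by $\{i, m+j\}$ to be PSD, so $W_{i,m+j}^2 \le W_{ii}\, W_{m+j,m+j}$. Combining with AM-GM gives
\[
  2\, y_i y_{m+j} W_{i,m+j} \le 2 |y_i y_{m+j}| \sqrt{W_{ii} W_{m+j,m+j}} \le y_{m+j}^2 W_{ii} + y_i^2 W_{m+j,m+j},
\]
which effectively doubles the diagonal contribution and eliminates the dangerous cross term. Taking the expectation over uniform $(i,j)$ and applying the entry bound $M_{ij}^2 \le \xi \normf{M}^2/(mn)$ then gives
\[
  \Exv{y^T \tilde A W \tilde A y} \le 2\xi\, \normf{M}^2 \bigl[\|y^{(2)}\|^2 \trace(W_{11}) + \|y^{(1)}\|^2 \trace(W_{22})\bigr],
\]
where $y^{(1)}, y^{(2)}$ are the top/bottom blocks of $y$ and $W_{11}, W_{22}$ the diagonal blocks of $W$. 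An easy algebraic check shows $\|y^{(2)}\|^2 \trace(W_{11}) + \|y^{(1)}\|^2 \trace(W_{22}) \le \|y\|^2 \trace(W)$ (expand $(\trace W_{11} + \trace W_{22})(\|y^{(1)}\|^2 + \|y^{(2)}\|^2)$ and drop the nonnegative cross terms), yielding the first AVC bound with $\sigma_a^2 = 2\xi \normf{M}^2$.

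For the radial bound, the computation is shorter: $y^T \tilde A y = 2 mn M_{ij}\, y_i y_{m+j}$, so
\[
  \Exv{(y^T \tilde A y)^2} = 4 mn \sum_{i,j} M_{ij}^2 y_i^2 y_{m+j}^2 \le 4\xi \normf{M}^2 \|y^{(1)}\|^2 \|y^{(2)}\|^2.
\]
Finishing with $\|y^{(1)}\|^2 \|y^{(2)}\|^2 \le \tfrac{1}{2}(\|y^{(1)}\|^4 + \|y^{(2)}\|^4) \le \tfrac{1}{2}\|y\|^4$ gives $\sigma_r^2 = 2\xi \normf{M}^2$. The main conceptual step is the PSD-minor trick for the cross term; everything else is bookkeeping.
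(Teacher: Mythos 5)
Your proof is correct, and it takes a genuinely different route from the paper's. The paper first establishes a uniform bilinear bound: for arbitrary $y, z \in \R^{m+n}$,
\[
  \Exv{(z^T \tilde A y)^2}
  \le 2 \xi m n \normf{M}^2\, \Exv{(z^T e_i\, e_{m+j}^T y + z^T e_{m+j}\, e_i^T y)^2}
  \le 2 \xi \normf{M}^2 \norm{y}^2 \norm{z}^2,
\]
using $(a+b)^2 \le 2(a^2+b^2)$ and the fact that $\Exv{e_i e_i^T}$ and $\Exv{e_{m+j} e_{m+j}^T}$ are scaled block projections. It then gets both AVC parameters from this one inequality: summing over a nonnegative eigendecomposition $W = \sum_k w_k z_k z_k^T$ gives $\Exv{y^T \tilde A^T W \tilde A y} \le 2\xi \normf{M}^2 \norm{y}^2 \trace{W}$, and setting $z = y$ gives the radial bound. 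You instead expand $y^T \tilde A W \tilde A y$ down to the entries of $W$ and kill the cross term $2 y_i y_{m+j} W_{i,m+j}$ via the $2\times 2$ PSD minor bound $W_{i,m+j}^2 \le W_{ii} W_{m+j,m+j}$ together with AM-GM, handling the radial bound by a separate (shorter) calculation. The two arguments harvest the same factor of $2$ from the same identity $2ab \le a^2+b^2$, just applied at different stages. The paper's version is a bit more economical (one bilinear bound reused twice, no need to inspect $W$'s entries); yours makes explicit that only positive semidefiniteness of $W$ is used, not commutation with $A$ --- though this is also implicitly true of the paper's proof, since the eigendecomposition step works for any PSD $W$, so neither argument actually relies on the commutation hypothesis in AVC. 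Your side observation that the radial constant could be tightened to $\xi \normf{M}^2$ via $ab \le \tfrac14(a+b)^2$ is correct but unneeded for the lemma as stated.
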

As above, for problems in which the singular values of $M$ do not vary with problem size,
our big-$O$ convergence time bound will still hold.

% \begin{corollary}[Convergence of Alecton for Rectangular Entrywise Sampling]
%   Assume that we run Alecton using rectangular entrywise sampling on a matrix $M$ that
%   satisfies the above entry bound.  Then under the conditions
%   of Theorem \ref{thmAlecton}, the expected stopping time will be
%   \[
%     \Exv{T}
%     \le
%     \frac{
%       4 n \xi \normf{M}^2 (\epsilon + 1)
%     }{
%       \Delta^2 \gamma \epsilon
%     }
%     \log\left(
%       \frac{n}{\epsilon \gamma q}
%     \right).
%   \]
% \end{corollary}

\subsection{Trace Sampling}
\label{sec:phase}
Another common sampling distribution arises from the \emph{matrix sensing}
problem~\cite{jain2012}. In this problem, we are given the value of $v^T A w$
for unit vectors $v$ and $w$ selected uniformly at random.  (This problem
has been handled for the more general complex case in \cite{candes2014phase}
using Wirtinger flow.)
Using a trace sample, we can construct an unbiased sample
\[
  \tilde A = n^2 v v^T A w w^T.
\]
This lets us bound the variance as follows.
\begin{lemma}
  \label{lemmaTraceAVC}
  If $n > 50$, and $v$ and $w$ are sampled uniformly from the unit sphere
  in $\R^n$, then for any positive semidefinite matrix $A$, if we let
  $\tilde A = n^2 v v^T A w w^T$, then
  the distribution of $\tilde A$ satisfies the Alecton variance
  condition with parameters
  $\sigma_a^2 = 16 \normf{A}^2$ and $\sigma_r^2 = 16 \trace{A}^2$.
\end{lemma}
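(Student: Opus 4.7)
\textbf{Proof plan for Lemma \ref{lemmaTraceAVC}.}
The plan is to reduce each of the two AVC inequalities to a degree-four polynomial in the components of $v$ and $w$, and then apply the uniform-sphere fourth-moment identity
\[
  \Exv{v_i v_j v_k v_l}
  =
  \frac{1}{n(n+2)}\left(\delta_{ij}\delta_{kl} + \delta_{ik}\delta_{jl} + \delta_{il}\delta_{jk}\right)
\]
twice (once to $v$, once to $w$), using that $v$ and $w$ are independent. The $n > 50$ hypothesis enters only at the end, to absorb the prefactor $n^2/(n+2)^2 < 1$ together with small slack into the round constant $16$; the heart of the argument is dimension-free.

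For the angular inequality, I would start from the observation that $\tilde A y = n^2 (w^T y)(v^T A w) v$, so that
\[
  y^T \tilde A^T W \tilde A y = n^4 (w^T y)^2 (v^T A w)^2 (v^T W v).
\]
Conditioning on $w$ and applying the moment identity in $v$ with $a = A w$ gives
\[
  \Exvc{(v^T A w)^2 (v^T W v)}{w}
  =
  \frac{1}{n(n+2)}\left(\norm{A w}^2 \trace{W} + 2 w^T A W A w\right).
\]
Multiplying by $(w^T y)^2$ and applying the identity again in $w$ to the two quadratic forms $w^T A^2 w$ and $w^T A W A w$ produces four scalar terms in $\trace{A^2}$, $y^T A^2 y$, $\trace{A W A}$, and $y^T A W A y$. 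Each of these is controlled via the operator-norm bound $y^T A^2 y \le \normf{A}^2 \norm{y}^2$ together with the commutativity hypothesis, which makes $A W A = W A^2$ positive semidefinite with $\trace{A W A} \le \normf{A}^2 \trace{W}$ and $y^T A W A y \le \normf{A}^2 \trace{W} \norm{y}^2$. Collecting, I expect a small constant (around $9$) times $\normf{A}^2 \trace{W} \norm{y}^2$; the prefactor $n^2/(n+2)^2 < 1$, combined with $n > 50$, then leaves the result below $16 \normf{A}^2 \trace{W} \norm{y}^2$.

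For the radial inequality, starting from $y^T \tilde A y = n^2 (v^T y)(w^T y)(v^T A w)$, I compute
\[
  (y^T \tilde A y)^2 = n^4 (v^T y)^2 (w^T y)^2 (v^T A w)^2.
\]
Conditioning on $v$ and applying the moment identity in $w$ gives $\frac{1}{n(n+2)}(\norm{y}^2 \norm{A v}^2 + 2 (y^T A v)^2)$; a second application in $v$ produces terms in $\trace{A^2}$, $y^T A^2 y$, $\norm{A y}^2$, and $(y^T A y)^2$. The Cauchy--Schwarz bound $(y^T A y)^2 \le \norm{y}^2 y^T A^2 y$, the spectral bound $y^T A^2 y \le \normf{A}^2 \norm{y}^2$, and finally $\normf{A}^2 \le \trace{A}^2$ (which holds because $A \succeq 0$ means $\normf{A}^2 = \sum \lambda_i^2 \le (\sum \lambda_i)^2 = \trace{A}^2$) collapse everything to a small constant times $\trace{A}^2 \norm{y}^4$; the prefactor $n^2/(n+2)^2$ with $n > 50$ again leaves us below $16 \trace{A}^2 \norm{y}^4$.

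The main technical obstacle is the combinatorial bookkeeping for the double application of the fourth-moment identity, and making sure the commutativity hypothesis is used correctly in the angular bound so that $A W A$ stays positive semidefinite and its trace is controlled by $\normf{A}^2 \trace{W}$. The condition $n > 50$ plays no structural role; it is purely a convenience for absorbing $n^2/(n+2)^2$ and other small slack into the round constant $16$, and a finer analysis would yield constants noticeably smaller than $16$.
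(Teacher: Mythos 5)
Your proposal is correct, and it reaches the stated bounds by a genuinely different key lemma than the paper. The paper never uses the exact fourth-moment tensor on the sphere: it instead proves the one-sided bound $\Exv{(y^Tv)^4} \le 4n^{-2}$ (Lemma \ref{lemmaSC4M}) by writing the spherical sample as a normalized Gaussian, splitting off the component along $y$, and invoking the fourth moment of the normal together with the second moment of an inverse-chi-squared with $n-1$ degrees of freedom --- this is exactly where the hypothesis $n > 50$ is consumed, to round $3(n-3)^{-2}(1 + 2(n-5)^{-1})$ down to $4n^{-2}$. That scalar bound is then lifted by Cauchy--Schwarz to the matrix statement $\Exv{v v^T W v v^T} \preceq 4 n^{-2} \trace{W}\, I$ (Lemma \ref{lemmaSC4MM}), which is applied twice, once in $v$ and once in $w$, yielding the constant $4 \cdot 4 = 16$ in both the $\sigma_a$ and $\sigma_r$ bounds (the radial bound is obtained with $16\normf{A}^2$ and then relaxed to $16\trace{A}^2$ exactly as you do via $\normf{A}^2 \le \trace{A}^2$ for $A \succeq 0$). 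Your route --- the exact identity $\Exv{v_i v_j v_k v_l} = (n(n+2))^{-1}(\delta_{ij}\delta_{kl}+\delta_{ik}\delta_{jl}+\delta_{il}\delta_{jk})$ applied twice --- is structurally parallel (condition on one vector, integrate the other, repeat) but buys two things: the constant comes out near $9$ rather than $16$, and the hypothesis $n>50$ becomes entirely superfluous rather than merely "convenient," since $n^2/(n+2)^2 < 1$ for every $n$; the paper's route, by contrast, avoids any exact moment combinatorics at the price of the dimension restriction. One small correction to your bookkeeping: commutativity of $W$ with $A$ is not actually needed anywhere in your estimates --- $AWA = A^TWA \succeq 0$ and $\trace{AWA} = \trace{A^2 W} \le \normf{A}^2\trace{W}$, as well as $y^TAWAy \le \trace{W}\,\normf{A}^2\norm{y}^2$, follow from $W \succeq 0$ and the symmetry of $A$ alone (the paper likewise only uses $W \succeq 0$ here), so that hypothesis of AVC is simply not exercised by this sampling model.
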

As above, for problems in which the eigenvalues of $A$ do not vary with problem size,
our big-$O$ convergence time bound will still hold.

% As in the entrywise sampling case, we can now proceed to specialize our main
% theorems.

% \begin{corollary}[Convergence of Alecton for Trace Sampling]
%   Assume that we run Alecton with trace sampling
%   for a rank-$m$ matrix $A$.
%   Then under the conditions
%   of Theorem \ref{thmAlecton}, the expected stopping time will be
%   \[
%     \Exv{T}
%     \le
%     \frac{
%       64 n \normf{A}^2 (\epsilon + 1)
%     }{
%       \Delta^2 \gamma \epsilon
%     }
%     \log\left(
%       \frac{n}{\epsilon \gamma q}
%     \right).
%   \]
% \end{corollary}
% Notice that, as in the entrywise sampling case, if the eigenvalues of $A$ are
% independent of $n$, this corollary implies $O(\epsilon^{-1} n \log n)$ convergence.

In some cases of the trace sampling problem, instead of being given samples
of the form $u^T A v$,
we know $u^T A u$.  In this case, we need to use two independent samples
$u_1^T A u_1$ and
$u_2^T A u_2$, and let $u \propto u_1 + u_2$ and $v \propto u_1 - u_2$
be two unit vectors which
we will use in the above sampling scheme.  Notice that since $u_1$
and $u_2$ are independent and
uniformly distributed, $u$ and $v$ will also be independent and uniformly
distributed (by the spherical
symmetry of the underlying distribution).  Furthermore, we can compute
\[
  u^T A v = (u_1 + u_2)^T A (u_1 - u_2) = u_1^T A u_1 - u_2^T A u_2.
\]
This allows us to use our above trace sampling scheme even with samples
of the form $u^T A u$.

\subsection{Subspace Sampling}
\label{sec:subspace}
Our analysis can handle more complicated sampling schemes.  Consider the following
distribution, which arises in subspace tracking~\cite{grouse}.  Our matrix $A$ is a rank-$r$
projection matrix, and each sample consists of some randomly-selected entries from a
randomly-selected vector in its column space.  Specifically, we are given $Qv$ and $Rv$, where
$v$ is some vector selected uniformly at random from $C(A)$, and $Q$ and $R$ are independent
random diagonal projection matrices with expected value $m n^{-1} I$.  Using this,
we can construct the distribution
\[
  \tilde A = r n^2 m^{-2} Q v v^T R.
\]
This distribution is unbiased
since $\Exv{q v v^T} = A$.  When bounding its second moment, we run into the same coherence
problem as we did in the entrywise case, which motivates us to introduce a coherence
constraint for subspaces.
\begin{definition}
  A subspace of $\R^n$ of dimension $q$ with associated projection matrix $U$ is incoherent
  with parameter $\mu$ if and only if for all standard basis vectors $e_i$,
  \[
    \norm{U e_i}^2 \le \mu r n^{-1}.
  \]
\end{definition}
Using this, we can prove the following facts about the second moment of this distribution.
\begin{lemma}
  \label{lemmaSubspaceAVC}
  The subspace sampling distribution, when sampled from a subspace that is incoherent with
  parameter $\mu$, satisfies the Alecton variance condition with parameters
  \[
    \sigma_a^2 = \sigma_r^2
    =
    r^2 (1 + \mu r m^{-1})^2.
  \]
\end{lemma}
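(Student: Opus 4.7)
Plan. The approach is to exploit the rank-1 structure $\tilde A = r n^2 m^{-2} Q v v^T R$ together with the mutual independence of $Q$, $R$, and $v$. I first unroll both quantities of interest into scalar products:
\[
y^T \tilde A^T W \tilde A y = r^2 n^4 m^{-4} (v^T Q W Q v)(v^T R y)^2,
\qquad
(y^T \tilde A y)^2 = r^2 n^4 m^{-4} (y^T Q v)^2 (v^T R y)^2.
\]
Since $Q$ and $R$ are independent of each other and of $v$, conditioning on $v$ splits each product into two independent factors, so the conditional expectation factors. It therefore suffices to bound a single quadratic form of the type $\Exv{(v^T R y)^2 \mid v}$ together with its matrix-weighted analogue $\Exv{v^T Q W Q v \mid v}$.

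For the moment computation, I use $R_{ii}^2 = R_{ii}$ (so $\Exv{R_{ii}^2} = p$ where $p = m/n$) along with $\Exv{R_{ii} R_{jj}} \le p^2$ for $i \ne j$ (satisfied by either iid Bernoulli entries or uniform sampling without replacement) to get
\[
\Exv{(v^T R y)^2 \mid v} \le p \sum_i v_i^2 y_i^2 + p^2 (v^T y)^2.
\]
The subspace incoherence hypothesis yields the deterministic pointwise bound $v_i^2 = (e_i^T A v)^2 \le \norm{A e_i}^2 \le \mu r/n$ valid for any unit $v \in C(A)$, so $\sum_i v_i^2 y_i^2 \le (\mu r/n) \norm{y}^2$ and $(v^T y)^2 \le \norm{y}^2$, combining to $\Exv{(v^T R y)^2 \mid v} \le (m(m + \mu r)/n^2) \norm{y}^2$. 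The parallel bound $\Exv{v^T Q W Q v \mid v} \le (m(m + \mu r)/n^2) \trace{W}$ follows by the same argument applied to $(Qv)^T W (Qv)$, using $W \succeq 0$ to obtain both $W_{ii} \ge 0$ (giving $\sum_i v_i^2 W_{ii} \le (\mu r/n) \trace{W}$) and $v^T W v \le \trace{W}$.

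Multiplying the two conditional bounds, which are uniform in $v$, and then by the prefactor $r^2 n^4 m^{-4}$ simplifies to $r^2 (1 + \mu r/m)^2 \trace{W} \norm{y}^2$; this gives $\sigma_a^2 = r^2 (1 + \mu r/m)^2$, and $\sigma_r^2$ comes out identically after replacing $\trace{W}$ by $\norm{y}^2$. The main obstacle I expect is the moment calculation for the diagonal projections: the lemma specifies $Q$ and $R$ only through their marginal means, so I would need to either assume an iid Bernoulli model or verify that the natural without-replacement model satisfies $\Exv{Q_{ii} Q_{jj}} \le p^2$ for $i \ne j$. Once that cross-moment bound is in hand, the remainder is a direct application of coherence and nonnegativity of $W$.
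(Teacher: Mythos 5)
Your proposal is correct and follows essentially the same path as the paper's: decompose $\Exv{y^T\tilde A^T W\tilde A y}$ (and its radial analogue) using the mutual independence of $Q$, $R$, and $v$, then bound each resulting scalar quadratic form by expanding the random diagonal projection and invoking the incoherence bound $v_i^2 \le \mu r n^{-1}$ for a unit $v$ in the subspace -- the paper packages exactly this calculation as Lemma~\ref{lemmaIncoherenceMatrix} under an independence assumption on the diagonal entries. One small point worth tightening: the cross terms $v_i y_i v_j y_j$ need not be nonnegative, so applying $\Exv{R_{ii}R_{jj}} \le p^2$ termwise is not immediately licit; instead use exchangeability to write $\sum_{i\ne j}\Exv{R_{ii}R_{jj}} v_i y_i v_j y_j = c\bigl((v^T y)^2 - \sum_i v_i^2 y_i^2\bigr)$ with $c = \Exv{R_{ii}R_{jj}} \in [0, p^2]$, from which the claimed bound follows at once for both the i.i.d.\ and without-replacement models.
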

% Using this, we can specialize our main theorem.
% \begin{corollary}[Convergence of Alecton for Subspace Sampling]
%   Assume that we run Alecton with subspace sampling on a subspace incoherent with parameter $\mu$.
%   Then under the conditions of Theorem \ref{thmAlecton}, the expected stopping time will be
%   \[
%     \Exv{T}
%     \le
%       \frac{
%         4 n r^2 (1 + \mu r m^{-1})^2 (\epsilon + 1)
%       }{
%         \Delta^2 \gamma \epsilon
%       }
%       \log\left(
%         \frac{n}{\epsilon \gamma q}
%       \right).
%   \]
% \end{corollary}

In many cases of subspace sampling, we are given just some entries of $v$ at each timestep
(as opposed to two separate random sets of entries associated with $Q$ and $R$).  That is,
we are given a random diagonal projection matrix $S$, and the product $Sv$.  We can use
this to construct a sample of the above form by randomly splitting the given entries among
$Q$ and $R$ in such a way that $Q = QS$ and $R = RS$, and $Q$ and $R$ are independent.
We can then construct an unbiased sample as
\[
  \tilde A = r n^2 m^{-2} Q S v v^T S R,
\]
which uses only the entries of $v$ that we are given.

\subsection{Noisy Sampling}
\label{sec:noisy}
Since our analysis depends only on a variance bound, it is straightforward to handle
the case
in which the values of our samples themselves are noisy.  Using the additive
property of the variance for independent random variables,
we can show that additive noise only increases the
variance of the sampling distribution by a constant amount proportional to
the variance of the noise.
Similarly, using the multiplicative property of the variance for independent
random variables, multiplicative noise only multiplies
the variance of the sampling distribution by a constant factor proportional
to the variance of the noise.  In either case, we can show that
the noisy sampling distribution satisfies AVC.

\subsection{Extension to Higher Ranks}
\label{ssHigherRank}
It is possible to use multiple iterations of the rank-$1$ version of Alecton to
recover additional
eigenvalue/eigenvector pairs of the data matrix $A$ one-at-a-time.
This is a standard technique for using power iteration algorithms
to recover multiple eigenvalues.
Sometimes, this may be preferable to using a single higher-rank invocation of
Alecton (for example, we may not know a priori how many eigenvectors we want).
We outline this technique as Algorithm \ref{algAlectonOAAT}.
\begin{algorithm}[h]
  \caption{Alecton One-at-a-time}
  \begin{algorithmic}
  \label{algAlectonOAAT}
    \REQUIRE A sampling distribution $\mathcal{A}$
    \STATE $\mathcal{A}_1 \rightarrow \mathcal{A}$
    \FOR{$i = 1$ \TO $p$}
    \STATE \(\triangleright\) Run rank-1 Alecton to produce output $y_i$.
    \STATE $y_i \rightarrow \mathrm{Alecton}_{p=1}(\mathcal{A}_i)$
    \STATE Generate sampling distribution $\mathcal{A}_{i+1}$ such that,
      if $\tilde A'$ is sampled from $\mathcal{A}_{i+1}$ and $\tilde A$ is
      sampled from $\mathcal{A}_i$, $\Exv{\tilde A'} = \Exv{\tilde A} - y_i y_i^T$.
    \ENDFOR
    \RETURN $\sum_{i=1}^p y_i y_i^T$
  \end{algorithmic}
\end{algorithm}
This strategy allows us to recover
the largest $p$ eigenvectors of $A$ using $p$ executions of Alecton.
If the eigenvalues of the matrix are independent of $n$ and $p$, we will be able
to accomplish this in $O(\epsilon^{-1} p n \log n)$ total steps.

\section{Experiments}

\begin{figure*}[t]
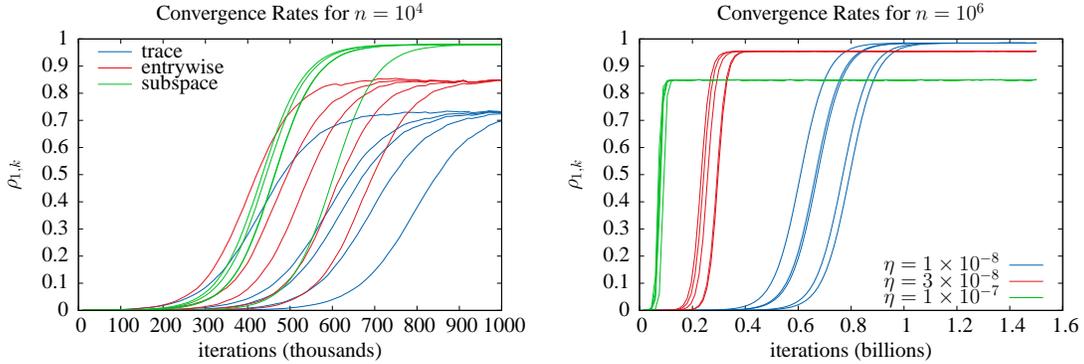
%
\centering
\subfigure[Angular convergence of three distributions on a synthetic dataset with $\eta = 10^{-5}$.]{%
\centering%
\resizebox{!}{.30\textwidth}{%
\Large \input{plot10k.tex}%
}%
\label{plot10k}%
}\quad%
\subfigure[Angular convergence of entrywise sampling on a large synthetic dataset for different step sizes.]{%
\centering%
\resizebox{!}{.30\textwidth}{%
\Large \input{plot1m.tex}%
}%
\label{plot1m}}
\label{plotConvergence} 
\caption{Convergence occurs in $O(n \log n)$ steps.}
\end{figure*}

\begin{figure}[h]
  \centering%
  \resizebox{!}{.30\textwidth}{
  \Large \input{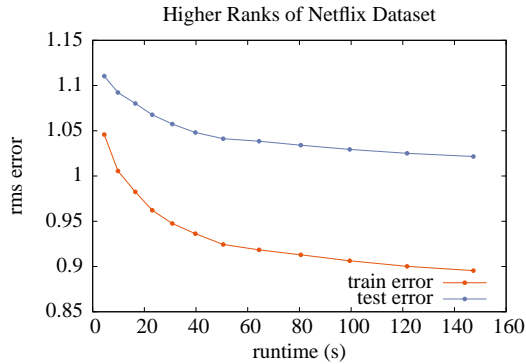}
  }
  \caption{RMS errors over Netflix dataset~\cite{funk} for higher-rank recovery.  Each
  point represents an additional recovered eigenvector found with
  Alecton One-at-a-time.}
  \label{plotNetflixMulti}
\end{figure}

We experimentally verify our main claim, that Alecton does converge quickly
for practical datasets.

All experiments were run on a machine with a single twelve-core
socket (Intel Xeon E5-2697, 2.70GHz), and 256 GB of shared memory.
All were written in C++, excepting the Netflix Prize problem
experiment, which was written in Julia.
No data was collected for the radial phase of
Alecton, since the performance of averaging
is already well understood.

The first experiments were run on randomly-generated rank-$10$ data
matrices $A \in \R^{n \times n}$.  Each was generated by selecting a random
orthogonal matrix $U \in \R^{n \times n}$, then independently selecting a
diagonal matrix $\Lambda$ with $10$ positive nonzero eigenvalues,
and constructing  $A = U \Lambda U'$.  Figure \ref{plot10k} illustrates
the convergence of Alecton with $p = q = 1$
using three sampling distributions on 
datasets with $n = 10^4$.  We ran Alecton starting from five
random initial values; the different plotted trajectories illustrate
how convergence time can depend on the initial value.

Figure \ref{plot1m} illustrates the performance of Alecton ($p = q = 1$ again)
on a larger dataset with $n = 10^6$ as the step size parameter
$\eta$ is varied.  As we would expect, a smaller value of
$\eta$ yields slower, but more accurate convergence.  Also notice
that the smaller the value of $\eta$, the more the initial value
seems to affect convergence time.

Figure \ref{plotNetflixMulti} demonstrates convergence results on real
data from the Netflix Prize problem.  This problem involves
recovering a matrix with 480,189 columns and 17,770 rows from a training
dataset containing 110,198,805 revealed entries.
We used the rectangular entrywise distribution described above,
then ran Alecton with $\eta = 10^{-12}$ and $p = q = 1$ for ten million iterations to
recover the most significant singular vector.
Next, we used Algorithm \ref{algAlectonOAAT}
to recover additional singular vectors of the matrix, up
to a maximum of $p = 12$.  The absolute runtime and RMS errors after the
recovery of each subsequent eigenvector
are plotted in Figure \ref{plotNetflixMulti}.  This plot illustrates that
the runtime of the one-at-a-time algorithm does not
increase disastrously as the number of recovered eigenvectors expands.

\subsection{Discussion}
The Hogwild! algorithm~\cite{hogwild} is a parallel, lock-free version
of stochastic gradient descent that has been shown to perform similarly
to sequential SGD on convex problems, while allowing for a good parallel speedup.
It is an open question whether a Hogwild! version
of Alecton for non-convex problems converges with a good rate, but we
are optimistic that it will.

\section{Conclusion}

This paper exhibited Alecton, a stochastic gradient descent
algorithm applied to
a non-convex low-rank factorized problem; it is similar to the
algorithms used in practice to solve a wide variety of problems.
We prove that Alecton converges globally, and provide a rate
of convergence.  We do not require any special initialization step
but rather initialize randomly.  Furthermore,
our result depends only on the variance of the samples, and therefore
holds under broad sampling conditions that include both matrix
completion and matrix sensing, and is also able to take noisy samples into
account.  We show these results using a martingale-based technique that is
novel in the space of non-convex optimization, and we are optimistic
that this technique can be applied to other problems in the future.

\ifdefined\isaccepted

\section*{Acknowledgments} 

Thanks to Ben Recht, Mahdi Soltanolkotabi, Joel Tropp, Kelvin Gu,
and Madeleine Udell for helpful
conversations. Thanks also to Ben Recht and Laura Waller for datasets.

The authors acknowledge the support of the Defense Advanced
Research Projects Agency (DARPA) XDATA Program under No. FA8750-12-2-0335 and
DEFT Program under No. FA8750-13-2-0039, DARPA’s MEMEX program, the National
Science Foundation (NSF) CAREER Award under No. IIS-1353606 and EarthCube Award
under No. ACI-1343760, the Sloan Research Fellowship, 
the Office of Naval Research (ONR) under awards No.
N000141210041 and No. N000141310129, the Moore
Foundation, American Family Insurance, Google, and Toshiba.  Additionally,
the authors acknowledge: 
DARPA Contract-Air Force, Deliteful DeepDive: Domain-Specific Indexing
and Search for the Web, FA8750-14-2-0240;
Army Contract AHPCRC W911NF-07-2-0027-1;
NSF Grant, BIGDATA: Mid-Scale: DA: Collaborative Research:
Genomes Galore - Core Techniques, Libraries, and Domain Specific
Languages for High-Throughput DNA Sequencing, IIS-1247701;
NSF Grant, SHF: Large: Domain Specific Language Infrastructure
for Biological Simulation Software, CCF-1111943;
Dept. of Energy- Pacific Northwest National Lab (PNNL) - Integrated
Compiler and Runtime Autotuning Infrastructure for Power, Energy and
Resilience-Subcontract 108845;
NSF Grant – EAGER- XPS:DSD:Synthesizing Domain Specific Systems-CCF-1337375;
Stanford PPL affiliates program, Pervasive Parallelism Lab:  Oracle,
NVIDIA, Huawei, SAP Labs.
The authors also acknowledge additional support from Oracle.
Any opinions,
findings, and conclusions or recommendations expressed in this material are
those of the authors and do not necessarily reflect the views of DARPA, AFRL,
NSF, ONR, or the U.S. government.

\else

\fi

% \subsubsection*{References} 

% References follow the acknowledgments. Use unnumbered third level heading for
% the references. Any choice of citation style is acceptable as long as you are
% consistent. It is permissible to reduce the font size to `small' (9-point) 
% when listing the references. {\bf Remember that this year you can use
% a ninth page as long as it contains \emph{only} cited references.}

% \bibliography{references}
% \bibliographystyle{icml2015}

\bibliographystyle{plainnat} 
\bibliography{references}

\begin{thebibliography}{37}
\providecommand{\natexlab}[1]{#1}
\providecommand{\url}[1]{\texttt{#1}}
\expandafter\ifx\csname urlstyle\endcsname\relax
  \providecommand{\doi}[1]{doi: #1}\else
  \providecommand{\doi}{doi: \begingroup \urlstyle{rm}\Url}\fi

\bibitem[Absil et~al.(2008)Absil, Mahony, and Sepulchre]{absil}
P.-A. Absil, R.~Mahony, and R.~Sepulchre.
\newblock \emph{Optimization Algorithms on Matrix Manifolds}.
\newblock Princeton University Press, Princeton, NJ, 2008.
\newblock ISBN 978-0-691-13298-3.

\bibitem[Agarwal et~al.(2011)Agarwal, Chapelle, Dud{\'{\i}}k, and
  Langford]{VowpalWabbit}
Alekh Agarwal, Olivier Chapelle, Miroslav Dud{\'{\i}}k, and John Langford.
\newblock A reliable effective terascale linear learning system.
\newblock \emph{CoRR}, abs/1110.4198, 2011.

\bibitem[Arora et~al.(2012)Arora, Cotter, Livescu, and Srebro]{Arora2012}
R.~Arora, A.~Cotter, K.~Livescu, and N.~Srebro.
\newblock Stochastic optimization for pca and pls.
\newblock In \emph{Communication, Control, and Computing (Allerton), 2012 50th
  Annual Allerton Conference on}, pages 861--868, Oct 2012.

\bibitem[Arora et~al.(2013)Arora, Cotter, and Srebro]{Arora2013}
Raman Arora, Andy Cotter, and Nati Srebro.
\newblock Stochastic optimization of pca with capped msg.
\newblock In C.j.c. Burges, L.~Bottou, M.~Welling, Z.~Ghahramani, and K.q.
  Weinberger, editors, \emph{Advances in Neural Information Processing Systems
  26}, pages 1815--1823. 2013.

\bibitem[Balsubramani et~al.(2013)Balsubramani, Dasgupta, and
  Freund]{Balsubramani2013}
Akshay Balsubramani, Sanjoy Dasgupta, and Yoav Freund.
\newblock The fast convergence of incremental pca.
\newblock In \emph{NIPS}, pages 3174--3182, 2013.

\bibitem[Balzano et~al.(2010)Balzano, Nowak, and Recht]{grouse}
Laura Balzano, Robert Nowak, and Benjamin Recht.
\newblock Online identification and tracking of subspaces from highly
  incomplete information.
\newblock In \emph{Communication, Control, and Computing (Allerton), 2010 48th
  Annual Allerton Conference on}, pages 704--711. IEEE, 2010.

\bibitem[Bottou(2010)]{Bottou}
Léon Bottou.
\newblock Large-scale machine learning with stochastic gradient descent.
\newblock In \emph{Proceedings of COMPSTAT'2010}, pages 177--186. 2010.

\bibitem[Bottou and Bousquet(2008)]{Bottou2008}
Léon Bottou and Olivier Bousquet.
\newblock The tradeoffs of large scale learning.
\newblock In \emph{IN: ADVANCES IN NEURAL INFORMATION PROCESSING SYSTEMS 20},
  pages 161--168, 2008.

\bibitem[Burer and Monteiro(2003)]{burer2003}
Samuel Burer and Renato~DC Monteiro.
\newblock A nonlinear programming algorithm for solving semidefinite programs
  via low-rank factorization.
\newblock \emph{Mathematical Programming}, 95\penalty0 (2):\penalty0 329--357,
  2003.

\bibitem[Burer and Monteiro(2005)]{burer2005}
Samuel Burer and Renato~DC Monteiro.
\newblock Local minima and convergence in low-rank semidefinite programming.
\newblock \emph{Mathematical Programming}, 103\penalty0 (3):\penalty0 427--444,
  2005.

\bibitem[Cand{\`e}s et~al.(2014)Cand{\`e}s, Li, and
  Soltanolkotabi]{candes2014phase}
Emmanuel Cand{\`e}s, Xiaodong Li, and Mahdi Soltanolkotabi.
\newblock Phase retrieval via wirtinger flow: Theory and algorithms.
\newblock \emph{arXiv preprint arXiv:1407.1065}, 2014.

\bibitem[Cand{\`e}s and Recht(2009)]{candes2008}
Emmanuel~J. Cand{\`e}s and Benjamin Recht.
\newblock Exact matrix completion via convex optimization.
\newblock \emph{FoCM}, 9\penalty0 (6):\penalty0 717--772, 2009.
\newblock ISSN 1615-3375.

\bibitem[Cand{\`e}s and Li(2014)]{phaselift2014}
EmmanuelJ. Cand{\`e}s and Xiaodong Li.
\newblock Solving quadratic equations via phaselift when there are about as
  many equations as unknowns.
\newblock \emph{FoCM}, 14\penalty0 (5):\penalty0 1017--1026, 2014.

\bibitem[Chen et~al.(2011)Chen, He, and Yuan]{chen2011}
Caihua Chen, Bingsheng He, and Xiaoming Yuan.
\newblock Matrix completion via an alternating direction method.
\newblock \emph{IMAJNA}, 2011.

\bibitem[do~Carmo(1992)]{do1992riemannian}
M.P. do~Carmo.
\newblock \emph{Riemannian Geometry}.
\newblock Mathematics (Birkh{\"a}user) theory. Birkh{\"a}user Boston, 1992.
\newblock ISBN 9780817634902.

\bibitem[Duchi et~al.(2011)Duchi, Hazan, and Singer]{AdaGrad}
John Duchi, Elad Hazan, and Yoram Singer.
\newblock Adaptive subgradient methods for online learning and stochastic
  optimization.
\newblock \emph{J. Mach. Learn. Res.}, 12:\penalty0 2121--2159, July 2011.

\bibitem[Fleming and Harrington(1991)]{fleming1991}
Thomas~R Fleming and David~P Harrington.
\newblock Counting processes and survival analysis.
\newblock volume 169, pages 56--57. John Wiley \& Sons, 1991.

\bibitem[Funk(2006)]{funk}
Simon Funk.
\newblock {Netflix Update: Try this at Home}.
\newblock 2006.

\bibitem[Goemans and Williamson(1995)]{goemans1995}
Michel~X. Goemans and David~P. Williamson.
\newblock Improved approximation algorithms for maximum cut and satisfiability
  problems using semidefinite programming.
\newblock \emph{J. ACM}, 42\penalty0 (6):\penalty0 1115--1145, November 1995.

\bibitem[Gupta et~al.(2013)Gupta, Goel, Lin, Sharma, Wang, and
  Zadeh]{Gupta:2013}
Pankaj Gupta, Ashish Goel, Jimmy Lin, Aneesh Sharma, Dong Wang, and Reza Zadeh.
\newblock Wtf: The who to follow service at twitter.
\newblock WWW '13, pages 505--514, 2013.

\bibitem[Hardt and Price(2014)]{Hardt2014}
Moritz Hardt and Eric Price.
\newblock The noisy power method: A meta algorithm with applications.
\newblock In Z.~Ghahramani, M.~Welling, C.~Cortes, N.d. Lawrence, and K.q.
  Weinberger, editors, \emph{Advances in Neural Information Processing Systems
  27}, pages 2861--2869. Curran Associates, Inc., 2014.

\bibitem[Homer and Peinado(1997)]{homer1997}
Steven Homer and Marcus Peinado.
\newblock Design and performance of parallel and distributed approximation
  algorithms for maxcut.
\newblock \emph{J. Parallel Distrib. Comput.}, 46\penalty0 (1):\penalty0
  48--61, October 1997.
\newblock ISSN 0743-7315.

\bibitem[{Horstmeyer} et~al.(2014){Horstmeyer}, {Chen}, {Ou}, {Ames}, {Tropp},
  and {Yang}]{Tropp2014}
R.~{Horstmeyer}, R.~Y. {Chen}, X.~{Ou}, B.~{Ames}, J.~A. {Tropp}, and
  C.~{Yang}.
\newblock {Solving ptychography with a convex relaxation}.
\newblock \emph{ArXiv e-prints}, dec 2014.

\bibitem[Hu et~al.(2009)Hu, Kwok, and Pan]{Hu09acceleratedgradient}
Chonghai Hu, James~T. Kwok, and Weike Pan.
\newblock Accelerated gradient methods for stochastic optimization and online
  learning.
\newblock In \emph{Advances in Neural Information Processing Systems 22}, pages
  781--789, 2009.

\bibitem[Jain et~al.(2013)Jain, Netrapalli, and Sanghavi]{jain2012}
Prateek Jain, Praneeth Netrapalli, and Sujay Sanghavi.
\newblock Low-rank matrix completion using alternating minimization.
\newblock In \emph{Proceedings of the Forty-fifth Annual ACM STOC}, pages
  665--674. ACM, 2013.

\bibitem[John~Goes and Lerman(2014)]{Goes2014}
Raman~Arora John~Goes, Teng~Zhang and Gilad Lerman.
\newblock Robust stochastic principal component analysis.
\newblock In \emph{Proceedings of the 17th International Conference on
  Artificial Intelligence and Statistics}, pages 266--274, 2014.

\bibitem[Journ{\'e}e et~al.(2010)Journ{\'e}e, Bach, Absil, and
  Sepulchre]{journee}
M.~Journ{\'e}e, F.~Bach, P.-A. Absil, and R.~Sepulchre.
\newblock Low-rank optimization on the cone of positive semidefinite matrices.
\newblock \emph{SIAM J. on Optimization}, 20\penalty0 (5):\penalty0 2327--2351,
  May 2010.

\bibitem[Keshavan et~al.(2010)Keshavan, Montanari, and Oh]{keshavan2010}
R.H. Keshavan, A.~Montanari, and Sewoong Oh.
\newblock Matrix completion from a few entries.
\newblock \emph{Information Theory, IEEE Transactions on}, 56\penalty0
  (6):\penalty0 2980--2998, June 2010.

\bibitem[Mishra et~al.(2013)Mishra, Meyer, Bach, and Sepulchre]{mishra2013low}
Bamdev Mishra, Gilles Meyer, Francis Bach, and Rodolphe Sepulchre.
\newblock Low-rank optimization with trace norm penalty.
\newblock \emph{SIAM Journal on Optimization}, 23\penalty0 (4):\penalty0
  2124--2149, 2013.

\bibitem[Niu et~al.(2011)Niu, Recht, R{\'e}, and Wright]{hogwild}
Feng Niu, Benjamin Recht, Christopher R{\'e}, and Stephen~J. Wright.
\newblock Hogwild: A lock-free approach to parallelizing stochastic gradient
  descent.
\newblock In \emph{In NIPS}, 2011.

\bibitem[Oja(1985)]{Oja1985}
Erkki Oja.
\newblock On stochastic approximation of the eigenvectors and eigenvalues of
  the expectation of a random matrix.
\newblock \emph{Journal of Mathematical Analysis and Applications}, 106, 1985.

\bibitem[Oscar~Boykin(2013-2014)]{Boykin2014}
Ian O'Connell Jimmy~Lin Oscar~Boykin, Sam~Ritchie.
\newblock Summingbird: A framework for integrating batch and online mapreduce
  computations.
\newblock In \emph{Proceedings of the VLDB Endowment}, volume~7, pages
  1441--1451, 2013-2014.

\bibitem[Recht and R{\'e}(2013)]{jellyfish}
Benjamin Recht and Christopher R{\'e}.
\newblock Parallel stochastic gradient algorithms for large-scale matrix
  completion.
\newblock \emph{Mathematical Programming Computation}, 5\penalty0 (2):\penalty0
  201--226, 2013.
\newblock ISSN 1867-2949.

\bibitem[Shamir(2011)]{Shamir2011}
Ohad Shamir.
\newblock Making gradient descent optimal for strongly convex stochastic
  optimization.
\newblock \emph{CoRR}, abs/1109.5647, 2011.

\bibitem[Shamir(2014)]{Shamir2014}
Ohad Shamir.
\newblock A stochastic {PCA} algorithm with an exponential convergence rate.
\newblock \emph{CoRR}, abs/1409.2848, 2014.

\bibitem[Teflioudi et~al.(2012)Teflioudi, Makari, and Gemulla]{teflioudi2013}
Christina Teflioudi, Faraz Makari, and Rainer Gemulla.
\newblock Distributed matrix completion.
\newblock \emph{2013 IEEE 13th ICDM}, 0:\penalty0 655--664, 2012.
\newblock ISSN 1550-4786.

\bibitem[Zou et~al.(2004)Zou, Hastie, and Tibshirani]{zou2004}
Hui Zou, Trevor Hastie, and Robert Tibshirani.
\newblock Sparse principal component analysis.
\newblock \emph{J. Comp. Graph. Stat.}, 15:\penalty0 2006, 2004.

\end{thebibliography}


\begin{thebibliography}{4}
\providecommand{\natexlab}[1]{#1}
\providecommand{\url}[1]{\texttt{#1}}
\expandafter\ifx\csname urlstyle\endcsname\relax
  \providecommand{\doi}[1]{doi: #1}\else
  \providecommand{\doi}{doi: \begingroup \urlstyle{rm}\Url}\fi

\bibitem[Cand{\`e}s et~al.(2014)Cand{\`e}s, Li, and
  Soltanolkotabi]{candes2014phase}
Emmanuel Cand{\`e}s, Xiaodong Li, and Mahdi Soltanolkotabi.
\newblock Phase retrieval via wirtinger flow: Theory and algorithms.
\newblock \emph{arXiv preprint arXiv:1407.1065}, 2014.

\bibitem[Cand{\`e}s and Li(2014)]{phaselift2014}
EmmanuelJ. Cand{\`e}s and Xiaodong Li.
\newblock Solving quadratic equations via phaselift when there are about as
  many equations as unknowns.
\newblock \emph{FoCM}, 14\penalty0 (5):\penalty0 1017--1026, 2014.

\bibitem[Keshavan et~al.(2010)Keshavan, Montanari, and Oh]{keshavan2010}
R.H. Keshavan, A.~Montanari, and Sewoong Oh.
\newblock Matrix completion from a few entries.
\newblock \emph{Information Theory, IEEE Transactions on}, 56\penalty0
  (6):\penalty0 2980--2998, June 2010.

\bibitem[Netrapalli et~al.(2013)Netrapalli, Jain, and Sanghavi]{netrapalli2013}
Praneeth Netrapalli, Prateek Jain, and Sujay Sanghavi.
\newblock Phase retrieval using alternating minimization.
\newblock In \emph{Advances in Neural Information Processing Systems 26}, pages
  2796--2804. 2013.

\end{thebibliography}

% \small{
% [1] Alexander, J.A. \& Mozer, M.C. (1995) Template-based algorithms
% for connectionist rule extraction. In G. Tesauro, D. S. Touretzky
% and T.K. Leen (eds.), {\it Advances in Neural Information Processing
% Systems 7}, pp. 609-616. Cambridge, MA: MIT Press.

% [2] Bower, J.M. \& Beeman, D. (1995) {\it The Book of GENESIS: Exploring
% Realistic Neural Models with the GEneral NEural SImulation System.}
% New York: TELOS/Springer-Verlag.

% [3] Hasselmo, M.E., Schnell, E. \& Barkai, E. (1995) Dynamics of learning
% and recall at excitatory recurrent synapses and cholinergic modulation
% in rat hippocampal region CA3. {\it Journal of Neuroscience}
% {\bf 15}(7):5249-5262.
% }

% \clearpage

\appendix

\section{Negative Results}
\label{ssCounterexamples}

\paragraph*{Divergence Example}
Here, we observe what happens when we choose a constant step size for stochastic
gradient descent for quartic objective functions.  Consider the simple optimization problem
of minimizing
\[
  f(x) = \frac{1}{4} x^4.
\]
This function will have gradient descent update rule
\[
  x_{k+1} = x_k - \alpha_k x_k^3
    = \left(1 - \alpha_k x_k^2 \right) x_k.
\]
We now prove that, for any reasonable step size rule chosen independently of $x_k$,
there is some initial condition such that this iteration diverges to infinity.
\begin{proposition}
  Assume that we iterate using the above rule, for some choice of $\alpha_k$ that is not
  super-exponentially decreasing; that is, for some $C > 1$ and some $\alpha > 0$,
  $\alpha_k \ge \alpha C^{-2k}$ for all $k$.  Then, if
  $x_0^2 \ge \alpha^{-1} (C + 1)$, for all $k$
  \[
    x_k^2 > \alpha^{-1} C^{2k} (C + 1).
  \]
\end{proposition}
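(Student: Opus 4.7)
The plan is a straightforward induction on $k$, tracking the quantity $x_k^2$ rather than $x_k$ itself (since the update can flip signs, only the magnitude matters).

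\paragraph{Base case.}
For $k=0$, the hypothesis $x_0^2 \ge \alpha^{-1}(C+1) = \alpha^{-1}C^{0}(C+1)$ is exactly the claim (with $\ge$; I would note that the statement allows me to strengthen this to $>$ after one step, or simply observe that the induction works with the convention that the inequality is strict for $k\ge 1$).

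\paragraph{Inductive step.}
Suppose $x_k^2 > \alpha^{-1}C^{2k}(C+1)$. Using the lower bound $\alpha_k \ge \alpha C^{-2k}$ hypothesized on the step size, I compute
\[
  \alpha_k x_k^2 \;\ge\; \alpha C^{-2k} \cdot \alpha^{-1} C^{2k}(C+1) \;=\; C+1.
\]
Therefore $1 - \alpha_k x_k^2 \le -C$, so in particular $(1 - \alpha_k x_k^2)^2 \ge C^2$. Substituting into the squared update rule $x_{k+1}^2 = (1-\alpha_k x_k^2)^2 x_k^2$ gives
\[
  x_{k+1}^2 \;\ge\; C^2 \cdot x_k^2 \;>\; C^2 \cdot \alpha^{-1} C^{2k}(C+1) \;=\; \alpha^{-1}C^{2(k+1)}(C+1),
\]
which closes the induction.

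\paragraph{Main obstacle.}
There is no real technical obstacle: the key observation is that once $\alpha_k x_k^2 \ge C+1$, the multiplicative factor $(1-\alpha_k x_k^2)$ has magnitude at least $C$, which is exactly the amount of overshoot needed to more than undo the $C^{-2k}$ decay in the step size and still produce net doubly-exponential growth of $x_k^2$. The only subtlety worth flagging in the write-up is that $x_k$ itself oscillates in sign (since $1-\alpha_k x_k^2$ is negative whenever the iterate is large), which is why the induction is phrased on $x_k^2$; this is also why the non-super-exponential decay condition on $\alpha_k$ is stated as a two-sided bound on the exponent ($C^{-2k}$) matching the growth exponent on $x_k^2$.
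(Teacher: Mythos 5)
Your proof is correct and follows essentially the same inductive argument as the paper: bound $\alpha_k x_k^2 \ge C+1$ using the step-size lower bound and the inductive hypothesis, deduce $(1 - \alpha_k x_k^2)^2 \ge C^2$, and conclude $x_{k+1}^2 \ge C^2 x_k^2$. Your remark about the $\ge$ versus $>$ mismatch at $k=0$ is a legitimate (if minor) observation that the paper glosses over, but otherwise the two proofs are the same.
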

\begin{proof}
  We will prove this by induction.  The base case follows directly from the assumption,
  while under the inductive case, if the proposition is true for $k$, then
  \[
    \alpha_k x_k^2 \ge \alpha C^{-2k} \alpha^{-1} C^{2k} (C + 1)
      = C + 1.
  \]
  Therefore,
  \begin{dmath*}
    x_{k+1}^2 = \left(\alpha_k x_k^2 - 1 \right)^2 x_k^2
      \ge C^2 x_k^2
      \ge C^2 \alpha^{-1} C^{2k} (C + 1)
      = \alpha^{-1} C^{2(k + 1)} (C + 1).
  \end{dmath*}
  This proves the statement.
\end{proof}
This proof shows that, for some choice of $x_0$, $x_k$ will diverge to infinity exponentially
quickly.  Furthermore, no reasonable choice of $\alpha_k$  will be able to halt this increase
for all initial conditions.  We can see the effect of this in stochastic gradient
descent as well, where there is always some probability that, due to an unfortunate series
of gradient steps, we will enter the zone in which divergence occurs.  On the other hand,
if we chose step size $\alpha_k = \gamma_k x_k^{-2}$, for some $0 < \gamma_k < 2$, then
\[
  x_{k+1} = \left(1 - \gamma_k \right) x_k,
\]
which converges for all starting values of $x_k$.  This simple example is what motivates
us to take $\norm{Y_k}$ into account when choosing the step size for Alecton.

\paragraph{Global Convergence Counterexample}
We now exhibit a particular problem for which SGD on a low-rank factorization
doesn't converge to the global optimum for a particular starting point.
Let matrix $A \in \R^{2 \times 2}$ be the diagonal matrix with diagonal entries
$4$ and $1$.  Further, let's assume that we are trying to minimize the
expected value of the decomposed rank-1 objective function
\[
  \tilde f(y) = \normf{\tilde A - y y^T} = \norm{y}^4 - 2 y^T \tilde A y + \normf{\tilde A}^2.
\]
If our stochastic samples satisfy $\tilde A = A$ (i.e. we use a 
perfect sampler), then the SGD update rule is
\[
  y_{k+1} = y_k - \alpha_k \nabla \tilde f(y_k)
    = y_k - 4 \alpha_k \left(y_k \norm{y_k}^2 - A y_k \right).
\]
Now, we know that $e_1$ is the most significant eigenvector of $A$, and that
$y = 2 e_1$ is the global solution to the problem.  However,
\begin{align*}
  e_1^T y_{k+1}
  &=
  e_1^T y_k - 4 \alpha_k \left(e_1^T y_k \norm{y_k}^2 - e_1^T A y_k \right) \\
  &=
  \left(1 - 4 \alpha_k \left(\norm{y_k}^2 - 4 \right)\right) e_1^T y_k
\end{align*}.
This implies that if $e_1^T y_0 = 0$, then $e_1^T y_k = 0$ for all $k$, which means
that convergence to the global optimum cannot occur.  This illustrates that
global convergence does not occur for all manifold optimization problems
using a low-rank factorization and for all starting points.

\paragraph{Constraints Counterexample}
We might think that our results can be generalized to give $O(n \log n)$
convergence of low-rank factorized problems with arbitrary constraints.
Here, we show that
this will not work for all problems by encoding an NP-complete problem as a
constrained low-rank optimization problem.

For any graph with node set $N$ and edge set $E$, the MAXCUT problem
on the graph requires us to solve
\[
  \begin{array}{ll}
    \mbox{minimize} & \sum_{(i,j) \in E} y_i y_j \\
    \mbox{subject to} & y_i \in \{-1, 1\}.
  \end{array}
\]
Equivalently, if we let $A$ denote the edge-matrix of the graph, we can represent this as
a matrix problem~\cite{homer1997,goemans1995}
\[
  \begin{array}{ll}
    \mbox{minimize} & y^T A y \\
    \mbox{subject to} & y_i \in \{-1, 1\}.
  \end{array}
\]
We relax this problem to
\[
  \begin{array}{ll}
    \mbox{minimize} & y^T A y \\
    \mbox{subject to} & -1 \le y_i \le 1.
  \end{array}
\]
Since the diagonal of $A$ is zero, if we fix all but one of the
entries of $y$, the objective function will have an affine dependence on
that entry.  In particular, this means that a global minimum of the problem
must occur on the boundary where $y_i \in \{-1, 1\}$, which implies
that this problem has the same global solution as the original MAXCUT
problem.  Furthermore, for sufficiently large values of $\sigma$,
the problem
\[
  \begin{array}{ll}
    \mbox{minimize} & \norm{y}^4 + 2 \sigma y^T A y + \sigma^2 \normf{A}^2 \\
    \mbox{subject to} & -1 \le y_i \le 1
  \end{array}
\]
will also have the same solution.  But, this problem is in the same form
as a low-rank factorization of
\[
  \begin{array}{ll}
    \mbox{minimize} & \normf{X + \sigma A}^2 \\
    \mbox{subject to} & X_{ii} \le 1, X \succeq 0, \rank{X} = 1
  \end{array}
\]
where $X = y y^T$.  Since MAXCUT is NP-complete, it can't possibly be the
case that SGD applied to this low-rank factorized problem converges quickly
to the global optimum, because that would imply an efficient solution to this
NP-complete problem.  This suggests that care will be needed when analyzing
problems with constraints, in order to exclude these sorts of cases.

\section{Comparison with Other Methods}
\label{ssComparison}
There are several other algorithms that solve similar matrix recover problems
in the literature.  In Table \ref{tableComparison}, we list some other algorithms, and their convergence
rates, in terms of both number of samples required (sampling complexity)
and number of iterations performed (computational complexity).  For this table,
the data is assumed to be of dimension $n$, and the rank (where applicable) is
assumed to be $p$.  (In order to save space, factors of $\log \log \epsilon^{-1}$
have been omitted from some formulas.)
\begin{figure*}[t]
\label{tableComparison}
\begin{center}
  {\tabulinesep=1.2mm
  \begin{tabu}{ | l | c | >{\centering}m{2.5cm} | >{\centering}m{2.5cm} | }
    \hline
    Algorithm & Sampling Scheme & \multicolumn{2}{c |}{Complexity} \\
    & & Sampling & Computational \\ \hline
    Alecton & Any & \multicolumn{2}{c |}{$O(\epsilon^{-1} p^3 n \log n)$} \\ \hline
    SVD & Various & $o(p n)$ & $O(n^3)$ \\ \hline
    % Power Iteration & Various & $o(m n)$ & $O(n^2 \log(\epsilon^{-1}))$ \\ \hline
    Spectral Matrix Completion~\citesec{keshavan2010} & Elementwise & $o(p n)$ & $O(p^2 n \log n)$ \\ \hline
    PhaseLift~\citesec{phaselift2014} & Phase Retrieval & $o(n)$ & $O(\epsilon^{-1} n^3)$ \\ \hline
    Alternating Minimization~\citesec{netrapalli2013} & Phase Retrieval & $o(n \log (\epsilon^{-1}))$ & $O(n^2 \log^2(\epsilon^{-1}))$ \\ \hline
    Wirtinger Flow~\citesec{candes2014phase} & Phase Retrieval & $o(n \log^2 n)$ & $O(p n \log(\epsilon^{-1}))$ \\
    \hline
  \end{tabu}}
\end{center}
\end{figure*}

\section{Proofs of Main Results}
\label{ssaProofs}
In this appendix, we provide rigorous definitions and 
detail the proof outlined in Section \ref{ssMartingaleTechnique}.

\subsection{Definitions}
\citet{fleming1991} provide the following definitions of filtration and martingale.
We state the definitions adapted to the discrete-time case.
\begin{definition}[Filtration]
  Given a measurable probability space $(\Omega, \F)$, a \emph{filtration} is
  a sequence of sub-$\sigma$-algebras $\left\{ \F_t \right\}$ for $t \ge 0$,
  such that for all $s \le t$,
  \[
    \F_s \subset \mathcal{F}_t.
  \]
\end{definition}
That is, if an event $A$ is in $\F_s$, and $t \ge s$, then $A$ is also in $\F_t$.
This definition encodes the monotonic increase in available information over time.
\begin{definition}[Martingale]
  Let $\left\{X_t\right\}$ be a stochastic process and $\left\{ \F_t \right\}$ be
  a filtration over the same probability space.  Then $X$ is called a
  \emph{martingale} with respect to the filtration if for every $t$, $X_t$
  is $\F_t$-measurable, and
  \begin{equation}
    \label{eqnMartingaleDefn}
    \Exvc{X_{t+1}}{\F_t} = X_t.
  \end{equation}
  We call $X$ a \emph{submartingale} if the same conditions hold, except
  (\ref{eqnMartingaleDefn}) is replaced with
  \[
    \Exvc{X_{t+1}}{\F_t} \ge X_t.
  \]
  We call $X$ a \emph{supermartingale} if the same conditions hold, except
  (\ref{eqnMartingaleDefn}) is replaced with
  \[
    \Exvc{X_{t+1}}{\F_t} \le X_t.
  \]
\end{definition}

\subsection{Preliminaries}
In addition to the quantities used in the statement of Theorem \ref{thmAlecton}, we let
\[
  W
  =
  \gamma n^{-1} p^{-2} q I
  +
  (1 - \gamma n^{-1} p^{-2} q) U,
\]
and define sequences $\tau_k$ and $\phi_k$ as
\[
  \tau_k
  =
  \frac{
    \Det{Y_k^T U Y_k}
  }{
    \Det{Y_k^T W Y_k}
  },
\]
and
\[
  \phi_k
  =
  \trace{
    I
    -
    Y_k^T U Y_k \left( Y_k^T W Y_k \right)^{-1}
  }.
\]
This agrees with the definition of $\tau_k$ stated in the body of
the paper.  Using this sequence, we define the failure event
$f_k$ as the event that occurs when
\begin{equation}
  \label{eqnFailCondition}
  \tau_k \le \frac{1}{2}.
\end{equation}
We recall that we defined the success event at time $k$ as the event
that, for all $z \in \R^p$,
\[
  \frac{\norm{U Y_k z}^2}{\norm{Y_k z}^2} \ge 1 - \epsilon.
\]
Finally, we define $T$, the stopping time, to be the first time 
at which either the success event or the failure event occurs.

Now, we state some lemmas we will need in the following proofs.  We defer
proofs of the lemmas themselves to Appendix \ref{ssLemmaProofs}.  First, 
we state a lemma about quadratic rational functions that we will need in the
next section.
\begin{lemma}[Quadratic rational lower bound]
  \label{lemma-quadlb}
  For any $a$, $b$, $c$, and $d$ in $\R$, if 
  $1 + by + cy^2 > 0$ and $1 + ay + dy^2 \ge 0$ for all $y$, then
  for all $x \in \R$,
  \[
    \frac{
      1 + ax + dx^2
    }
    {
      1 + bx + cx^2
    }
    \ge
    1 + (a - b) x - c x^2.
  \]
\end{lemma}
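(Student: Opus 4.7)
The plan is to clear the denominator and reduce the inequality to showing a single polynomial in $x$ is non-negative, which will come from the hypothesis $1 + ay + dy^2 \ge 0$.

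First, since $1 + bx + cx^2 > 0$ for all $x$ by hypothesis, the claim is equivalent (multiplying through) to
\[
  1 + ax + dx^2 \;\ge\; \bigl(1 + (a-b)x - cx^2\bigr)\bigl(1 + bx + cx^2\bigr).
\]
I would expand the right-hand side; almost everything telescopes. After cancelling $1 + ax$ from both sides, I expect to be left with
\[
  dx^2 \;\ge\; (ab - b^2)x^2 + (ac - 2bc)x^3 - c^2 x^4,
\]
or equivalently
\[
  x^2\bigl[c^2 x^2 - (ac - 2bc)x + (d - ab + b^2)\bigr] \;\ge\; 0.
\]
The factor $x^2$ is automatically non-negative, so the whole task reduces to showing that the bracketed quadratic in $x$, call it $q(x)$, is non-negative on all of $\R$.

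Since $q$ has leading coefficient $c^2 \ge 0$, non-negativity is controlled by its discriminant. A routine computation gives
\[
  \operatorname{disc}(q) \;=\; (ac - 2bc)^2 - 4c^2(d - ab + b^2) \;=\; c^2\bigl(a^2 - 4d\bigr),
\]
after the $ab$ and $b^2$ terms cancel cleanly. So the inequality follows provided $c^2(a^2 - 4d) \le 0$, i.e.\ $a^2 \le 4d$ (with the boundary cases $c = 0$ or $d = 0$ handled separately but trivially).

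The hypothesis $1 + ay + dy^2 \ge 0$ for all $y \in \R$ is exactly what supplies $a^2 \le 4d$: viewed as a quadratic in $y$, global non-negativity forces $d \ge 0$ and its discriminant $a^2 - 4d$ to be non-positive. (If $d = 0$, non-negativity forces $a = 0$, so $a^2 \le 4d$ still holds; and if $c = 0$, then $1 + bx > 0$ for all $x$ forces $b = 0$, making the original inequality reduce to $dx^2 \ge 0$.) The only mild obstacle is handling these degenerate cases together with the generic one, but they all fall into the same bookkeeping; the essential work is the single algebraic identity $\operatorname{disc}(q) = c^2(a^2 - 4d)$, which is the step I would want to carry out carefully.
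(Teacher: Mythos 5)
Your proof is correct. After clearing the denominator (valid since $1+bx+cx^2>0$) and cancelling $1+ax$, you correctly reduce the claim to $x^2\,q(x)\ge 0$ where $q(x)=c^2x^2-c(a-2b)x+(d-ab+b^2)$, and the discriminant identity $\operatorname{disc}(q)=c^2(a^2-4d)$ is exactly right; the hypothesis on $1+ay+dy^2$ supplies $a^2\le 4d$, so $q\ge 0$ everywhere. The degenerate cases you flag ($c=0$, $d=0$) are real and you handle them correctly, though in fact the $c=0$ case also falls out of the same computation since $q$ then degenerates to the constant $d-ab+b^2\ge d-a^2/4+(a/2-b)^2-\cdots\ge 0$.

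The paper's proof takes the closely related route of exhibiting a sum-of-squares factorization: it expands a product of the form $(1+bx+cx^2)(1+\cdot\,x-cx^2)$ and rewrites it as $(1+ax)^2-x^2((a-b)-cx)^2$, then divides by the (positive) denominator. Where the paper goes for an explicit perfect-square decomposition, you go for the discriminant criterion; these are two faces of the same fact. One genuine difference worth noting: the paper's proof as printed never actually uses the parameter $d$ (it works with $(1+ax)^2$ in the numerator and has a $2a$-vs.-$a$ mismatch between the proof and the lemma statement), so it tacitly relies on the reduction to the case $d=a^2/4$ plus the bound $d\ge a^2/4$. Your proof makes this dependence on $d$ explicit through $a^2\le 4d$, and is in that respect more faithful to the lemma as stated.
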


Next, a lemma about the expected initial value of $\tau$:
\begin{lemma}
  \label{lemmaTau0}
  If we initialize $Y_0$ uniformly as in the Alecton algorithm, then
  \[
    \Exv{\tau_0}
    \ge
    1 - \frac{1}{2} Z_p(\gamma).
  \]
\end{lemma}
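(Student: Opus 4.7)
The plan is to rewrite $\tau_0$ as a ratio of Wishart determinants, apply Jensen's inequality to average out one of them, and compare the result with the expression defining $Z_p$. First, I represent $Y_0 = G(G^T G)^{-1/2}$ where $G \in \R^{n \times p}$ has iid standard Gaussian entries; this gives the uniform distribution on the Stiefel manifold. Choosing coordinates so that $U$ projects onto the first $q$ coordinates, I partition $G$ into an upper block $G_1 \in \R^{q \times p}$ and a lower block $G_2 \in \R^{(n-q) \times p}$, and set $N = G_1^T G_1$ and $M = G_2^T G_2$, which are independent Wishart matrices of dimensions $p \times p$ with $q$ and $n-q$ degrees of freedom respectively. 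Using $Y_0^T Y_0 = I$ and $W = \beta I + (1-\beta) U$ with $\beta = \gamma q/(np^2)$, the factors of $(G^T G)^{-1/2}$ cancel between numerator and denominator, leaving
\[
  \tau_0 = \frac{\Det{N}}{\Det{N + \beta M}}.
\]

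Next, I apply Jensen's inequality to the inner matrix $M$. Since $\log \Det{\cdot}$ is concave on the PSD cone and $e^{-x}$ is convex and decreasing, the map $M \mapsto \Det{N + \beta M}^{-1}$ is convex on PSD matrices. Because $\Exv{M} = (n-q) I$, conditioning on $N$ gives
\[
  \Exvc{\tau_0}{N} \ge \frac{\Det{N}}{\Det{N + c I}} = \frac{1}{\Det{I + c\,N^{-1}}},
\]
where $c := \beta(n-q) = \gamma q(n-q)/(n p^2)$.

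For the case $q = p$, the argument now closes: $N$ has the same distribution as $R^T R$ in the definition of $Z_p$, and the constant satisfies $c = \gamma(n-p)/(np) \le \gamma/p$. Since the determinant is monotone on the PSD cone, $\Det{I + c\, N^{-1}} \le \Det{I + (\gamma/p) N^{-1}}$, so taking expectations produces
\[
  \Exv{\tau_0} \ge \Exv{\Det{I + (\gamma/p) (R^T R)^{-1}}^{-1}} = 1 - \tfrac{1}{2} Z_p(\gamma).
\]

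The main obstacle is the case $q > p$, in which $c$ may exceed $\gamma/p$ but $N$ carries extra Wishart degrees of freedom that should compensate. The plan is to emulate the scalar argument that works for $p = 1$: there $\tau_0 = N/(N + \beta M)$ with $N \sim \chi^2_q$, and combining $1/(1+x) = \int_0^\infty e^{-t(1+x)}\,dt$ with the $\chi^2$ moment generating function reduces the required inequality to the pointwise bound $(1 + 2u/c)^q \ge 1 + 2u/\gamma$, which is immediate from Bernoulli's inequality together with the fact that $c \le \gamma q$ (equivalent to $(n-q)/n \le 1$). Lifting this to $p \ge 2$ can be accomplished via the Bartlett decomposition $N = L L^T$, whose diagonal entries $L_{ii}^2 \sim \chi^2_{q-i+1}$ stochastically dominate the corresponding diagonals of $W_p(p)$; the scalar Bernoulli argument can then be applied coordinate-by-coordinate and reassembled via the multiplicativity of $\Det{\cdot}$ in the Cholesky factorization.
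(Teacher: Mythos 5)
Your argument tracks the paper's proof exactly up through the Jensen step on $M = Z^T Z$: both represent $Y_0$ via a Gaussian matrix, split into the $U$-component $G_1$ and its complement $G_2$, use convexity of $\det(\cdot)^{-1}$ to replace $M$ by its mean, and arrive at $\Exv{\tau_0} \ge \Exv{\Det{I + c\,N^{-1}}^{-1}}$ with $N \sim W_p(q)$ and $c = \gamma q(n-q)/(np^2)$. For $q=p$ you then close cleanly by the monotonicity $c \le \gamma/p$ and $N \stackrel{d}{=} R^T R$; this is actually a bit simpler than the paper, which at this point introduces a random orthonormal $V \in \R^{q \times p}$ with $\Exv{V V^T} = (p/q)I$, rewrites $\gamma p^{-2}q\,(X^T X)^{-1} = \gamma p^{-1}\Exvc{X^T V V^T X}{X}^{-1}$, applies Jensen once more, and uses that $V^T X$ is a $p \times p$ standard Gaussian. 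Your $p=1$, $q>1$ treatment (the $\tfrac{1}{1+x}=\int_0^\infty e^{-t(1+x)}\,dt$ reduction to the $\chi^2$ MGF and Bernoulli's inequality) is also correct, and I checked the constant: $c \le \gamma q$ is exactly what is needed.

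The gap is in the case $q>p$, $p\ge 2$. The Bartlett plan does not close as stated. You correctly have $\Det{N} = \prod_i L_{ii}^2$ with $L_{ii}^2 \sim \chi^2_{q-i+1}$, but $\Det{N + cI}$ does \emph{not} factor along the Cholesky of $N$: writing $N = L L^T$ gives $\Det{N+cI} = \Det{L L^T + cI}$, which mixes all entries of $L$ (the Schur-complement recursion couples rows), so there is no ``coordinate-by-coordinate'' decomposition to which the scalar Bernoulli argument can be applied, and ``multiplicativity of $\det$ in the Cholesky factorization'' is not available for the perturbed matrix. The stochastic dominance $L_{ii}^2 \sim \chi^2_{q-i+1} \succeq \chi^2_{p-i+1}$ makes $N$ larger in the PSD order, but it works against you, not for you, because $c$ simultaneously grows like $q$; a monotonicity-only argument therefore cannot close the $q/p$ mismatch. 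To finish the general case you need something that genuinely reduces the $q$-degree-of-freedom Wishart to a $p$-degree-of-freedom one while absorbing the $q/p$ factor in $c$ --- which is precisely the role of the paper's random-projection trick with $V$ (producing $V^T X \sim R$, a $p\times p$ Gaussian) followed by another Jensen-type inequality.
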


Next, a lemmas that bounds a determinant expression.
\begin{lemma}
  \label{lemmaDetRank1UpdateLUBound}
  For any $B \in \R^{n \times n}$, $Y \in \R^{n \times m}$, and any
  symmetric positive-semidefinite $Z \in \R^{n \times n}$, if
  either $B$ is rank-1 or $m = 1$, then
  \begin{align*}
    &\Det{Y^T (I + B)^T Z (I + B) Y} \\
    &\hspace{1em}\ge
    \Det{Y^T Z Y} \left(\trace{Y (Y^T Z Y)^{-1} Y^T Z B} + 1\right)^2
  \end{align*}
  and
  \begin{align*}
    &\Det{Y^T (I + B)^T Z (I + B) Y} \\
    &\hspace{1em}\le
    \Det{Y^T Z Y} \Big( 1 + 2 \trace{Y (Y^T Z Y)^{-1} Y^T Z B} \\
    &\hspace{2em}+
    \trace{Y (Y^T Z Y)^{-1} Y^T B^T Z B} \Big).
  \end{align*}
\end{lemma}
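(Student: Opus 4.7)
The plan is to expand the quadratic form, factor out $M := Y^T Z Y$ (taking $M$ invertible, or else perturbing $Z \to Z + \delta I$ and letting $\delta \to 0^+$), and reduce both inequalities to scalar identities verified by Cauchy--Schwarz. Writing $N_1 := Y^T Z B Y$ and $N_2 := Y^T B^T Z B Y$, we have
$$Y^T(I+B)^T Z (I+B) Y = M + N_1 + N_1^T + N_2,$$
so
$$\Det{Y^T(I+B)^T Z (I+B) Y} = \Det{M}\cdot \Det{I + M^{-1}(N_1 + N_1^T + N_2)}.$$

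For the case $m=1$, the matrix $M$ is a scalar $s$, and with $t := Y^T Z B Y$ and $r := Y^T B^T Z B Y$ the right-hand determinant equals $s + 2t + r$, while the two traces in the claimed bounds equal $t/s$ and $r/s$. The upper bound then holds with equality, while the lower bound reduces to $sr \ge t^2$, which is Cauchy--Schwarz for the $Z$-semi-inner product $\langle u, v\rangle_Z := u^T Z v$.

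For the case $B = uv^T$ of rank one, set $a := Y^T Z u$, $b := Y^T v$, and $c := u^T Z u \ge 0$. Then $N_1 = a b^T$ and $N_2 = c b b^T$, so $M^{-1}(N_1 + N_1^T + N_2) = PQ^T$ with $P = [M^{-1}a,\, M^{-1}b]$ and $Q = [b,\, a + cb]$. Sylvester's determinant identity then reduces the $m\times m$ determinant to a $2\times 2$ one that evaluates to $(1+\beta)^2 + \kappa(c - \alpha)$, where $\alpha := a^T M^{-1} a$, $\beta := a^T M^{-1} b$, and $\kappa := b^T M^{-1} b$; by cyclic invariance of trace, $\beta = \trace{Y M^{-1} Y^T Z B}$ and $c\kappa = \trace{Y M^{-1} Y^T B^T Z B}$. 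The lower bound then follows because $\kappa(c - \alpha) \ge 0$: indeed $\kappa \ge 0$ since $M^{-1}$ is PSD, and $Z^{1/2} Y M^{-1} Y^T Z^{1/2}$ is an orthogonal projection, so $\alpha \le u^T Z u = c$. Expanding $(1+\beta)^2$, the upper bound reduces to $\beta^2 \le \alpha\kappa$, which is Cauchy--Schwarz applied to the PSD matrix $M^{-1}$.

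The main obstacle is spotting that in the rank-$1$ $B$ case the perturbation $N_1 + N_1^T + N_2$ has rank at most $2$; once this is observed, Sylvester's identity turns the full $m\times m$ determinant into a $2\times 2$ one whose slack terms are exactly controlled by two applications of Cauchy--Schwarz (one in the $Z$-geometry, one in the $M^{-1}$-geometry), which is what makes both inequalities fall out of the same computation.
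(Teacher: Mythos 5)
Your proof is correct and follows essentially the same route as the paper's: both reduce the $m\times m$ determinant to a $2\times 2$ one via the matrix-determinant/Sylvester identity on the rank-$2$ perturbation, and both derive the lower bound from a projection inequality ($\alpha \le c$, i.e.\ $ZY M^{-1}Y^T Z \preceq Z$) and the upper bound from Cauchy--Schwarz on the $M^{-1}$ form. The only difference is organizational: the paper isolates the exact identity $(1+\beta)^2 + \kappa(c-\alpha)$ as a separate lemma and normalizes $u^T Z u = 1$, whereas you track $c$ explicitly, which is slightly cleaner since it avoids worrying about the degenerate case $u^T Z u = 0$.
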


Next, a lemma that bounds $\tau$ in the case that the success
condition does not occur.
\begin{lemma}
  \label{lemmaTauNoSuccess}
  If we run Alecton, and at timestep $k$, the success condition
  does not hold, then
  \[
    \tau_k
    \le
    1 - \gamma n^{-1} p^{-2} q \epsilon.
  \]
\end{lemma}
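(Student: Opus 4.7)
The plan is to diagonalize $Y_k^T U Y_k$ and $Y_k^T Y_k$ simultaneously via the generalized eigenvalue problem, so that $\tau_k$ collapses into a simple product over the generalized eigenvalues. Concretely, assuming (as is generic) that $Y_k$ has full column rank, solve $(Y_k^T U Y_k)z = \lambda (Y_k^T Y_k)z$ and obtain eigenvalues $\lambda_1,\dots,\lambda_p$ with eigenvectors $z_1,\dots,z_p$ normalized so that $z_i^T Y_k^T Y_k z_j = \delta_{ij}$. Since $U$ is an orthogonal projection, each $\lambda_i = \|U Y_k z_i\|^2/\|Y_k z_i\|^2$ lies in $[0,1]$, and the set of values $\{\lambda_i\}$ is exactly the range of the Rayleigh quotient appearing in the definition of the success condition.

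Next I would unpack $W = \beta I + (1-\beta)U$, where $\beta = \gamma n^{-1} p^{-2} q$, to see that $Y_k^T W Y_k = \beta Y_k^T Y_k + (1-\beta) Y_k^T U Y_k$. Taking determinants in the basis that simultaneously diagonalizes the pair $(Y_k^T U Y_k, Y_k^T Y_k)$ gives
\[
\tau_k \;=\; \frac{\Det{Y_k^T U Y_k}}{\Det{Y_k^T W Y_k}} \;=\; \prod_{i=1}^{p} \frac{\lambda_i}{\beta + (1-\beta)\lambda_i}.
\]
Each factor $g(\lambda_i) := \lambda_i/(\beta + (1-\beta)\lambda_i)$ lies in $[0,1]$ (because $\lambda_i \le 1$ forces $\beta+(1-\beta)\lambda_i \ge \lambda_i$), and a one-line derivative check shows $g$ is strictly increasing on $[0,1]$.

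Now I would invoke the failure of the success condition. That condition says the Rayleigh quotient $\|U Y_k z\|^2/\|Y_k z\|^2$ is $\ge 1-\epsilon$ for every $z$; its negation hands us some generalized eigenvector $z_j$ with $\lambda_j < 1-\epsilon$. Bounding the other factors by $1$ and using monotonicity of $g$ gives
\[
\tau_k \;\le\; g(\lambda_j) \;<\; g(1-\epsilon) \;=\; \frac{1-\epsilon}{1 - (1-\beta)\epsilon} \;=\; 1 - \frac{\beta\epsilon}{1-(1-\beta)\epsilon}.
\]
Since $(1-\beta)\epsilon \ge 0$ the denominator is at most $1$, so the subtracted quantity is at least $\beta\epsilon$, which yields $\tau_k \le 1 - \beta\epsilon = 1 - \gamma n^{-1} p^{-2} q\,\epsilon$, as required.

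I do not expect a real obstacle here; the only soft spot is the implicit full-rank assumption on $Y_k$ (needed for the simultaneous diagonalization and for $\tau_k$ to be well-defined as a ratio of nonzero determinants). This is generic for the Alecton iteration and can be handled either by a convention at degenerate $Y_k$ or by a standard limiting argument, so the argument above should go through essentially unchanged.
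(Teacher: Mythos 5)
Your proof is correct and follows essentially the same route as the paper's: you simultaneously diagonalize the pair $(Y_k^T U Y_k, Y_k^T Y_k)$ to express $\tau_k$ as a product over the generalized eigenvalues, while the paper normalizes to $\hat Y_k = Y_k(Y_k^T Y_k)^{-1/2}$ and writes $\tau_k = \Det{(1-\beta)I + \beta(\hat Y_k^T U \hat Y_k)^{-1}}^{-1}$ --- the same spectral decomposition in different notation. In both cases the crux is identical: the negated success condition produces one generalized eigenvalue $\le 1-\epsilon$, each factor in the determinant lies in $[0,1]$, so the whole product is bounded by the single factor $g(1-\epsilon) = (1-\epsilon)/(\beta+(1-\beta)(1-\epsilon)) \le 1-\beta\epsilon$.
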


Finally, a lemma that relates $\phi$ and $\tau$.
\begin{lemma}
  \label{lemmaPhiTau}
  Using the definitions above, for all $k$,
  \[
    \phi_k \ge 1 - \tau_k.
  \]
\end{lemma}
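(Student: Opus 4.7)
The plan is to reduce the matrix inequality to a scalar inequality on the joint eigenvalues of the two quadratic forms. First I would observe that
\[
  W - U
  = \gamma n^{-1} p^{-2} q \, I + (1 - \gamma n^{-1} p^{-2} q) U - U
  = \gamma n^{-1} p^{-2} q (I - U),
\]
which is positive semidefinite since $U$ is an orthogonal projection and $\gamma \le 1$. Conjugating by $Y_k$ therefore gives $Y_k^T W Y_k \succeq Y_k^T U Y_k \succeq 0$. Since the failure event (\ref{eqnFailCondition}) implicitly requires $\tau_k$ and $\phi_k$ to be defined, I will assume $Y_k$ has full column rank so that $M := Y_k^T W Y_k$ is positive definite (this is clean because $W \succ 0$ as soon as $\gamma > 0$).

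Next, set $N := Y_k^T U Y_k$ and $P := M^{-1/2} N M^{-1/2}$. Then $P$ is symmetric with $0 \preceq P \preceq I$, so its eigenvalues $\mu_1, \ldots, \mu_p$ all lie in $[0,1]$. Using multiplicativity of the determinant and the cyclic property of trace,
\[
  \tau_k = \frac{\Det{N}}{\Det{M}} = \Det{P} = \prod_{i=1}^p \mu_i,
  \qquad
  \phi_k = p - \trace{N M^{-1}} = p - \trace{P} = \sum_{i=1}^p (1 - \mu_i).
\]
So the lemma reduces to the scalar claim $\sum_{i=1}^p (1 - \mu_i) \ge 1 - \prod_{i=1}^p \mu_i$ for $\mu_i \in [0,1]$.

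I would finish with a one-line induction on $p$. The base case $p = 1$ is an equality, and for the inductive step,
\[
  1 - \prod_{i=1}^p \mu_i = (1 - \mu_p) + \mu_p \left(1 - \prod_{i=1}^{p-1} \mu_i\right) \le (1 - \mu_p) + \sum_{i=1}^{p-1}(1 - \mu_i) = \sum_{i=1}^p (1 - \mu_i),
\]
using the inductive hypothesis and $\mu_p \le 1$. There is no real obstacle in this proof; the entire content is choosing the change of variables that simultaneously diagonalizes $N$ relative to $M$, after which the statement becomes the familiar union-bound inequality $1 - \prod_i \mu_i \le \sum_i (1 - \mu_i)$ on $[0,1]$.
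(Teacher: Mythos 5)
Your proof is correct and follows essentially the same route as the paper: conjugate by $(Y_k^T W Y_k)^{-1/2}$ to put everything in terms of a single symmetric matrix $P$ with $0 \preceq P \preceq I$, then reduce to the scalar inequality $\sum_i (1-\mu_i) \ge 1 - \prod_i \mu_i$ on the eigenvalues (the paper isolates this as its Lemma \ref{lemma0XI}). The only difference is in that last step, where you use a clean induction (the Weierstrass/union-bound trick) while the paper argues via partial derivatives that the minimum of the difference is attained at $X = I$; your version is the more elementary of the two but the content is the same.
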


\subsection{Main Proofs}

We now proceed to prove Theorem \ref{thmAlecton} in six steps, as outlined in
Section \ref{ssMartingaleTechnique}.
\begin{itemize}
  \item First, we prove Lemma \ref{lemmaDMB}, the \emph{dominant mass bound lemma},
    which bounds $\Exvc{\tau_{k+1}}{\F_k}$ from below by a quadratic function
    of the step size $\eta$.
  \item We use this to prove Lemma \ref{lemmaConstEta}, which establishes the result
    stated in (\ref{taukeqn}).
  \item We use the optional stopping theorem to prove Lemma \ref{lemmaFailureProb},
    which bounds the probability of a failure event occurring before success.
  \item We use the optional stopping theorem again to prove Lemma
    \ref{lemmaStoppingTime}, which bounds the expected time until either a failure
    or success event occurs.
  \item We use Markov's inequality and the union bound to bound the angular failure
    probability of Theorem \ref{thmAlecton}.
  \item Finally, we prove the radial phase result stated in Theorem \ref{thmAlecton}.
\end{itemize}

\begin{lemma}[Dominant Mass Bound]
  \label{lemmaDMB}
  If we run Alecton under the conditions of Theorem \ref{thmAlecton}, then
  for any $k$,
  \begin{align*}
    \Exvc{\tau_{k+1}}{\F_k}
    \ge
    \tau_k \Big(
      1 
      &+ 
      2 \eta \left(
        \Delta
        - \eta \sigma_a^2 \gamma^{-1} n p^2
      \right) (1 - \tau_k) \\
    &\hspace{2em}-
      \eta^2 \sigma_a^2 p (q + 1)
    \Big).
  \end{align*}
\end{lemma}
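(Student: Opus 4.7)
My plan is to bound $\Exvc{\tau_{k+1}}{\F_k}$ from below by substituting the update $Y_{k+1} = (I + \eta \tilde A_k) Y_k$ into the definition of $\tau$ and reducing the resulting ratio of determinants to a quadratic-in-$\eta$ expression that can be averaged against $\tilde A_k$. The argument has four main pieces.

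First, both the numerator and denominator of $\tau_{k+1}$ take the form $\Det{Y_k^T(I+\eta \tilde A_k)^T Z (I+\eta \tilde A_k) Y_k}$, with $Z = U$ for the numerator and $Z = W$ for the denominator. Under the Alecton Rank Condition, Lemma \ref{lemmaDetRank1UpdateLUBound} applies to both; I use the lower bound on the numerator and the upper bound on the denominator. After dividing by $\tau_k = \Det{Y_k^T U Y_k}/\Det{Y_k^T W Y_k}$, this produces a pointwise bound $\tau_{k+1}/\tau_k \ge (1 + \eta \alpha)^2/(1 + 2\eta \beta + \eta^2 \delta)$, where $\alpha$, $\beta$, $\delta$ are explicit trace expressions in $\tilde A_k$, $Y_k$, $U$, and $W$. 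Second, applying Lemma \ref{lemma-quadlb} to this rational function produces the pointwise quadratic lower bound $\tau_{k+1} \ge \tau_k (1 + 2\eta(\alpha - \beta) - \eta^2 \delta)$.

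Third, I take the conditional expectation. For the linear-in-$\eta$ piece, $\Exvc{\alpha - \beta}{\F_k}$ amounts to replacing $\tilde A_k$ by $A$. Because $U$, $W$, and $A$ all commute, I can write $A = \lambda_q I + M$ with $UM \succeq 0$ and $VM \preceq -\Delta V$ where $V = I - U$; the $\lambda_q I$ contributions to $\alpha$ and $\beta$ cancel exactly, and combining the residuals with $Y_k^T W Y_k \succeq Y_k^T U Y_k$ yields $\Exvc{\alpha - \beta}{\F_k} \ge \Delta \phi_k$, which by Lemma \ref{lemmaPhiTau} is at least $\Delta(1 - \tau_k)$. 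For the quadratic-in-$\eta$ piece, the Alecton Variance Condition applies since $W$ is PSD and commutes with $A$, giving $\Exvc{\delta}{\F_k} \le \sigma_a^2 \trace{W} \trace{(Y_k^T W Y_k)^{-1} Y_k^T Y_k}$; decomposing $Y_k^T Y_k$ and $W$ along $U$ and $V$ via $W = U + \rho V$ with $\rho = \gamma n^{-1} p^{-2} q$ splits this upper bound into a piece proportional to $\phi_k$ and a constant piece.

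The main obstacle is the bookkeeping in this last step: the $\phi_k$-proportional piece must, after multiplying through by $\tau_k$, collapse to $(1-\tau_k)$ to supply the $-\eta^2 \sigma_a^2 \gamma^{-1} n p^2$ correction inside the $\Delta$-factor, while the remaining constant piece must match $\sigma_a^2 p(q+1)$. The inequality that enables the first collapse is $\tau_k \phi_k \le p(1 - \tau_k)$, which follows from $0 \le \tau_k \le 1$ together with the AM-GM bound $\trace{(Y_k^T W Y_k)^{-1} Y_k^T U Y_k} \ge p \tau_k^{1/p}$. Once these substitutions are made and the definition $\rho^{-1} = n p^2/(\gamma q)$ is unpacked, collecting terms reproduces the stated bound.
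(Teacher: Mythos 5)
Your first three moves match the paper closely: applying Lemma \ref{lemmaDetRank1UpdateLUBound} with the lower bound on the numerator and upper bound on the denominator, then applying Lemma \ref{lemma-quadlb} to produce a pointwise quadratic lower bound, then taking the conditional expectation and using the AVC and the identities $\trace{(B_k - C_k)U} = \gamma n^{-1}p^{-2}q\,\trace{C_k(I-U)} = \phi_k$. Your $A = \lambda_q I + M$ decomposition for the linear piece is a legitimate rephrasing of the paper's $UA \succeq \lambda_q U$, $(I-U)A \preceq \lambda_{q+1}(I-U)$ step (both implicitly use $B_k \succeq C_k$), and it correctly gives $\Exvc{\alpha - \beta}{\F_k} \ge \Delta\phi_k$.

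The gap is in the final bookkeeping, and it is a real one. You apply Lemma \ref{lemmaPhiTau} (i.e.\ $\phi_k \ge 1 - \tau_k$) to the linear piece \emph{immediately}, converting $\Delta\phi_k$ to $\Delta(1-\tau_k)$, and then try to handle the $\phi_k$-proportional piece of the quadratic term separately with the new inequality $\tau_k\phi_k \le p(1-\tau_k)$. That inequality is itself correct (your AM-GM argument $\trace{(Y_k^T W Y_k)^{-1}Y_k^T U Y_k} \ge p\tau_k^{1/p}$ gives $\phi_k \le p(1 - \tau_k^{1/p})$, and one checks $2t - t^{1+1/p} \le 1$ on $[0,1]$), but it is too lossy to reproduce the stated constants. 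Tracing the exponents, your route requires
\[
  \sigma_a^2(q+1)\rho^{-1}\,\tau_k\phi_k
  \;\le\;
  2\sigma_a^2\gamma^{-1}np^2\,\tau_k(1-\tau_k),
\]
and feeding in $\tau_k\phi_k \le p(1-\tau_k)$ together with $\rho^{-1} = np^2/(\gamma q)$ reduces this to $\tau_k \ge \frac{(q+1)p}{2q}$, which for $q = p$ reads $\tau_k \ge \frac{p+1}{2} \ge 1$ and fails for any $\tau_k < 1$. The paper avoids this by \emph{not} converting $\phi_k$ to $1-\tau_k$ early: it keeps $\Exvc{\tau_{k+1}}{\F_k} \ge \tau_k\bigl(1 + (2\eta\Delta - 2\eta^2\sigma_a^2\gamma^{-1}np^2)\phi_k - \eta^2\sigma_a^2 p(q+1)\bigr)$ with the two $\phi_k$-contributions already merged, verifies that the merged coefficient is nonnegative (this is exactly where the step-size condition $\gamma \le 1$ and the value of $\gamma$ enter), and only then applies $\phi_k \ge 1 - \tau_k$ a single time. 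So the fix is: drop the new AM-GM inequality, combine the two $\phi_k$-coefficients before invoking Lemma \ref{lemmaPhiTau}, and check the sign of the combined coefficient using the step-size condition.
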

\begin{proof}
  From the definition of $\tau$, at the next timestep we will have
  \begin{align*}
    \tau_{k+1}
    &=
    \frac{
      \Det{Y_{k+1}^T U Y_{k+1}}
    }{
      \Det{Y_{k+1}^T W Y_{k+1}}
    } \\
    &=
    \frac{
      \Det{
        Y_k^T \left(I + \eta \tilde A_k \right)^T
        U
        \left(I + \eta \tilde A_k \right) Y_k
      }
    }{
      \Det{
        Y_k^T \left(I + \eta \tilde A_k \right)^T
        W
        \left(I + \eta \tilde A_k \right) Y_k
      }
    }.
  \end{align*}
  Now, since our instance of Alecton satisfies the rank condition, either
  $\tilde A_k$ is rank-1 or $p = 1$.  Therefore, we can apply Lemma
  \ref{lemmaDetRank1UpdateLUBound} to these determinant quantities.
  In order to produce a lower bound on $\tau_{k+1}$, we will
  apply lower bound to the numerator and the upper bound to
  the denominator.
  If we let $B_k = Y_k (Y_k^T U Y_k)^{-1} Y_k^T$,
  and $C_k = Y_k (Y_k^T W Y_k)^{-1} Y_k^T$, then this results in
  \begin{align*}
    &\tau_{k+1} \ge
    \frac{\Det{Y_k^T U Y_k}}{\Det{Y_k^T W Y_k}} \\
    &\hspace{2em} \cdot
    \frac{
      \left(
        1
        +
        \eta \trace{B_k U \tilde A_k}
      \right)^2
    }{
      1
      +
      2 \eta \trace{C_k W \tilde A_k}
      +
      \eta^2 \trace{C_k \tilde A_k^T W \tilde A_k}
    }.
  \end{align*}
  Next, we apply Lemma \ref{lemma-quadlb}, which results in
  \begin{align*}
    \tau_{k+1}
    &\ge
    \tau_k \Big( 1 +
    2 \eta \left(
      \trace{B_k U \tilde A_k}
      -
      \trace{C_k W \tilde A_k}
    \right) \\
    &\hspace{4em}-
    \eta^2 \trace{C_k \tilde A_k^T W \tilde A_k}
    \Big) \\
    &\ge
    \tau_k \left(1 + 2 \eta R_k + \eta^2 Q_k \right),
  \end{align*}
  for sequences $R_k$ and $Q_k$.
  Now, we investigate the expected values of these sequences.
  First, since the estimator has $\Exvc{\tilde A_k}{\F_k} = A$, the expected value of $R_k$ is
  \begin{align*}
    \Exvc{R_k}{\F_k}
    &=
    \trace{B_k U A}
    -
    \trace{C_k W A} \\
    &= 
    \trace{(B_k - C_k) U A} \\
    &\hspace{2em}-
    \gamma n^{-1} p^{-2} q \trace{C_k (I - U) A}.
  \end{align*}
  Now, since $U$ commutes with $A$, we will have that
  \[
    U A \succeq \lambda_q U,
  \]
  and similarly
  \[
    (I - U) A \preceq \lambda_{q+1} (I - U).
  \]
  Applying this results in
  \begin{align*}
    \Exvc{R_k}{\F_k}
    &\ge
    \trace{B_k U A}
    -
    \trace{C_k W A} \\
    &= 
    \lambda_q \trace{(B_k - C_k) U} \\
    &\hspace{2em}-
    \lambda_{q+1} \gamma n^{-1} p^{-2} q \trace{C_k (I - U)}.
  \end{align*}
  Now, we first notice that
  \begin{align*}
    \trace{(B_k - C_k) U}
    &=
    \trace{I - Y_k U Y_k^T (Y_k^T W Y_k)^{-1}} \\
    &=
    \phi_k.
  \end{align*}
  We also notice that
  \begin{align*}
    &\gamma n^{-1} p^{-2} q \trace{C_k (I - U)}
    =
    \trace{C_k (W - U)} \\
    &\hspace{2em}=
    \trace{I - Y_k U Y_k^T (Y_k^T W Y_k)^{-1}} \\
    &\hspace{2em}=
    \phi_k.
  \end{align*}
  It therefore follows that
  \begin{align*}
    \Exvc{R_k}{\F_k}
    &\ge
    (\lambda_q - \lambda_{q+1}) \phi_k \\
    &=
    \Delta \phi_k.
  \end{align*}
  Next, the expected value of $Q_k$ is
  \begin{align*}
    \Exvc{Q_k}{\F_k}
    &=
    \trace{C_k \Exv{\tilde A_k^T W \tilde A_k}}.
  \end{align*}
  Since our instance of Alecton satisfies the variance condition,
  and $W$ commutes with $A$,
  \begin{align*}
    \Exvc{Q_k}{\F_k}
    &\le
    \sigma_a^2 \trace{W} \trace{C_k}.
  \end{align*}
  We notice that
  \begin{align*}
    \trace{C_k}
    &=
    \trace{C_k \left(
      W
      +
      (1 - \gamma n^{-1} p^{-2} q) (I - U)
    \right)} \\
    &=
    p
    +
    (1 - \gamma n^{-1} p^{-2} q) \trace{C_k (I - U)} \\
    &\le
    p
    +
    \trace{C_k (I - U)}.
  \end{align*}
  By the logic above,
  \begin{align*}
    \trace{C_k}
    &\le
    p
    +
    \gamma^{-1} n p^2 q^{-1} \phi_k.
  \end{align*}
  Also,
  \begin{align*}
    \trace{W}
    &=
    \trace{\gamma n^{-1} p^{-2} q I
    +
    (1 - \gamma n^{-1} p^{-2} q) U} \\
    &=
    \gamma p^{-2} q
    +
    q - \gamma n^{-1} p^{-2} q^2 \\
    &\ge
    q + 1
  \end{align*}
  and therefore, since $\trace{W} \le q + 1$,
  \begin{align*}
    \Exvc{Q_k}{\F_k}
    &\le
    \sigma_a^2 (q + 1) \left(
      p
      +
      \gamma^{-1} n p^2 q^{-1} \phi_k
    \right).
  \end{align*}
  Substituting these in results in
  \begin{dmath*}
    \Exvc{\tau_{k+1}}{\F_k}
    \ge
    \tau_k \left(
      1 
      + 
      2 \eta \Delta \phi_k
      -
      \eta^2 \left(
        \sigma_a^2 p (q + 1)
        + 
        \sigma_a^2 \gamma^{-1} n p^2 (q + 1) q^{-1} \phi_k
      \right)
    \right)
    =
    \tau_k \left(
      1 
      + 
      \eta \left(
        2 \Delta
        - \eta \sigma_a^2 \gamma^{-1} n p^2 (q + 1) q^{-1}
      \right) \phi_k
      -
      \eta^2 \sigma_a^2 p (q + 1)
    \right)
    \ge
    \tau_k \left(
      1 
      + 
      2 \eta \left(
        \Delta
        - \eta \sigma_a^2 \gamma^{-1} n p^2
      \right) \phi_k
      -
      \eta^2 \sigma_a^2 p (q + 1)
    \right).
  \end{dmath*}
  Finally, since for our chosen value of $\gamma$,
  \[
    \Delta
    >
    \eta \sigma_a^2 \gamma^{-1} n p^2,
  \]
  we can apply Lemma \ref{lemmaPhiTau}, which produces
  \begin{align*}
    \Exvc{\tau_{k+1}}{\F_k}
    \ge
    \tau_k \Big(
      1 
      &+ 
      2 \eta \left(
        \Delta
        - \eta \sigma_a^2 \gamma^{-1} n p^2
      \right) (1 - \tau_k) \\
    &\hspace{2em}-
      \eta^2 \sigma_a^2 p (q + 1)
    \Big).
  \end{align*}
  This is the desired expression.
\end{proof}

\begin{lemma}
  \label{lemmaConstEta}
  If we run Alecton under the conditions of Theorem \ref{thmAlecton},
  then for any time $k$ at which neither the success event nor the
  failure event occur,
  \[
    \Exvc{\tau_{k+1}}{\F_k}
    \ge
    \tau_k \left(
      1
      +
      \eta \Delta (1 - \tau_k)
    \right).
  \]
\end{lemma}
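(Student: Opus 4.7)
The plan is to derive this bound directly from Lemma \ref{lemmaDMB} (the dominant mass bound) and Lemma \ref{lemmaTauNoSuccess}, using the Alecton step-size condition to show that the $\eta^2$ error term in the DMB is dominated by the linear-in-$\eta$ drift. Nothing probabilistic is needed beyond what those two lemmas already supply; this is a parameter-tuning corollary.

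First I would substitute the definition $\gamma = \frac{2n\sigma_a^2 p^2(p+\epsilon)}{\Delta \epsilon}\eta$ into the coefficient that appears in Lemma \ref{lemmaDMB}, which gives the clean identity
\[
  \eta \sigma_a^2 \gamma^{-1} n p^2 \;=\; \frac{\Delta \epsilon}{2(p+\epsilon)}.
\]
Using this, the factor multiplying $(1-\tau_k)$ inside Lemma \ref{lemmaDMB} simplifies to
\[
  2\eta\left(\Delta - \eta \sigma_a^2 \gamma^{-1} n p^2\right)
  \;=\; \eta \Delta \cdot \frac{2p+\epsilon}{p+\epsilon}
  \;=\; \eta\Delta + \eta\Delta \cdot \frac{p}{p+\epsilon},
\]
so the DMB becomes
\[
  \Exvc{\tau_{k+1}}{\F_k}
  \;\ge\;
  \tau_k\left(1 + \eta\Delta(1-\tau_k) + \eta\Delta\cdot\frac{p}{p+\epsilon}(1-\tau_k) - \eta^2 \sigma_a^2 p(q+1)\right).
\]

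The remaining step is to show that the last two terms have nonnegative sum, i.e.\
\[
  \eta\Delta\cdot\frac{p}{p+\epsilon}(1-\tau_k)
  \;\ge\;
  \eta^2 \sigma_a^2 p(q+1).
\]
Here I would invoke Lemma \ref{lemmaTauNoSuccess}: since by hypothesis the success event has not yet occurred at step $k$, we have $1 - \tau_k \ge \gamma n^{-1} p^{-2} q \epsilon$. Plugging this lower bound in and re-expanding $\gamma$ using its definition, the $n$, $\sigma_a^2$, $p+\epsilon$, $\Delta$, and $\epsilon$ factors all cancel, leaving the inequality $2q \ge q+1$, which holds because $q \ge p \ge 1$. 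Dropping the now-nonnegative remainder then yields exactly the stated bound.

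The only obstacle is algebraic bookkeeping: lining up how the seven parameters cancel between the DMB, the step-size definition, and the non-success slack. The non-failure hypothesis is not actually used in this particular inequality (it only matters downstream, when the optional stopping argument is applied to the submartingale derived from this lemma), but is carried along because we will eventually only use this bound for $k < T$.
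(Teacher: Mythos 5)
Your proposal is correct and is essentially the same argument the paper gives: starting from Lemma \ref{lemmaDMB}, splitting off an $\eta\Delta(1-\tau_k)$ term, invoking Lemma \ref{lemmaTauNoSuccess} to lower-bound $1-\tau_k$, and using the step-size definition to cancel parameters down to the inequality $2q \ge q+1$. The only cosmetic difference is that you substitute the identity $\eta\sigma_a^2\gamma^{-1}np^2 = \tfrac{\Delta\epsilon}{2(p+\epsilon)}$ up front, which makes the decomposition cleaner, whereas the paper decomposes first and substitutes later; the intermediate inequality to be verified is algebraically identical.
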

\begin{proof}
  From the result of Lemma \ref{lemmaDMB},
  \begin{dmath*}
    \Exvc{\tau_{k+1}}{\F_k}
    \ge
    \tau_k \left(
      1 
      + 
      2 \eta \left(
        \Delta
        - \eta \sigma_a^2 \gamma^{-1} n p^2
      \right) (1 - \tau_k)
      -
      \eta^2 \sigma_a^2 p (q + 1)
    \right)
    =
    \tau_k \left(
      1 
      + 
      \eta \Delta (1 - \tau_k)
      +
      \eta \left(
        \Delta
        - 2 \eta \sigma_a^2 \gamma^{-1} n p^2
      \right) (1 - \tau_k)
      -
      \eta^2 \sigma_a^2 p (q + 1)
    \right)
    =
    \tau_k \left(
      1 
      + 
      \eta \Delta (1 - \tau_k)
      +
      \eta S_k,
    \right)
  \end{dmath*}
  for sequence $S_k$.
  Now, it can be easily verified that we chose $\gamma$ such that
  \[
    \Delta
    \ge
    2 \eta \sigma_a^2 \gamma^{-1} n p^2,
  \]
  and so it follows that, by Lemma \ref{lemmaTauNoSuccess},
  \begin{align*}
    S_k
    &=
    \left(
      \Delta
      - 2 \eta \sigma_a^2 \gamma^{-1} n p^2
    \right) (1 - \tau_k)
    -
    \eta \sigma_a^2 p (q + 1) \\
    &\ge
    \left(
      \Delta
      - 2 \eta \sigma_a^2 \gamma^{-1} n p^2
    \right) \gamma n^{-1} p^{-2} q \epsilon
    -
    \eta \sigma_a^2 p (q + 1) \\
    &=
    \Delta \gamma n^{-1} p^{-2} q \epsilon
    -
    2 \eta \sigma_a^2 q \epsilon
    -
    \eta \sigma_a^2 p (q + 1) \\
    &\ge
    \Delta \gamma n^{-1} p^{-2} q \epsilon
    -
    2 \eta \sigma_a^2 q (p + \epsilon).
  \end{align*}
  If we substitute the value of $\gamma$,
  \[
    \gamma
    =
    \frac{
      2 n \sigma_a^2 p^2 (p + \epsilon)
    }{
      \Delta \epsilon
    }
    \eta.
  \]
  then we arrive at
  \[
    S_k \ge 0.
  \]
  Substituting this in to our original expression produces
  \[
    \Exvc{\tau_{k+1}}{\F_k}
    \ge
    \tau_k \left(
      1
      +
      \eta \Delta (1 - \tau_k)
    \right),
  \]
  as desired.
\end{proof}

\begin{lemma}[Failure Probability Bound]
  \label{lemmaFailureProb}
  If we run Alecton under the conditions of Theorem \ref{thmAlecton},
  then the probability that the failure event will occur before the
  success event is
  \[
    \Prob{f_T}
    \le
    Z_p(\gamma).
  \]
\end{lemma}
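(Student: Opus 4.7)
The plan is to realize $\tau_k$ as a bounded submartingale up to the stopping time $T$ and then apply the optional stopping theorem, matching the expectation at $T$ against the initialization bound from Lemma \ref{lemmaTau0}. First I would verify that $\tau_k \le 1$ deterministically, which follows from the fact that $W - U = \gamma n^{-1} p^{-2} q (I - U) \succeq 0$, so $Y_k^T U Y_k \preceq Y_k^T W Y_k$ and hence $\Det{Y_k^T U Y_k} \le \Det{Y_k^T W Y_k}$. This gives the upper boundedness we need for optional stopping.

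Next I would invoke Lemma \ref{lemmaConstEta}: for $k < T$ (i.e., before either success or failure), it shows $\Exvc{\tau_{k+1}}{\F_k} \ge \tau_k(1 + \eta \Delta(1 - \tau_k)) \ge \tau_k$ because $1 - \tau_k \ge 0$. Therefore the stopped process $\tau_{k \wedge T}$ is a submartingale with respect to $\F_k$, and since it is bounded above by $1$, the optional stopping theorem applies and yields $\Exv{\tau_0} \le \Exv{\tau_T}$.

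Now I would split the right-hand side according to which event triggered $T$. On the failure event $f_T$, the stopping rule (\ref{eqnFailCondition}) gives $\tau_T \le \tfrac{1}{2}$; on its complement (a success event) we only use the universal bound $\tau_T \le 1$. Hence
\[
  \Exv{\tau_T} \le \tfrac{1}{2} \Prob{f_T} + (1 - \Prob{f_T}) = 1 - \tfrac{1}{2} \Prob{f_T}.
\]
Combining this with Lemma \ref{lemmaTau0}, which gives $\Exv{\tau_0} \ge 1 - \tfrac{1}{2} Z_p(\gamma)$, I get
\[
  1 - \tfrac{1}{2} Z_p(\gamma) \le \Exv{\tau_0} \le \Exv{\tau_T} \le 1 - \tfrac{1}{2}\Prob{f_T},
\]
and rearranging yields $\Prob{f_T} \le Z_p(\gamma)$, which is the claim.

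The only subtle point is ensuring that the submartingale property from Lemma \ref{lemmaConstEta} carries over cleanly to the stopped process $\tau_{k \wedge T}$ and that $T$ is a legitimate stopping time; both are routine since $T$ is defined through $\F_k$-measurable events (the values of $\tau_k$ and the success condition at each step), and the bound $\tau_k \le 1$ supplies integrability. There is no delicate estimate to grind through here — the real work is already concentrated in Lemmas \ref{lemmaConstEta} and \ref{lemmaTau0}, and this lemma is essentially a clean bookkeeping application of optional stopping on top of them.
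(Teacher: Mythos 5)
Your proof is correct and follows essentially the same route as the paper: apply Lemma \ref{lemmaConstEta} to exhibit $\tau_{k\wedge T}$ as a bounded submartingale, invoke optional stopping to get $\Exv{\tau_0}\le\Exv{\tau_T}$, split on the failure event using the threshold $\tau_T\le\tfrac12$, and close with Lemma \ref{lemmaTau0}. The only difference is that you explicitly verify the deterministic bound $\tau_k\le 1$ and the stopping-time/integrability technicalities, which the paper leaves implicit (and you correctly call the process a submartingale where the paper's text contains a slip saying ``supermartingale'').
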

\begin{proof}
  To prove this, we use the stopping time $T$, which we defined as
  the first time at which either the success event or failure event occurs.
  First, if $k < T$, it follows that neither success nor failure have
  occurred yet, so
  we can apply Lemma \ref{lemmaConstEta}, which results in
  \[
    \Exvc{\tau_{k+1}}{\F_k}
    \ge
    \tau_k \left(
      1
      +
      \eta \Delta (1 - \tau_k)
    \right).
  \]
  Therefore $\tau_k$ is a supermartingale for $k < T$.  So, we can apply
  the optional stopping theorem, which produces
  \[
    \Exv{\tau_0} \le \Exv{\tau_T}.
  \]
  So, by the law of total expectation,
  \[
    \Exv{\tau_0}
    \le
    \Exvc{\tau_T}{f_T} \Prob{f_T}
    +
    \Exvc{\tau_T}{\lnot f_T} \Prob{\lnot f_T},
  \]
  where $f_T$ is the failure event at time $T$.
  Applying the definition of the failure event from (\ref{eqnFailCondition}),
  \[
    \Exv{\tau_0}
    \le
    \frac{1}{2} \Prob{f_T}
    +
    1 \left(1 - \Prob{f_T} \right).
  \]
  Therefore, solving for $\Prob{f_T}$,
  \[
    \Prob{f_T}
    \le
    2 \left(1 - \Exv{\tau_0} \right).
  \]
  Now applying Lemma \ref{lemmaTau0},
  \[
    \Prob{f_T}
    \le
    2 \left(1 - \left(1 - \frac{1}{2} Z_p(\gamma) \right) \right)
    =
    Z_p(\gamma),
  \]
  as desired.
\end{proof}

\begin{lemma}[Stopping Time Expectation]
  \label{lemmaStoppingTime}
  If we run Alecton under the conditions of Theorem \ref{thmAlecton},
  then the expected value of the stopping time $T$ will be
  \[
    \Exv{T}
    \le
    \frac{4 n \sigma_a^2 p^2 (p + \epsilon)}{\Delta^2 \gamma \epsilon}
    \log\left( \frac{n p^2}{\gamma q \epsilon} \right).
  \]
\end{lemma}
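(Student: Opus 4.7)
The plan is to convert the multiplicative drift from Lemma~\ref{lemmaConstEta} into an additive drift on $\log(1-\tau_k)$, build a supermartingale whose linear-in-$k$ term measures elapsed time, and extract $\Exv{T}$ via the optional stopping theorem.

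For $k<T$, Lemma~\ref{lemmaConstEta} gives $\Exvc{1-\tau_{k+1}}{\F_k} \le (1-\tau_k)(1-\eta\Delta\tau_k)$. Applying Jensen's inequality to the concave function $\log$ and then the bound $\log(1-x)\le -x$ yields
\[
  \Exvc{\log(1-\tau_{k+1})}{\F_k}
  \le \log(1-\tau_k) - \eta\Delta\tau_k
  \le \log(1-\tau_k) - \frac{\eta\Delta}{2},
\]
where the last step uses $\tau_k > \frac{1}{2}$ since failure has not yet occurred for $k<T$. Hence $W_k = \log(1-\tau_k) + \frac{\eta\Delta}{2}k$ is a supermartingale on the stopped filtration, and the optional stopping theorem produces
\[
  \frac{\eta\Delta}{2}\Exv{T}
  \le \Exv{\log(1-\tau_0)} - \Exv{\log(1-\tau_T)}
  \le -\Exv{\log(1-\tau_T)},
\]
since $\log(1-\tau_0)\le 0$.

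To finish I need an upper bound on $-\Exv{\log(1-\tau_T)}$, and this is the main obstacle: at a success time $\tau_T$ can in principle be arbitrarily close to $1$, so $-\log(1-\tau_T)$ has no a priori pointwise cap. My plan is to replace $T$ by the auxiliary stopping time $T^{*} = T_f \wedge \inf\{k : \tau_k \ge 1-\delta'\}$ with $\delta' = \gamma n^{-1} p^{-2} q\epsilon$. The contrapositive of Lemma~\ref{lemmaTauNoSuccess} shows that $\tau_k\ge 1-\delta'$ forces success at time $k$, so $T\le T^{*}$ and it suffices to bound $\Exv{T^{*}}$. On the failure branch $\tau_{T^{*}}\le\frac{1}{2}$, giving $-\log(1-\tau_{T^{*}})\le \log 2$. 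On the $\tau$-crossing branch $\tau_{T^{*}-1}\le 1-\delta'$, and a one-step estimate of the Alecton update via Lemma~\ref{lemmaDetRank1UpdateLUBound} combined with the AVC variance bound controls the jump and delivers $\Exv{-\log(1-\tau_{T^{*}})} \le \log(1/\delta') + O(1) = \log\bigl(np^2/(\gamma q\epsilon)\bigr) + O(1)$.

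Substituting this into the inequality above and using the algebraic identity $2/(\eta\Delta) = 4n\sigma_a^2 p^2(p+\epsilon)/(\Delta^2\gamma\epsilon)$, which rearranges the step-size condition $\gamma = 2n\sigma_a^2 p^2(p+\epsilon)\eta/(\Delta\epsilon)$, produces exactly the claimed bound (the $O(1)$ additive term is absorbed by widening the logarithm's argument in the nontrivial regime $\delta'\ll 1$). The delicate step is the one-step jump estimate for $\tau_{T^{*}}$: since AVC controls $\tilde A_k$ only in mean-square there is no pointwise bound on the size of the stochastic update, and this is precisely why we switch to the $\tau$-based stopping time $T^{*}$ in place of the geometric success event, where no such handle on $1-\tau_T$ is available.
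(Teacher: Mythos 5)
Your argument follows the paper's route closely: the reduction via Lemma~\ref{lemmaConstEta} and Jensen to the supermartingale $W_k=\log(1-\tau_k)+\frac{\eta\Delta}{2}k$, optional stopping, and the same final algebra to rewrite $\frac{2}{\eta\Delta}\log(1/\delta')$ in the stated form. Where the two writeups differ is in handling $\Exv{\log(1-\tau_T)}$ at the stopping time. The paper simply applies Lemma~\ref{lemmaTauNoSuccess} to write $1-\tau_T\ge\gamma n^{-1}p^{-2}q\epsilon$ and moves on. You observe, correctly, that this step is not airtight as written: that lemma bounds $\tau_k$ only at times when the success condition does \emph{not} hold, while $T$ is defined as the first time success or failure \emph{has} occurred, so on the success branch nothing there constrains $\tau_T$ from overshooting arbitrarily close to $1$ in a single step. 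Noticing this is genuinely to your credit.

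However, your patch does not close the gap. Replacing $T$ by the $\tau$-based stopping time $T^{*}$ is a reasonable move (the drift estimate only needs $\tfrac12<\tau_k<1-\delta'$, which holds for $k<T^{*}$), but the crucial one-step estimate $\Exv{-\log(1-\tau_{T^{*}})}\le\log(1/\delta')+O(1)$ is asserted, not proved, and does not follow from the tools you cite. Lemma~\ref{lemmaDetRank1UpdateLUBound}'s lower bound on $\Det{Y^T(I+B)^TZ(I+B)Y}$ is a perfect square and can vanish, so it yields no positive lower bound on $1-\tau_{T^{*}}$; and the AVC controls the \emph{second moment} of $\tilde A$, which by itself does not control $\Exv{\log(1/(1-\tau_{T^{*}}))}$ when $1-\tau_{T^{*}}$ can concentrate near zero. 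Note also that the Jensen step you used earlier produces an \emph{upper} bound on $\Exvc{\log(1-\tau_{k+1})}{\F_k}$, whereas at the stopping time you need a \emph{lower} bound on $\Exv{\log(1-\tau_{T^{*}})}$ --- the opposite direction --- which must be established by a separate argument. As it stands your writeup, like the paper's, has a gap in this same place; the patch is only a sketch and would require concrete work (for instance a log-moment bound on the one-step ratio $(1-\tau_{k+1})/(1-\tau_k)$, or a Lyapunov function that is bounded below) before the claimed bound is actually proved.
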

\begin{proof}
  First, as above if $k < T$, we can apply Lemma \ref{lemmaConstEta}, which results in
  \begin{align*}
    \Exvc{\tau_{k+1}}{\F_k}
    &\ge
      \tau_k \left(
        1
        +
        \eta \Delta \left( 1 - \tau_k \right)
      \right) \\
    &=
    \tau_k
    +
    \eta \Delta \tau_k \left( 1 - \tau_k \right),
  \end{align*}
  and so
  \[
    \Exvc{1 - \tau_{k+1}}{\F_k}
    \le
    (1 - \tau_k) \left(
      1
      -
      \eta \Delta \tau_k
    \right).
  \]
  Now, if $k < T$, then since failure hasn't occurred yet, $\tau_k > \frac{1}{2}$.
  So,
  \[
    \Exvc{1 - \tau_{k+1}}{\F_k}
    \le
    (1 - \tau_k) \left(
      1
      -
      \frac{1}{2} \eta \Delta
    \right).
  \]
  Now, since the logarithm function is concave, by Jensen's inequality we have
  \[
    \Exvc{\log\left(1 - \tau_{k+1}\right)}{\F_k}
    \le
    \log \Exvc{1 - \tau_{k+1}}{\F_k},
  \]
  and thus by transitivity,
  \begin{align*}
    \Exvc{\log\left(1 - \tau_{k+1}\right)}{\F_k}
    &\le
    \log(1 - \tau_k) 
    +
    \log \left(
      1
      -
      \frac{1}{2} \eta \Delta
    \right) \\
    &\le
    \log(1 - \tau_k) 
    -
    \frac{1}{2} \eta \Delta.
  \end{align*}
  Now, we define a new process $\psi_k$ as
  \[
    \psi_k = \log(1 - \tau_k) + \frac{1}{2} \eta \Delta k.
  \]
  Using this definition, for $k < T$,
  \begin{align*}
    \Exvc{\psi_{k+1}}{\F_k}
    &=
    \Exvc{\log(1 - \tau_{k+1})}{\F_k} + \frac{1}{2} \eta \Delta (k+1) \\
    &\le
    \log(1 - \tau_k)
    -
    \frac{1}{2} \eta \Delta
    +
    \frac{1}{2} \eta \Delta (k+1) \\
    &=
    \log(1 - \tau_k) 
    +
    \frac{1}{2} \eta \Delta k \\
    &=
    \psi_k,
  \end{align*}
  so $\psi_k$ is a supermartingale for $k < T$.  We can therefore apply the optional stopping
  theorem, which states that
  \[
    \Exv{\log(1 - \tau_0)} = \Exv{\psi_0} \ge \Exv{\psi_T}.
  \]
  Since $1 - \tau_0 < 1$, it follows that $\log(1 - \tau_0) < 0$.  Therefore,
  \[
    0
    \ge
    \Exv{\psi_T}
    =
    \Exv{\log(1 - \tau_T)}
    +
    \frac{1}{2} \eta \Delta \Exv{T}.
  \]
  Applying Lemma \ref{lemmaTauNoSuccess},
  \[
    1 - \tau_T \ge \gamma n^{-1} p^{-2} q \epsilon,
  \]
  and so
  \[
    0
    \ge
    \log(\gamma n^{-1} p^{-2} q \epsilon)
    +
    \frac{1}{2} \eta \Delta \Exv{T}.
  \]
  Solving for the expected value of the stopping time,
  \[
    \Exv{T}
    \le
    \frac{2}{\eta \Delta \delta}
    \log\left( \frac{n p^2}{\gamma q \epsilon} \right).
  \]
  Finally, substituting $\eta$ in terms of $\gamma$ results in
  \[
    \Exv{T}
    \le
    \frac{4 n \sigma_a^2 p^2 (p + \epsilon)}{\Delta^2 \gamma \epsilon}
    \log\left( \frac{n p^2}{\gamma q \epsilon} \right),
  \]
  as desired.
\end{proof}

Finally, we prove Theorem \ref{thmAlecton}.

\begin{proof}[Proof of angular part of Theorem \ref{thmAlecton}]
  First, we notice that the total failure event up to time $t$ can be written
  as
  \[
    F_t = f_T \cup \left\{ T > t \right\}.
  \]
  That is, total failure up to time $t$ occurs if either failure happens before
  success (event $f_T$), or neither success nor failure happen before $t$.
  By the union bound,
  \[
    F_t \le \Prob{f_T} + \Prob{T > t}.
  \]
  Applying Markov's inequality,
  \[
    \Prob{F_t} \le \Prob{f_T} + \frac{1}{t} \Exv{T}.
  \]
  Finally, applying Lemmas \ref{lemmaFailureProb} and \ref{lemmaStoppingTime} produces
  \[
    \Prob{F_t}
    \le
    Z_p(\gamma)
    +
    \frac{4 n \sigma_a^2 p^2 (p + \epsilon)}{\Delta^2 \gamma \epsilon t}
    \log\left( \frac{n p^2}{\gamma q \epsilon} \right).
  \]
  This is the desired expression.
\end{proof}

\begin{proof}[Proof of radial part of Theorem \ref{thmAlecton}]
  Recall that in Alecton, $\bar R$ is defined as
  \[
    \bar R
    =
    \frac{1}{L} \sum_{l=0}^{L-1} \hat Y^T \tilde A_l \hat Y.
  \]
  Now, computing the expected distance to the mean,
  \begin{align*}
    &\Exv{\normf{\bar R - \hat Y^T A \hat Y}^2} \\
    &=
    \Exv{
      \normf{
        \frac{1}{L} \sum_{l=0}^{L-1} \hat Y^T \tilde A_l \hat Y
        -
        \hat Y^T A \hat Y
      }^2
    } \\
    &=
    \Exv{
      \normf{
        \frac{1}{L} \sum_{l=0}^{L-1} \hat Y^T (\tilde A_l - A) \hat Y
      }^2
    } \\
    &=
    \frac{1}{L^2}
    \Exv{
      \sum_{k=0}^{L-1} \sum_{l=0}^{L-1}
      \trace{ \hat Y^T (\tilde A_k - A)^T \hat Y \hat Y^T (\tilde A_l - A) \hat Y }
    }
  \end{align*}
  Since $\Exv{\tilde A} = A$, and the $\tilde A_l$ are independently sampled,
  the summand here will be zero unless $k = l$.  Therefore,
  \begin{align*}
    &\Exv{\normf{\bar R - \hat Y^T A \hat Y}^2} \\
    &=
    \frac{1}{L^2}
    \sum_{l=0}^{L-1}  
    \Exv{\trace{\hat Y^T (\tilde A_l - A)^T \hat Y \hat Y^T (\tilde A_l - A) \hat Y}} \\
    &=
    \frac{1}{L}
    \Exv{\trace{\hat Y^T (\tilde A - A)^T \hat Y \hat Y^T (\tilde A - A) \hat Y}} \\
    &\le
    \frac{1}{L}
    \Exv{\trace{\hat Y^T \tilde A^T \hat Y \hat Y^T \tilde A \hat Y}}.
  \end{align*}
  Applying the Alecton variance condition, and recalling that
  $\trace{\hat Y \hat Y^T} = p$, results in
  \[
    \Exv{\normf{\bar R - \hat Y^T A \hat Y}^2}
    \le
    \frac{p^2 \sigma_r^2}{L}.
  \]
  We can now apply Markov's inequality to this expression. 
  This results in, for any constant $\psi > 0$,
  \begin{dmath*}
    \Prob{
      \normf{\bar R - \hat Y^T A \hat Y}^2
      \ge 
      \psi
    }
    \le
    \frac{p^2 \sigma_r^2}{L \psi},
  \end{dmath*}
  which is the desired result.
\end{proof}

\section{Proofs of Lemmas}
\label{ssLemmaProofs}
First, we prove the lemmas used above to demonstrate the general result.

\begin{proof}[Proof of quadratic rational lower bound lemma (Lemma \ref{lemma-quadlb})]
  Expanding the product results in
  \begin{dmath*}
    \left( 1 + bx + cx^2 \right)
    \left( 1 + (2a - b) x - c x^2 \right)
    =
    1 + ((2a - b) + b)x + (c - c + (2a - b) b)x^2 + ((2a - b)c - bc)x^3 - c^2 x^4
    =
    1 + 2 a x + (2 ab - b^2) x^2 + 2 (a - b) c x^3 - c^2 x^4
    =
    1 + 2 a x + a^2 x^2 - (a^2 - 2ab + b^2) x^2 + 2 (a - b) c x^3 - c^2 x^4
    =
    1 + 2 a x + a^2 x^2 - x^2 \left((a - b)^2 - 2 (a - b) c x + c^2 x^2\right)
    =
    (1 + a x)^2 - x^2 ((a - b) - cx)^2
    \le 
      (1 + a x)^2.
  \end{dmath*}
  Dividing both sides by $1 + bx + cx^2$ (which we can do since this is assumed to be positive)
  reconstructs the desired identity.
\end{proof}

\begin{proof}[Proof of Lemma \ref{lemmaTau0}]
  We first note that, by the symmetry of the multivariate Gaussian
  distribution, initializing $Y_0$ uniformly at random such that
  $Y_0^T Y_0 = I$ is equivalent
  to initializing the entries of $Y_0$ as
  independent standard normal random variables, for the purposes
  of computing $\tau_0$.  Under
  this initialization strategy, $\Exv{\tau_0}$ is
  \begin{align*}
    \Exv{\tau_0}
    &=
    \Exv{\frac{\Det{Y_0^T U Y_0}}{\Det{Y_0^T W Y_0}}} \\
    &=
    \Exv{
      \frac{
        \Det{Y_0^T U Y_0}
      }{
        \Det{\gamma n^{-1} p^{-2} q Y_0^T (I - U) Y_0 + Y_0^T U Y_0}
      }
    }.
  \end{align*}
  Now, let $X \in \R^{q \times p}$ be the component of $Y_0$ that
  is in the column space
  of $U$, and let $Z \in \R^{(n-q) \times p}$ be the component
  of $Y_0$ in the null space of $U$.  Then,
  \[
    \Exv{\tau_0}
    =
    \Exv{\frac{\Det{X^T X}}{\Det{\gamma n^{-1} p^{-2} q Z^T Z + X^T X}}}.
  \]
  Since $X$ and $Z$ are selected orthogonally from a Gaussian
  random matrix, they must be independent, so we can take their
  expected values independently.  Taking the
  expected value first with respect to $Z$, we notice that $\Det{V}^{-1}$ is a 
  convex function in $V$, and so by Jensen's inequality,
  \begin{align*}
    \Exv{\tau_0}
    &\ge
    \Exv{\frac{\Det{X^T X}}{\Det{\gamma n^{-1} p^{-2} q \Exv{Z^T Z} + X^T X}}} \\
    &\ge
    \Exv{\frac{\Det{X^T X}}{\Det{\gamma n^{-1} p^{-2} q (n - q) I + X^T X}}} \\
    &\ge
    \Exv{\frac{\Det{X^T X}}{\Det{\gamma p^{-2} q I + X^T X}}} \\
    &=
    \Exv{\Det{I + \gamma p^{-2} q (X^T X)^{-1}}^{-1}}.
  \end{align*}
  Now, let $V \in \R^{q \times p}$ be a random full-rank
  projection matrix, selected independently of $X$.  Then,
  \[
    \Exv{V V^T} = \frac{p}{q} I,
  \]
  and so
  \[
    \Exv{\tau_0}
    \ge
    \Exv{\Det{I + \gamma p^{-1} \Exvc{X^T V V^T X}{X}^{-1}}^{-1}}.
  \]
  Applying Jensen's inequality again,
  \[
    \Exv{\tau_0}
    \ge
    \Exv{\Exvc{\Det{I + \gamma p^{-1} \left(X^T V V^T X \right)^{-1}}^{-1}}{X}}.
  \]
  and by the law of total expectation,
  \[
    \Exv{\tau_0}
    \ge
    \Exv{\Det{I + \gamma p^{-1} \left(X^T V V^T X \right)^{-1}}^{-1}}.
  \]
  Now, since $V$ and $X$ were sampled independently, it follows that
  $V^T X$ is sampled as a standard normal random matrix in $\R^{p \times p}$.
  If we call this matrix $R$, then
  \begin{align*}
    \Exv{\tau_0}
    &\ge
    \Exv{\Det{I + \gamma p^{-1} \left(R^T R \right)^{-1}}^{-1}} \\
    &=
    1 - \frac{1}{2} Z_p(\gamma),
  \end{align*}
  as desired.
\end{proof}

\begin{lemma}
  \label{lemmaDetRank1Update}
  For any $B \in \R^{n \times n}$, any $Y \in \R^{n \times m}$, and any symmetric positive-
  semidefinite $Z \in \R^{n \times n}$, if either $B$ is rank-1 or $m = 1$, then
  \begin{align*}
    &\Det{Y^T (I + B)^T Z (I + B) Y} \\
    &\hspace{1em}=
    \Det{Y^T Z Y} \Big( \left(\trace{Y (Y^T Z Y)^{-1} Y^T Z B} + 1\right)^2 \\
    &\hspace{2em}+
    \trace{Y (Y^T Z Y)^{-1} Y^T B^T Z B} \\
    &\hspace{2em}-
    \trace{Z Y (Y^T Z Y)^{-1} Y^T Z B Y (Y^T Z Y)^{-1} Y^T B^T} \Big).
  \end{align*}
\end{lemma}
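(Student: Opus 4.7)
The plan is to expand the inner quadratic form $(I+B)^T Z (I+B) = Z + B^T Z + Z B + B^T Z B$, so that
\[
Y^T (I+B)^T Z (I+B) Y = M + \Delta,
\]
with $M = Y^T Z Y$ and $\Delta = Y^T Z B Y + Y^T B^T Z Y + Y^T B^T Z B Y$. Under either hypothesis (rank-$1$ $B$, or $m = 1$) the perturbation $\Delta$ has rank at most two, so it admits a factorization $\Delta = P Q P^T$ with $P \in \R^{m \times 2}$ and $Q \in \R^{2 \times 2}$, and the identity will follow from the matrix determinant lemma $\Det{M + P Q P^T} = \Det{M} \cdot \Det{I + Q P^T M^{-1} P}$ combined with a direct computation of the resulting $2 \times 2$ determinant.

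Concretely, in the rank-$1$ case I would write $B = u v^T$ and set $a = Y^T v$, $b = Y^T Z u$. Then $Y^T Z B Y = b a^T$, $Y^T B^T Z Y = a b^T$, and $Y^T B^T Z B Y = (u^T Z u)\, a a^T$, so
\[
\Delta = a b^T + b a^T + (u^T Z u)\, a a^T = P Q P^T, \quad P = [\,a\ b\,], \ Q = \left[\begin{array}{cc} u^T Z u & 1 \\ 1 & 0 \end{array}\right].
\]
Writing $N = Y M^{-1} Y^T$, the Gram matrix $P^T M^{-1} P$ has entries $\alpha := a^T M^{-1} a = v^T N v$, $\beta := a^T M^{-1} b = \trace{N Z B}$, and $\gamma := b^T M^{-1} b = u^T Z N Z u$. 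Combined with $B = u v^T$ these identify $(u^T Z u)\,\alpha = \trace{N B^T Z B}$ and $\alpha \gamma = \trace{Z N Z B N B^T}$. A direct expansion of the $2\times 2$ determinant $\Det{I + Q P^T M^{-1} P}$ then yields $(1 + \beta)^2 + (u^T Z u)\,\alpha - \alpha \gamma$, which is exactly the claimed expression once every quantity is written back in terms of $Y$, $Z$, and $B$.

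The case $m = 1$ is a short scalar check. Writing $Y = y$ and $s = y^T Z y$, the left side equals $s + 2\, y^T Z B y + y^T B^T Z B y$, while $N = y y^T / s$ gives $\trace{N Z B} = y^T Z B y / s$, $\trace{N B^T Z B} = y^T B^T Z B y / s$, and $\trace{Z N Z B N B^T} = (y^T Z B y)^2 / s^2 = (\trace{N Z B})^2$, so the $\beta^2$ term in $(1+\beta)^2$ cancels against the final subtracted trace and the identity holds after multiplying through by $s$.

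The main obstacle will be the bookkeeping in the rank-$1$ step: verifying that $Q$ has the correct off-diagonal structure so that $a b^T + b a^T$ appears with coefficient $1$ in $P Q P^T$ (rather than, say, $2$), and confirming that the mixed $\alpha \beta$ terms produced by the $2 \times 2$ determinant cancel exactly, leaving only $(1+\beta)^2 + (u^T Z u)\,\alpha - \alpha\gamma$. Once these sign and coefficient details are settled, every trace on the right maps cleanly onto an entry of the $2 \times 2$ computation and the lemma follows.
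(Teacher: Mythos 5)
Your proof is correct and follows essentially the same route as the paper's: in the rank-$1$ case, factor the rank-two perturbation as $PQP^T$ with a $2\times 2$ core, apply the matrix determinant lemma, and expand the resulting $2\times 2$ determinant (the paper writes $B = uv^T$ with the normalization $u^T Z u = 1$ and core $\bigl(\begin{smallmatrix}0&1\\1&1\end{smallmatrix}\bigr)$, while you keep $u^T Z u$ general, which incidentally avoids a separate argument for the degenerate case $Zu = 0$). The $m=1$ scalar check is also the same.
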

\begin{proof}
  We will prove this separately for each case.  First, if $m = 1$, then $Y$
  is a vector, and the desired expression simplifies to
  \begin{align*}
    &Y^T (I + B)^T Z (I + B) Y \\
    &\hspace{1em}=
    Y^T Z Y \left((Y^T Z Y)^{-1} Y^T Z B Y + 1\right)^2 \\
    &\hspace{2em}+
    \trace{Y^T B^T Z B Y} \\
    &\hspace{2em}-
    (Y^T Z Y)^{-1} (Y^T Z B Y)^2.
  \end{align*}
  Straightforward evaluation indicates that this expression holds in this case.

  Next, we consider the case where $B$ is rank-1.
  In this case, we can rewrite it as $B = u v^T$
  for vectors $u$ and $v$, such that $u^T Z u = 1$.  Then,
  \begin{align*}
    &\Det{Y^T (I + B)^T Z (I + B) Y} \\
    &\hspace{1em}=
    \Det{Y^T (I + u v^T)^T Z (I + u v^T) Y} \\
    &\hspace{1em}=
    \Det{
      Y^T Z Y
      +
      2 Y^T Z u v^T Y
      +
      Y^T v v^T Y
    }
  \end{align*}
  If we define $M = Y^T Z Y$ and
  \[
    W = \left[ \begin{array}{c c} Y^T Z u & Y^T v \end{array} \right],
  \]
  then
  \begin{align*}
    &\Det{Y^T (I + B)^T Z (I + B) Y} \\
    &\hspace{1em}=
    \Det{
      M
      +
      W
      \left[ \begin{array}{c c} 0 & 1 \\ 1 & 1 \end{array} \right]
      W^T
    }.
  \end{align*}
  Applying the matrix determinant lemma, and recalling that
  \[
    \left[ \begin{array}{c c} 0 & 1 \\ 1 & 1 \end{array} \right]^{-1}
    =
    \left[ \begin{array}{c c} -1 & 1 \\ 1 & 0 \end{array} \right]
  \]
  and
  \[
    \left| \begin{array}{c c} 0 & 1 \\ 1 & 1 \end{array} \right|
    =
    -1,
  \]
  we produce
  \begin{align*}
    &- \det{M}^{-1} \Det{Y^T (I + B)^T Z (I + B) Y} \\
    &\hspace{1em}=
    -\Det{
      \left[ \begin{array}{c c} -1 & 1 \\ 1 & 0 \end{array} \right]
      +
      W^T M^{-1} W
    } \\
    &\hspace{1em}=
    \left| \begin{array}{c c} 
      u^T Z Y M^{-1} Y^T Z u - 1 & v^T Y M^{-1} Y^T Z u + 1 \\
      v^T Y M^{-1} Y^T Z u + 1 & v^T Y M^{-1} Y^T v
    \end{array} \right| \\
    &\hspace{1em}=
    \left(u^T Z Y M^{-1} Y^T Z u - 1\right)
    \left(v^T Y M^{-1} Y^T v \right) \\
    &\hspace{2em}-
    \left(v^T Y M^{-1} Y^T Z u + 1\right)^2 \\
    &\hspace{1em}=
    u^T Z Y M^{-1} Y^T Z u v^T Y M^{-1} Y^T v \\
    &\hspace{2em}-
    v^T Y M^{-1} Y^T v u^T Z u \\
    &\hspace{2em}-
    \left(v^T Y M^{-1} Y^T Z u + 1\right)^2.
  \end{align*}
  Rewriting this in terms of the matrix $B = u v^T$,
  \begin{align*}
    &- \det{M}^{-1} \Det{Y^T (I + B)^T Z (I + B) Y} \\
    &\hspace{1em}=
    \trace{Z Y M^{-1} Y^T Z B Y M^{-1} Y^T B^T} \\
    &\hspace{2em}-
    \trace{Y M^{-1} Y^T B^T Z B} \\
    &\hspace{2em}-
    \left(\trace{Y M^{-1} Y^T Z B} + 1\right)^2.
  \end{align*}
  Substitution produces the desired result.
\end{proof}

\begin{proof}[Proof of Lemma \ref{lemmaDetRank1UpdateLUBound}]
  First, for the lower bound, we notice that
  \[
    Z Y (Y^T Z Y)^{-1} Y^T Z
    \preceq
    Z,
  \]
  since the interior of the left expression is a projection matrix.
  This lets us conclude that
  \begin{align*}
    &\trace{Y (Y^T Z Y)^{-1} Y^T B^T Z B} \\
    &\hspace{1em}\ge
    \trace{Z Y (Y^T Z Y)^{-1} Y^T Z B Y (Y^T Z Y)^{-1} Y^T B^T}.
  \end{align*}
  Appling this to the result of Lemma \ref{lemmaDetRank1Update}
  produces the desired lower bound.

  For the upper bound, recall that, by the Cauchy-Schwarz inequality, for any 
  rank-1 matrix $A$,
  \[
    \trace{A}^2 \le \trace{A^T A}.
  \]
  Since $B$ is rank-1, it follows that
  \begin{align*}
    &\trace{Y (Y^T Z Y)^{-1} Y^T Z B} \\
    &\hspace{1em}\le
    \trace{Z Y (Y^T Z Y)^{-1} Y^T Z B Y (Y^T Z Y)^{-1} Y^T B^T}.
  \end{align*}
  Appling this to the result of Lemma \ref{lemmaDetRank1Update}
  produces the desired upper bound.
\end{proof}

\begin{lemma}
  \label{lemma0XI}
  For any symmetric matrix $0 \preceq X \preceq I$,
  \[
    \trace{I - X} \ge 1 - \Det{X}.
  \]
\end{lemma}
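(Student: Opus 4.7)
The plan is to diagonalize $X$ and reduce the matrix statement to a scalar inequality on its eigenvalues. Since $X$ is symmetric with $0 \preceq X \preceq I$, we can write $X = Q \Lambda Q^T$ where $Q$ is orthogonal and $\Lambda$ is diagonal with entries $x_1, \ldots, x_n \in [0,1]$. Under this decomposition, $\trace{I - X} = \sum_{i=1}^n (1 - x_i)$ and $\Det{X} = \prod_{i=1}^n x_i$, so the lemma reduces to showing
\[
  \sum_{i=1}^n (1 - x_i) \ge 1 - \prod_{i=1}^n x_i
\]
whenever each $x_i \in [0,1]$. This is a standard form of the Weierstrass product inequality.

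I would prove the scalar inequality by induction on $n$. The case $n=1$ is an equality. For the inductive step, assuming $\sum_{i=1}^{n-1}(1 - x_i) \ge 1 - \prod_{i=1}^{n-1} x_i$, it suffices to verify
\[
  (1 - x_n) + \left(1 - \prod_{i=1}^{n-1} x_i\right) \ge 1 - \prod_{i=1}^n x_i,
\]
which after rearrangement becomes $(1 - x_n)\bigl(1 - \prod_{i=1}^{n-1} x_i\bigr) \ge 0$. Both factors are nonnegative since each $x_i \in [0,1]$, so this is immediate.

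I do not anticipate any real obstacle. The only subtlety is that the hypothesis $X \preceq I$ (equivalently $x_i \le 1$) is essential: without it the right-hand side $1 - \Det{X}$ could be driven arbitrarily negative by a single large eigenvalue and the inequality would become vacuous in one direction but could fail in the other, so the bound $x_i \in [0,1]$ must be invoked explicitly in the inductive step to ensure the second factor is nonnegative.
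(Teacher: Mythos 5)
Your proof is correct. Both you and the paper start by reducing the matrix statement to the eigenvalue inequality $\sum_{i=1}^n (1 - x_i) \ge 1 - \prod_{i=1}^n x_i$ with each $x_i \in [0,1]$, but then diverge: you prove the scalar inequality by induction on the number of eigenvalues (the Weierstrass product inequality), observing in the inductive step that the excess $(1 - x_n)\bigl(1 - \prod_{i<n} x_i\bigr)$ is nonnegative, whereas the paper treats $f(x_1,\dots,x_n) = \sum_i (1-x_i) - \bigl(1 - \prod_i x_i\bigr)$ as a differentiable function, computes $\partial f / \partial x_j = -1 + x_j^{-1}\prod_i x_i \le 0$, and concludes that $f$ is minimized at $X = I$, where $f = 0$. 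The inductive approach is more elementary (no calculus, no implicit smoothness or compactness argument about where minima are attained) and makes the equality case $X = I$ transparent from the algebra; the paper's monotonicity argument is shorter but implicitly requires more care at the boundary $x_j = 0$ where the written derivative formula is singular. Both are valid routes to the same reduction.
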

\begin{proof}
  If $x_1, x_2, \ldots, x_p$ are the eigenvalues of $x$, then this statement is
  equivalent to
  \[
    \left(\sum_{i=1}^p (1 - x_i)\right) - \left(1 - \prod_{i=1}^p x_i\right) > 0.
  \]
  If we let $f(X)$ denote this expression, then
  \[
    \frac{\partial f}{\partial x_j}
    =
    -1 + \frac{1}{x_j} \prod_{i=1}^p x_i
    \le
    0.
  \]
  It follows that the minimum of $f$ is attained at $X = I$.  However, when $X = I$,
  $f(X) = 0$, and so $f > 0$, which proves the lemma.
\end{proof}

\begin{proof}[Proof of Lemma \ref{lemmaPhiTau}]
  From the definition of $\phi_k$, if we let $Z^2 = \left(Y_k^T W Y_k \right)^{-1}$
  for $Z$ positive semidefinite, then
  \begin{dmath*}
    \phi_k
    =
    \trace{I - Y_k^T U^T U Y_k \left(Y_k^T W Y_k \right)^{-1}}
    =
    \trace{I - Z Y_k^T U^T U Y_k Z}.
  \end{dmath*}
  Since $0 \preceq Z Y_k^T U^T U Y_k Z \preceq I$, we can apply Lemma \ref{lemma0XI},
  which produces
  \begin{dmath*}
    \phi_k
    \ge
    1 - \Det{Z Y_k^T U^T U Y_k Z}
    =
    1 - \frac{\Det{Y_k^T U^T U Y_k}}{\Det{Y_k^T W Y_k}}
    =
    1 - \tau_k,
  \end{dmath*}
  which is the desired expression.
\end{proof}

\begin{proof}[Proof of Lemma \ref{lemmaTauNoSuccess}]
  Since the success event does not occur, it follows that there exists a
  $z \in \R^p$ such that
  \[
    \frac{\norm{U Y_k z}^2}{\norm{Y_k z}^2} \le 1 - \epsilon.
  \]
  If we let
  \[
    \hat Y_k = Y_k \left( Y_k^T Y_k \right)^{-\frac{1}{2}},
  \]
  and define $\hat z$ as the unit vector such that
  \[
    \hat z \propto \left( Y_k^T Y_k \right)^{\frac{1}{2}} z,
  \]
  then we can rewrite this as
  \[
    \norm{U \hat Y_k \hat z}^2 \le 1 - \epsilon.
  \]
  It follows that $\hat Y_k^T U \hat Y_k$ has an eigenvalues less than $1 - \epsilon$.

  Now, expanding $\tau_k$,
  \begin{dmath*}
    \tau_k
    =
    \frac{\Det{Y_k^T U Y_k}}{\Det{Y_k^T W Y_k}}
    =
    \frac{\Det{\hat Y_k^T U \hat Y_k}}{\Det{\hat Y_k^T W \hat Y_k}}
    =
    \Det{
      (1 - \gamma n^{-1} p^{-2} q) I
      +
      \gamma n^{-1} p^{-2} q \left(\hat Y_k^T U \hat Y_k \right)^{-1}
    }^{-1}
  \end{dmath*}
  Since this is a matrix that has eigenvalues
  between $0$ and $1$, it follows that its determinant is less than
  each of its eigenvalues.  From the analysis above, we can
  bound one of the eigenvalues of this matrix.  Doing this results
  in
  \begin{dmath*}
    \tau_k
    \le
    \left(
      (1 - \gamma n^{-1} p^{-2} q)
      +
      \gamma n^{-1} p^{-2} q \left(1 - \epsilon \right)^{-1}
    \right)^{-1}
    =
    \frac{
      1 - \epsilon
    }{
      \gamma n^{-1} p^{-2} q
      + 
      (1 - \gamma n^{-1} p^{-2} q) (1 - \epsilon)
    }
    =
    1
    -
    \frac{
      \gamma n^{-1} p^{-2} q \epsilon
    }{
      \gamma n^{-1} p^{-2} q
      + 
      (1 - \gamma n^{-1} p^{-2} q) (1 - \epsilon)
    }
    \le
    1 - \gamma n^{-1} p^{-2} q \epsilon,
  \end{dmath*}
  as desired.
\end{proof}

\begin{lemma}
  \label{lemmaZ1sub}
  Let $x$ be a standard normal random variable, and $a \in \R$ a constant.  Then
  \[
    \Exv{\frac{a^2}{x^2 + a^2}}
    =
    \exp\left(\frac{a^2}{2}\right) \sqrt{\frac{\pi a^2}{2}} 
      \mathrm{erfc}\left( \sqrt{\frac{a^2}{2}} \right).
  \]
\end{lemma}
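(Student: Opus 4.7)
The plan is to reduce this expectation to a standard Gaussian tail integral via the exponential (Schwinger) parametrization
\[
  \frac{1}{x^2+a^2} = \int_0^\infty e^{-(x^2+a^2)t}\,dt.
\]
Multiplying by $a^2$, taking expectations, and interchanging the order of integration (both integrands are positive, so Fubini applies) gives
\[
  \mathrm{E}\!\left[\frac{a^2}{x^2+a^2}\right]
  = a^2 \int_0^\infty e^{-a^2 t}\,\mathrm{E}\!\left[e^{-t x^2}\right]\,dt.
\]
For a standard normal $x$, the inner expectation is the familiar $(1+2t)^{-1/2}$, so the problem collapses to a single-variable integral in $t$.

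Next I would make the substitution $u = \sqrt{1+2t}$, under which $dt = u\,du$, the domain becomes $[1,\infty)$, and the factor $u$ cancels the $(1+2t)^{-1/2}$ coming from the Gaussian. Since $t = (u^2-1)/2$, the exponent $-a^2 t$ becomes $a^2/2 - a^2 u^2/2$, and the integral reduces to
\[
  a^2 e^{a^2/2}\int_1^\infty e^{-a^2 u^2/2}\,du.
\]
A second substitution $v = |a|\,u/\sqrt{2}$ then rewrites this as a tail integral of $e^{-v^2}$ starting from $\sqrt{a^2/2}$, and invoking the definition $\mathrm{erfc}(z) = \frac{2}{\sqrt{\pi}}\int_z^\infty e^{-v^2}\,dv$ produces exactly $\sqrt{\pi a^2/2}\,e^{a^2/2}\,\mathrm{erfc}(\sqrt{a^2/2})$ after collecting constants.

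There is no genuine obstacle here; the computation is a sequence of standard manipulations, and the only care required is to track prefactors through the two substitutions and to observe that the final expression is symmetric in the sign of $a$ (everything depends only on $a^2$), so the case $a<0$ is automatic and the $a=0$ case holds by continuity.
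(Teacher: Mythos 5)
Your proof is correct, but it takes a genuinely different route from the paper's. The paper evaluates $\Exv{a^2/(x^2+a^2)}$ by recognizing the integral as a convolution-type pairing, taking Fourier transforms of both the Cauchy kernel $a/(x^2+a^2)$ (giving $\sqrt{2\pi}\,e^{-a|\omega|}$) and the Gaussian density (a Fourier eigenfunction), applying Parseval's theorem, and then completing the square and substituting to reach the $\mathrm{erfc}$ form. You instead use the Schwinger parametrization $\frac{1}{x^2+a^2} = \int_0^\infty e^{-(x^2+a^2)t}\,dt$, swap the order of integration by Tonelli (nonnegative integrand), use the elementary Gaussian moment $\Exv{e^{-tx^2}} = (1+2t)^{-1/2}$, and then do two changes of variable ($u=\sqrt{1+2t}$, then $v=|a|u/\sqrt{2}$) to land directly on the tail integral defining $\mathrm{erfc}$. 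I verified the bookkeeping: after the second substitution the prefactor is $a^2\cdot\frac{\sqrt{2}}{|a|}\cdot e^{a^2/2} = \sqrt{2}\,|a|\,e^{a^2/2}$, and multiplying the remaining tail integral $\frac{\sqrt{\pi}}{2}\mathrm{erfc}(\sqrt{a^2/2})$ gives exactly $\sqrt{\pi a^2/2}\,e^{a^2/2}\,\mathrm{erfc}(\sqrt{a^2/2})$. Your approach is more elementary — it sidesteps the Fourier/Parseval machinery entirely and needs only the Gaussian normalization and Fubini — and you are also slightly more careful than the paper about the sign of $a$ (the paper tacitly treats $a>0$ in writing $e^{-a|\omega|}$, whereas you keep $|a|$ and note the identity depends only on $a^2$). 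What the paper's Fourier route buys is a one-line reduction to a standard transform pair, at the cost of invoking Plancherel/Parseval and the eigenfunction property; your route trades that for two explicit substitutions that are arguably easier to audit.
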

\begin{proof}
  By the definition of expected value, since $x$ is normally distributed,
  \[
    \Exv{\frac{a^2}{x^2 + a^2}}
    =
    \int_{-\infty}^{\infty} \left( \frac{a^2}{x^2 + a^2} \right)
      \left( \frac{1}{\sqrt{2 \pi}} \exp\left( -\frac{x^2}{2} \right) \right) dx.
  \]
  If we let $\F$ denote the fourier transform, then
  \[
    \F\left[ \frac{a}{x^2 + a^2} \right]
    =
      \sqrt{2 \pi} \exp\left( - a \Abs{\omega} \right).
  \]
  Furthermore, since the Gaussian functions are eigenfunctions of the 
  Fourier transform, we know that
  \[
    \F\left[ \frac{1}{\sqrt{2 \pi}} \exp\left( -\frac{x^2}{2} \right) \right]
    =
       \frac{1}{\sqrt{2 \pi}} \exp\left( -\frac{\omega^2}{2} \right).
  \]
  And so, by Parseval's theorem,
  \begin{dmath*}
    \Exv{\frac{1}{x^2 + 1}}
    =
    a \int_{-\infty}^{\infty} \F\left[ \frac{a}{x^2 + a^2} \right]
      \F\left[ \frac{1}{\sqrt{2 \pi}} \exp\left( -\frac{x^2}{2} \right) \right] d\omega
    =
    a \int_{-\infty}^{\infty} 
      \sqrt{2 \pi} \exp\left( - a \Abs{\omega} \right)
      \left( \frac{1}{\sqrt{2 \pi}} \exp\left( -\frac{\omega^2}{2} \right) \right) d\omega
    =
    a \int_{0}^{\infty} 
      \exp\left(  - a \omega - \frac{\omega^2}{2} \right) d\omega
    =
    a \exp\left(\frac{a^2}{2}\right)
    \int_{0}^{\infty} 
      \exp\left( -\frac{a^2}{2} - a \omega - \frac{\omega^2}{2} \right) d\omega.
  \end{dmath*}
  Letting $u = \frac{\omega + a}{\sqrt{2}}$ and $d\omega = \sqrt{2} du$, so
  \begin{dmath*}
    \Exv{\frac{1}{x^2 + 1}}
    =
    a \exp\left(\frac{a^2}{2}\right)
    \int_{\frac{a}{\sqrt{2}}}^{\infty} 
      \exp\left( -u^2 \right) \sqrt{2} du
    =
    \exp\left(\frac{a^2}{2}\right) \sqrt{\frac{\pi a^2}{2}} 
      \mathrm{erfc}\left( \sqrt{\frac{a^2}{2}} \right),
  \end{dmath*}
  as desired.
\end{proof}

\begin{proof}[Proof of Lemma \ref{lemmaZ1}]
  We start by stating the definition of $Z_1(\gamma)$.  For some Gaussian
  random matrix $R \in \R^{1 \times 1}$,
  \[
    Z_1(\gamma)
    =
    2 \left(1 - \Exv{\Det{I + \gamma (R^T R)^{-1}}^{-1}}\right).
  \]
  Since $R$ is a scalar, this reduces to
  \begin{align*}
    Z_1(\gamma)
    &=
    2 \left(1 - \Exv{\left(1 + \gamma R^{-2} \right)^{-1}}\right) \\
    &=
    \Exv{ 2 \left(1 - \frac{1}{1 + \gamma R^{-2}} \right) } \\
    &=
    \Exv{ 2 \frac{\gamma R^{-2}}{1 + \gamma R^{-2}} } \\
    &=
    2 \Exv{ \left(\frac{\gamma}{R^2 + \gamma} \right) }.
  \end{align*}
  Applying Lemma \ref{lemmaZ1sub},
  \begin{align*}
    Z_1(\gamma)
    &=
    2 \exp\left(\frac{\gamma}{2}\right) \sqrt{\frac{\pi \gamma}{2}} 
      \mathrm{erfc}\left( \sqrt{\frac{\gamma}{2}} \right) \\
    &=
    \sqrt{2 \pi \gamma} \exp\left(\frac{\gamma}{2}\right)
    \mathrm{erfc}\left( \sqrt{\frac{\gamma}{2}} \right).
  \end{align*}
  This is the desired expression.  Furthermore, since for all $x$,
  \[
    \mathrm{erfc}\left( \sqrt{x} \right)
    \le
    \exp\left(x\right),
  \]
  we can also produce the desired upper bound on $Z_1$,
  \[
    Z_1 \le \sqrt{2 \pi \gamma}.
  \]
\end{proof}

\subsection{Proofs of Alecton Variance Condition Lemmas}

Next, we prove the Alecton Variance Conditions lemmas for
the distributions mentioned in the body of the paper.

\subsubsection{Entrywise Sampling}

To analyze the entrywise sampling case, we need some lemmas that
makes the incoherence condition more accessible.
\begin{lemma}
  \label{lemmaIncoherenceCommute}
  If matrix $A$ is symmetric and incoherent with parameter $\mu$, and $B$ is a
  symmetric matrix that commutes with $A$, then $B$ is incoherent with parameter
  $\mu$.
\end{lemma}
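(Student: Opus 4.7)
The plan is to exploit the simultaneous diagonalizability of commuting symmetric matrices. First I would invoke the standard spectral-theoretic fact that, since $A$ and $B$ are symmetric with $AB = BA$, there exists an orthonormal basis $v_1,\ldots,v_n$ of $\R^n$ in which each $v_i$ is a joint eigenvector of both $A$ and $B$. The short justification is that $B$ preserves each eigenspace of $A$ --- if $Ax = \alpha x$ then $A(Bx) = BAx = \alpha(Bx)$, so $Bx$ lies in the same $A$-eigenspace as $x$ --- hence $B$ restricts to a symmetric operator on every $A$-eigenspace, and each such restriction admits an orthonormal eigenbasis by the spectral theorem; concatenating these bases across all $A$-eigenspaces yields the desired joint basis.

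Next I would apply the hypothesized incoherence of $A$ to this basis. Because each $v_i$ is a unit eigenvector of $A$, the assumption directly supplies $|e_j^T v_i| \le \mu n^{-1/2}$ for every $i$ and every standard basis vector $e_j$. The very same $v_i$ form a complete orthonormal eigenbasis of $B$, so this is exactly the incoherence condition for $B$ with parameter $\mu$, which is the conclusion of the lemma.

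The main subtlety worth flagging is the treatment of degenerate eigenvalues of $B$. A unit eigenvector of $B$ that lies inside a $B$-eigenspace of dimension greater than one need not itself be contained in a single $A$-eigenspace, and such a vector --- a linear combination of joint eigenvectors $v_i$ pulled from several $A$-eigenspaces --- could in principle fail the sharp $\mu n^{-1/2}$ bound. I would handle this by reading the incoherence condition as a statement about the orthonormal eigenbasis produced by the spectral decomposition, matching the way the paper's definition is phrased in terms of an indexed collection $u_i$: exhibiting a single orthonormal eigenbasis of $B$ that meets the bound --- namely the joint basis $\{v_i\}$ above --- then suffices to certify incoherence of $B$ with parameter $\mu$.
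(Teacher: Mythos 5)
Your approach matches the paper's: the paper's entire proof is the single observation that commuting symmetric matrices share eigenvectors, so "the set of eigenvectors that shows that $A$ is incoherent with parameter $\mu$ will also show that $B$ has the same property." Your more careful formulation in terms of a joint orthonormal eigenbasis $\{v_i\}$ is a genuine improvement in rigor. But the subtlety you flag cuts deeper than your resolution handles, because you end up reading the incoherence definition two different ways on the two sides of the implication. You read incoherence of $B$ existentially --- it suffices to exhibit \emph{some} orthonormal eigenbasis of $B$ meeting the $\mu n^{-1/2}$ bound, namely $\{v_i\}$. Fine. But to establish $|e_j^T v_i| \le \mu n^{-1/2}$ in the first place, you apply the hypothesis on $A$ to each $v_i$. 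If that hypothesis is also existential --- some orthonormal eigenbasis $\{u_i\}$ of $A$ satisfies the bound --- then this step does not follow: when $A$ has a repeated eigenvalue, $B$'s action inside that eigenspace forces the joint basis $\{v_i\}$ to differ from whatever $\{u_i\}$ certifies $A$'s incoherence. Concretely, take $n = 2$, $A = I$, certified by $u_1 = (1,1)/\sqrt{2}$, $u_2 = (1,-1)/\sqrt{2}$ with $\mu = 1$, and $B = \mathrm{diag}(1,2)$. Then $B$ commutes with $A$, its only orthonormal eigenbasis is $\{e_1, e_2\}$, and $|e_1^T e_1| = 1 > \mu n^{-1/2}$. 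So under the existential reading the lemma as stated is false for $A$ with repeated eigenvalues. Your argument is only watertight if the hypothesis on $A$ is read in the strong "every unit eigenvector" sense (in which case every $v_i$ automatically qualifies), but then you are not using the same definition for $A$ and $B$.

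This is a latent issue in the paper's lemma, not one you introduced: the one-line proof in the paper has exactly the same gap, and the paper's actual uses invoke the lemma only on matrices $W$ that are diagonal in the same certifying eigenbasis $u_1,\dots,u_n$ of $A$ used to build $U$ (so the bases coincide and everything goes through). Two clean ways to repair it: adopt the strong reading of incoherence for $A$, under which your proof is complete as written; or restrict the lemma to $B$ that are simultaneously diagonalizable with $A$ in the \emph{certifying} basis (e.g., polynomials in $A$), which covers every application in the paper.
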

\begin{proof}
  Since $A$ and $B$ commute, they must have the same eigenvectors.  Therefore,
  the set of eigenvectors that shows that $A$ is incoherent with parameter $\mu$
  will also show that $B$ has the same property.
\end{proof}

\begin{lemma}
  \label{lemmaIncoherenceBound}
  If matrix $A$ is symmetric and incoherent with parameter $\mu$, and $e_i$ is a
  standard basis element, then
  \[
    e_i^T A e_i \le \frac{\mu^2}{n} \trace{A}.
  \]
\end{lemma}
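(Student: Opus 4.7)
The plan is to diagonalize $A$ and push the incoherence bound through term-by-term. Since $A$ is symmetric, write its spectral decomposition $A = \sum_{j=1}^n \lambda_j u_j u_j^T$, where the $u_j$ are an orthonormal basis of eigenvectors guaranteed by the spectral theorem, chosen so that the incoherence condition $\Abs{e_i^T u_j} \le \mu n^{-1/2}$ holds for every $j$. Then
\[
  e_i^T A e_i
  = \sum_{j=1}^n \lambda_j \left(e_i^T u_j\right)^2.
\]

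Next, I would invoke incoherence on each inner product to replace $\left(e_i^T u_j\right)^2$ by its upper bound $\mu^2/n$. Because this paper works with PSD data matrices (the matrix $A$ in the analysis of the entrywise scheme is the expected sampled matrix, which sits in the PSD cone in all the application settings in Section \ref{secExamples}), each $\lambda_j \ge 0$, so multiplying the inequality $\left(e_i^T u_j\right)^2 \le \mu^2 n^{-1}$ by $\lambda_j$ preserves the direction. Summing yields
\[
  e_i^T A e_i
  \;\le\; \frac{\mu^2}{n}\sum_{j=1}^n \lambda_j
  \;=\; \frac{\mu^2}{n}\,\trace{A},
\]
where the last equality is the standard fact that the trace equals the sum of eigenvalues.

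The only real obstacle is the sign issue: if $A$ is merely symmetric rather than positive semidefinite, the inequality in the third step need not hold, because a negative $\lambda_j$ would flip the direction. So the argument implicitly relies on the PSD setting in which the lemma is used (or, more generally, on replacing $e_i^T A e_i$ and $\trace{A}$ by sums of absolute values of eigenvalues). Apart from that caveat, the proof is essentially a one-line expansion: spectral decomposition, apply incoherence, pull out the trace.
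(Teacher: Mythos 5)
Your proof is essentially identical to the paper's: spectral decomposition of $A$, apply the incoherence bound $\left(e_i^T u_j\right)^2 \le \mu^2/n$ term-by-term, and sum to recover $\trace{A}$. Your caveat about the sign of the eigenvalues is well taken --- the paper's proof carries the same unstated reliance on $\lambda_j \ge 0$ (which does hold in the entrywise-sampling applications where the lemma is invoked), so this is a sharp observation rather than a gap in your argument relative to theirs.
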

\begin{proof}
  Let $u_1, u_2, \ldots, u_n$ be the eigenvectors guaranteed by the incoherence of $A$,
  and let $\lambda_1, \ldots, \lambda_n$ be the corresponding eigenvalues.  Then,
  \begin{align*}
    e_i^T A e_i
    &=
    e_i^T \left(\sum_{j=1}^n u_j \lambda_j u_j^T \right) e_i \\
    &=
    \sum_{j=1}^n u_j \lambda_j (e_i^T u_j)^2.
  \end{align*}
  Applying the definition of incoherence,
  \begin{align*}
    e_i^T A e_i
    &\le
    \sum_{j=1}^n u_j \lambda_j \left(\frac{\mu}{\sqrt{n}} \right)^2
    =
    \frac{\mu^2}{n} \trace{A},
  \end{align*}
  as desired.
\end{proof}

\begin{proof}[Proof of the $\sigma_a$ bound part of Lemma \ref{lemmaEntrywiseAVC}]
  We recall that the entrywise samples are of the form
  \[
    \tilde A
    =
    n^2 u u^T A v v^T,
  \]
  where $u$ and $v$ are independently, uniformly chosen standard basis elements.
  We further recall that $\Exv{u u^T} = \Exv{v v^T} = n^{-1} I$.
  Now, evaluating the desired quantity,
  \[
    \Exv{y^T \tilde A^T W \tilde A y}
    =
    n^4 \Exv{y^T v v^T A u u^T W u u^T A v v^T y}.
  \]
  Since $W$ commutes with $A$, by Lemmas \ref{lemmaIncoherenceCommute} and
  \ref{lemmaIncoherenceBound}, $u^T W u \le \mu^2 n^{-1} \trace{W}$.  Therefore,
  \begin{align*}
    \Exv{y^T \tilde A^T W \tilde A y}
    &\le
    \mu^2 n^3 \trace{W} \Exv{y^T v v^T A u u^T A v v^T y} \\
    &=
    \mu^2 n^2 \trace{W} \Exv{y^T v v^T A^2 v v^T y}.
  \end{align*}
  Since $A^2$ commutes with $A$, the same logic shows that $v^T A^2 v \le \mu^2 n^{-1} \trace{A^2}$,
  and so,
  \begin{align*}
    \Exv{y^T \tilde A^T W \tilde A y}
    &\le
    \mu^4 n \trace{W} \trace{A^2} \Exv{y^T v v^T y} \\
    &=
    \mu^4 \trace{W} \normf{A}^2 \norm{y}^2.
  \end{align*}
  So it suffices to choose $\sigma_a^2 = \mu^4 \normf{A}^2$, as desired.
\end{proof}

\begin{proof}[Proof of the $\sigma_r$ bound part of Lemma \ref{lemmaEntrywiseAVC}]
  Evaluating the desired quantity,
  \begin{align*}
    \Exv{\left(y^T \tilde A y\right)^2}
    &=
    n^4 \Exv{\left(y^T u u^T A v v^T y\right)^2} \\
    &=
    n^4 \Exv{(u^T y)^2 (v^T y)^2 (u^T A v)^2}.
  \end{align*}
  By the Cauchy–Schwarz inequality,
  \[
    (u^T A v)^2 \le (u^T A u) (v^T A v),
  \]
  and by Lemma \ref{lemmaIncoherenceBound}, $u^T A u \le \mu^2 n^{-1} \trace{A}$, 
  and so
  \[
    (u^T A v)^2 \le \mu^4 n^{-2} \trace{A}^2.
  \]
  Therefore,
  \begin{align*}
    \Exv{\left(y^T \tilde A y\right)^2}
    &\le
    \mu^4 n^2 \trace{A}^2 \Exv{(u^T y)^2 (v^T y)^2} \\
    &=
    \mu^4 \trace{A}^2 \norm{y}^4.
  \end{align*}
  So it suffices to choose $\sigma_r^2 = \mu^4 \trace{A}^2$, as desired.
\end{proof}

\subsubsection{Rectangular Entrywise Sampling}

\begin{proof}[Proof of Lemma \ref{lemmaEntrywiseRectAVC}]
  We recall that the rectangular entrywise samples are of the form
  \[
    \tilde A
    =
    m n M_{ij} (e_i e_{m+j}^T + e_{m+j} e_i^T),
  \]
  where $i \in {1,\ldots,m}$ and $j \in {1,\ldots,n}$ are chosen uniformly and independently.
  Now, for any $y$ and $z$ in $\R^{m + n}$,
  \[
    \Exv{(z^T \tilde A y)^2}
    =
    m^2 n^2 \Exv{M_{ij}^2 (z^T (e_i e_{m+j}^T + e_{m+j} e_i^T) y)^2}.
  \]
  Applying the entry bound,
  \[
    \Exv{(z^T \tilde A y)^2}
    \le
    \xi m n \normf{M}^2 \Exv{(z^T e_i e_{m+j}^T y + z^T e_{m+j} e_i^T y)^2}.
  \]
  Now, since $(x + y)^2 \le 2 (x^2 + y^2)$, if we let $P$ be the projection matrix onto the
  first $m$ basis vectors, then $\Exv{e_i e_i^T} = m^{-1} P$ and
  $\Exv{e_{m+j} e_{m+j}^T} = n^{-1} (I - P)$, and so,
  \begin{align*}
    &\Exv{(z^T \tilde A y)^2} \\
    &\le
    2 \xi m n \normf{M}^2 \Exv{(z^T e_i)^2 (e_{m+j}^T y)^2 + (z^T e_{m+j})^2 (e_i^T y)^2} \\
    &=
    2 \xi \normf{M}^2 \left(\norm{P z}^2 \norm{(I - P) y}^2 + \norm{(I - P) z}^2 \norm{P y}^2 \right) \\
    &\le
    2 \xi \normf{M}^2 \norm{y}^2 \norm{z}^2.
  \end{align*}
  Since this is true for any $y$ and $z$, it is true in particular for $z$ being an eigenvector
  of $A$.  Therefore, it suffices to pick $\sigma_a^2 = 2 \xi \normf{M}^2$.  Similarly,
  it is true in particular for $z = y$, and therefore it suffices to pick
  $\sigma_r^2 = 2 \xi \normf{M}^2$.  This proves the lemma.
\end{proof}

\subsubsection{Trace Sampling}

In order to prove our second moment lemma for the trace sampling case, we must
first derive some lemmas about the way this distribution behaves. 

\begin{lemma}[Sphere Component Fourth Moment]
  \label{lemmaSC4M}
  If $n > 50$, and $v \in \R^n$ is sampled uniformly from the unit sphere,
  then for any unit vector $y \in \R^n$,
  \[
    \Exv{\left(y^T v\right)^4} \le \frac{4}{n^2}.
  \]
\end{lemma}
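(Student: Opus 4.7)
The plan is to exploit the rotational invariance of the uniform distribution on the sphere, then compute the required fourth moment explicitly via a standard Gaussian representation trick.

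First, I would observe that for any orthogonal matrix $Q$, if $v$ is uniform on the unit sphere in $\R^n$, then so is $Qv$. Choosing $Q$ to map $y$ to the first standard basis vector $e_1$, this reduces the problem to bounding $\Exv{v_1^4}$, where $v_1$ is the first coordinate of a uniformly random point on the sphere.

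Next, I would pass to the Gaussian picture. Let $g = (g_1, \ldots, g_n)$ be a standard multivariate normal vector in $\R^n$. A well-known fact is that $g / \norm{g}$ is uniformly distributed on the unit sphere and is independent of $\norm{g}$ (this follows from the spherical symmetry of the Gaussian density together with a change of variables to polar coordinates). Therefore $v \stackrel{d}{=} g/\norm{g}$, and by independence,
\[
  \Exv{g_1^4} = \Exv{\norm{g}^4} \cdot \Exv{v_1^4}.
\]
Both factors on the right have well-known closed forms: $\Exv{g_1^4} = 3$ (the fourth moment of a standard normal), and $\norm{g}^2$ is chi-squared with $n$ degrees of freedom, so $\Exv{\norm{g}^4} = \mathrm{Var}(\norm{g}^2) + (\Exv{\norm{g}^2})^2 = 2n + n^2 = n(n+2)$. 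Solving gives $\Exv{v_1^4} = 3 / (n(n+2))$.

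The final step is just a trivial comparison: since $n(n+2) \ge n^2$, we have $\Exv{v_1^4} \le 3/n^2 \le 4/n^2$, as required. There is no real obstacle here; the entire lemma is a routine moment calculation once one recognizes the Gaussian representation. The hypothesis $n > 50$ in the statement is far from tight for this bound (the conclusion in fact holds for every $n \ge 1$); it is likely inherited from how this lemma is invoked later in the trace-sampling argument for Lemma \ref{lemmaTraceAVC} rather than needed here.
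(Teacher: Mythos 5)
Your proof is correct and in fact cleaner than the one in the paper, though both pass through the Gaussian representation $v \stackrel{d}{=} g/\norm{g}$. The difference is in which independence you exploit. You use the textbook fact that for a standard Gaussian vector $g$, the direction $g/\norm{g}$ is independent of the radius $\norm{g}$; this gives the \emph{exact} factorization $\Exv{g_1^4} = \Exv{\norm{g}^4}\,\Exv{v_1^4}$, and both moments on the left and the radial side are known in closed form ($3$ and $n(n+2)$), so you land on the sharp identity $\Exv{(y^T v)^4} = 3/(n(n+2))$ with no slack. The paper instead decomposes $g$ into its component $u$ along $y$ and its orthogonal complement $z$, uses independence of $u$ and $z$, and discards the $u^2$ term in the denominator: $\Exv{u^4/(u^2+\norm{z}^2)^2} \le \Exv{u^4}\,\Exv{\norm{z}^{-4}}$. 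That introduces an inverse-chi-squared second moment with $n-1$ degrees of freedom, namely $(n-3)^{-2}(1 + 2(n-5)^{-1})$, and the resulting bound $3(n-3)^{-2}(1+2(n-5)^{-1})$ only drops below $4/n^2$ once $n$ is large, which is exactly why the paper imposes $n > 50$. So your observation that the hypothesis $n > 50$ is not actually needed for the conclusion is correct: it is an artifact of the paper's lossy estimate, not of the statement. Your route gets the same (indeed a stronger) conclusion for all $n \ge 1$ with less work. The one thing to be explicit about if you wrote this up formally is the justification that $g/\norm{g}$ is independent of $\norm{g}$ (polar factorization of the rotationally invariant Gaussian density), but you did flag it, and it is standard.
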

\begin{proof}
  Let $x$ be sampled from the standard normal distribution in $\R^n$.  Then, by
  radial symmetry,
  \[
    \Exv{\left(y^T v\right)^4}
    =
    \Exv{\frac{\left(y^T x\right)^4}{\norm{x}^4}}.
  \]
  If we let $u$ denote $y^T x$, and $z$ denote the components of $x$ orthogonal
  to $y$, then $\norm{x}^2 = u^2 + \norm{z}^2$.  Furthermore, by the
  properties of the normal distribution, $u$ and $z$ are independent.  Therefore,
  \begin{align*}
    \Exv{\left(y^T v\right)^4}
    &=
    \Exv{u^4 \left(u^2 + \norm{z}^2 \right)^{-2}} \\
    &\le
    \Exv{u^4 \left(\norm{z}^2\right)^{-2}} \\
    &=
    \Exv{u^4} \Exv{\norm{z}^{-4}}.
  \end{align*}
  Now, $\Exv{u^4}$ is the fourth moment of the normal distribution, which is known
  to be $3$.  Furthermore, $\Exv{\norm{z}^{-4}}$ is the second moment of an
  inverse-chi-squared distribution with parameter $n - 1$, which is also
  a known result.  Substituting these in,
  \begin{align*}
    \Exv{\left(y^T v\right)^4}
    &\le
    3 \left(\left(n - 3\right)^{-2} 
      + 2 \left(n - 3\right)^{-2} \left(n - 5\right)^{-1} \right) \\
    &=
    3 \left(n - 3\right)^{-2} \left(1
      + 2 \left(n - 5\right)^{-1} \right).
  \end{align*}
  This quantity has the asymptotic properties we want.  In particular, applying
  the constraint that $n > 50$,
  \[
    \Exv{\left(y^T v\right)^4}
    \le
    \frac{4}{n^2}.
  \]
  This is the desired result.
\end{proof}

\begin{lemma}[Sphere Component Fourth Moment Matrix]
  \label{lemmaSC4MM}
  If $n > 50$, and $v \in \R^n$ is sampled uniformly from the unit sphere,
  then for any positive semidefinite matrix $W$,
  \[
    \Exv{v v^T W v v^T} \preceq 4 n^{-2} \trace{W} I.
  \]
\end{lemma}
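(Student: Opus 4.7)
The plan is to reduce the operator-norm style bound to the scalar inequality of Lemma \ref{lemmaSC4M} via the spectral decomposition of $W$ and Cauchy--Schwarz. To show $\Exv{vv^T W v v^T} \preceq 4 n^{-2} \trace{W} I$, it suffices, by homogeneity and the definition of the PSD order, to prove that for every unit vector $y \in \R^n$,
\[
    y^T \Exv{vv^T W v v^T} y
    \;=\;
    \Exv{(y^T v)^2 \, (v^T W v)}
    \;\le\;
    4 n^{-2} \trace{W}.
\]

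First I would diagonalize $W$. Since $W \succeq 0$, write $W = \sum_{i=1}^n \lambda_i w_i w_i^T$ with $\lambda_i \ge 0$ and $\{w_i\}$ an orthonormal basis. Then $v^T W v = \sum_i \lambda_i (w_i^T v)^2$, so by linearity of expectation,
\[
    \Exv{(y^T v)^2 (v^T W v)}
    \;=\;
    \sum_{i=1}^n \lambda_i \, \Exv{(y^T v)^2 (w_i^T v)^2}.
\]

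Next I would bound each mixed fourth moment using Cauchy--Schwarz in expectation: for any two unit vectors $y, w$,
\[
    \Exv{(y^T v)^2 (w^T v)^2}
    \;\le\;
    \sqrt{\Exv{(y^T v)^4} \, \Exv{(w^T v)^4}}
    \;\le\;
    \frac{4}{n^2},
\]
where the last step uses Lemma \ref{lemmaSC4M} (applied to both $y$ and each $w_i$), which requires the hypothesis $n > 50$. Plugging this back in,
\[
    \Exv{(y^T v)^2 (v^T W v)}
    \;\le\;
    \frac{4}{n^2} \sum_{i=1}^n \lambda_i
    \;=\;
    \frac{4 \trace{W}}{n^2},
\]
which is the required quadratic form bound, and hence the PSD inequality.

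There is essentially no obstacle here: the content of the lemma is already packed into Lemma \ref{lemmaSC4M}, and all that remains is the standard reduction of an operator-norm/PSD bound to its scalar quadratic form via the spectral decomposition, combined with Cauchy--Schwarz. The only point to be careful about is ensuring the Cauchy--Schwarz bound is applied in expectation (not pointwise in $v$), and that $W$'s eigenvectors $w_i$ are unit vectors so Lemma \ref{lemmaSC4M} applies directly.
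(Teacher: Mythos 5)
Your proof is correct and follows essentially the same route as the paper's: diagonalize $W$, reduce the PSD bound to a quadratic-form inequality in a unit vector, and control each mixed fourth moment $\Exv{(y^Tv)^2(w_i^Tv)^2}$ by Cauchy--Schwarz in expectation together with Lemma~\ref{lemmaSC4M}. One minor remark: your Cauchy--Schwarz step is written more cleanly than the paper's (which contains a small typo in the exponent inside the square root and then appeals implicitly to the rotational symmetry $\Exv{(z^Tv)^4}=\Exv{(w_i^Tv)^4}$), whereas you apply the fourth-moment bound symmetrically to both factors, which is the cleaner way to land on $4/n^2$.
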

\begin{proof}
  Let
  \[
    W = \sum_{i=1}^n \lambda_i w_i w_i^T
  \]
  be the eigendecomposition of $W$.  Then for any unit vector $z$,
  \begin{align*}
    z^T \Exv{v v^T W v v^T} z
    &=
    \Exv{z^T v v^T \left(\sum_{i=1}^n \lambda_i w_i w_i^T \right) v v^T z} \\
    &=
    \sum_{i=1}^n \lambda_i \Exv{\left(z^T v\right)^2 \left(w_i^T v\right)^2}.
  \end{align*}
  By the Cauchy-Schwarz inequality applied to the expectation,
  \begin{align*}
    \Exv{\left(z^T v\right)^2 \left(w_i^T v\right)^2}
    &\le
    \sqrt{\Exv{\left(z^T v\right)^4} \Exv{\left(w_i^T v\right)^2}} \\
    &=
    \Exv{(z^T v)^4}.
  \end{align*}
  By Lemma \ref{lemmaSC4M}, $\Exv{(z^T v)^4} \le 4 n^{-2}$,
  and so
  \begin{align*}
    z^T \Exv{v v^T W v v^T} z
    &\le
    \sum_{i=1}^n \lambda_i (4 n^{-2})
    =
    4 n^{-2} \trace{W}.
  \end{align*}
  Since this is true for any unit vector $z$, by the definition of the
  positive semidefinite relation,
  \[
    \Exv{v v^T W v v^T}
    \preceq
    4 n^{-2} \trace{W} I,
  \]
  as desired.
\end{proof}

Now, we prove the AVC lemma for this distribution.

\begin{proof}[Proof of $\sigma_a$ bound part of Lemma \ref{lemmaTraceAVC}]
  Evaluating the expression we want to bound,
  \[
    \Exv{y^T \tilde A^T W \tilde A y}
    =
    n^4 \Exv{y^T v v^T A u u^T W u u^T A v v^T y}.
  \]
  Applying Lemma \ref{lemmaSC4MM},
  \begin{align*}
    \Exv{y^T \tilde A^T W \tilde A y}
    &\le
    n^4 \Exv{y^T v v^T A \left(4 n^{-2} \trace{W} I \right) A v v^T y} \\
    &=
    4 n^2 \trace{W} \Exv{y^T v v^T A^2 v v^T y}.
  \end{align*}
  Again applying Lemma \ref{lemmaSC4MM},
  \begin{align*}
    \Exv{y^T \tilde A^T W \tilde A y}
    &\le
    4 n^2 \trace{W} y^T \left(4 n^{-2} \trace{A^2} I \right) y \\
    &=
    16 \normf{A}^2 \trace{W} \norm{y}^2.
  \end{align*}
  So it suffices to pick $\sigma_a^2 = 16 \normf{A}^2$, as desired.
\end{proof}

\begin{proof}[Proof of $\sigma_r$ bound part of Lemma \ref{lemmaTraceAVC}]
  Evaluating the expression we want to bound,
  \begin{align*}
    \Exv{\left(y \tilde A y \right)^2}
    &=
    n^4 \Exv{\left(y v v^T A w w^T y \right)^2} \\
    &=
    n^4 \Exv{\trace{A v v^T y y^T v v^T A w w^T y y^T w w^T}} \\
    &=
    n^4 \trace{A \Exv{v v^T y y^T v v^T} A \Exv{w w^T y y^T w w^T}}.
  \end{align*}
  Applying Lemma \ref{lemmaSC4MM} to this results in
  \begin{align*}
    &\Exv{\left(y \tilde A y \right)^2} \\
    &\hspace{1em}\le
    n^4 \trace{A \left(4 n^{-2} \trace{y y^T} I \right) A \left(4 n^{-2} \trace{y y^T} I \right)} \\
    &\hspace{1em}=
    16 \normf{A}^2 \norm{y}^4.
  \end{align*}
  So it suffices to pick $\sigma_r^2 = 16 \normf{A}^2$, as desired.
\end{proof}

\subsubsection{Subspace Sampling}

Recall that, in subspace sampling, our samples are of the form
\[
  \tilde A = r n^2 m^{-2} Q v v^T R,
\]
where $Q$ and $R$ are independent projection matrices that select $m$ entries
uniformly at random, and $v$ is uniformly and independently selected from the column space of
$A$.  Using this, we first prove some lemmas, then prove our bounds.

\begin{lemma}
  \label{lemmaIncoherenceMatrix}
  If $Q$ is a projection matrix that projects onto a
  subspace spanned by $m$ random standard basis vectors, and
  $v$ is a member of a subspace that is incoherent with parameter $\mu$, then
  for any vector $x$,
  \[
    (x^T Q v)^2
    \le
    (\mu m r + m^2) n^{-2} \norm{x}^2 \norm{v}^2.
  \]
  As a corollary, for any symmetric matrix $W \succeq 0$,
  \[
    v^T Q W Q v
    \le
    (\mu m r + m^2) n^{-2} \trace{W} \norm{v}^2.
  \]
\end{lemma}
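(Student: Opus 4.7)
The plan is to interpret the stated inequality as a bound on $\Exv{(x^T Q v)^2}$ with the expectation taken over the random support of $Q$; the $n^{-2}$ scaling rules out any pointwise bound, since pointwise Cauchy--Schwarz combined with subspace incoherence yields only $\mu m r n^{-1} \norm{x}^2 \norm{v}^2$, which is larger by a factor of $n$ than what is claimed. Writing $Q = \sum_{i \in S} e_i e_i^T$ for $S$ a uniformly chosen $m$-subset of $\{1,\ldots,n\}$, I would first compute $\Exv{Q v v^T Q}$ entrywise. The $(i,i)$ entry is $v_i^2 \Prob{i \in S} = v_i^2\, m/n$, and the $(i,j)$ entry for $i \ne j$ is $v_i v_j \Prob{i \in S,\, j \in S} = v_i v_j\, m(m-1)/(n(n-1))$. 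Writing $\alpha = m/n$ and $\beta = m(m-1)/(n(n-1))$ gives the clean decomposition
\[
  \Exv{Q v v^T Q} \;=\; \beta\, v v^T \;+\; (\alpha - \beta)\, \mathrm{diag}(v_1^2, \ldots, v_n^2).
\]

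Next, I would sandwich this decomposition between $x^T$ and $x$, splitting the bound into two pieces. The first, $\beta (x^T v)^2$, is at most $\beta \norm{x}^2 \norm{v}^2 \le (m^2/n^2) \norm{x}^2 \norm{v}^2$ by Cauchy--Schwarz together with $\beta \le (m/n)^2$. For the second term, $(\alpha - \beta) \sum_i v_i^2 x_i^2$, the subspace incoherence hypothesis enters: because $v$ lies in the range of the associated projection $U$, we have $v_i = e_i^T U v$, so $v_i^2 \le \norm{U e_i}^2 \norm{v}^2 \le \mu r n^{-1} \norm{v}^2$ by Cauchy--Schwarz, giving $\sum_i v_i^2 x_i^2 \le \mu r n^{-1} \norm{v}^2 \norm{x}^2$. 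Combined with the crude bound $\alpha - \beta \le \alpha = m/n$, this contributes at most $(\mu m r / n^2) \norm{x}^2 \norm{v}^2$, and adding the two pieces yields the advertised bound $(\mu m r + m^2) n^{-2} \norm{x}^2 \norm{v}^2$.

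For the corollary, I would spectrally decompose $W = \sum_j \lambda_j w_j w_j^T$ with $\lambda_j \ge 0$ and orthonormal $w_j$, write $v^T Q W Q v = \sum_j \lambda_j (w_j^T Q v)^2$, apply the main inequality to each $x = w_j$ (so $\norm{w_j}^2 = 1$), and sum, using $\sum_j \lambda_j = \trace{W}$. The main (and only real) subtlety is identifying the two-term structure of $\Exv{Q v v^T Q}$ and recognizing that the two terms are naturally controlled by two different inequalities (Cauchy--Schwarz for $v v^T$, subspace incoherence for the diagonal); once that structure is exposed, the remainder is direct computation.
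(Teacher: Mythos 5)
Your proposal is correct and follows essentially the same route as the paper: decompose $\Exv{Q v v^T Q}$ into an off-diagonal piece controlled by Cauchy--Schwarz and a diagonal piece controlled by the subspace-incoherence bound $v_i^2 \le \mu r n^{-1}\norm{v}^2$, then recombine. The only divergence is the sampling model for $Q$ --- you take a uniform $m$-subset (so $\Prob{i,j\in S}=\tfrac{m(m-1)}{n(n-1)}$), whereas the paper's proof treats the diagonal indicators as \emph{independent} Bernoulli$(m/n)$ variables (so $\Exv{\lambda_i\lambda_j}=m^2/n^2$ for $i\neq j$); both models yield coefficients dominated by $m^2/n^2$ and $m/n$, so either gives the advertised bound, and your observation that the inequality must be read with an implicit expectation on the left (the $n^{-2}$ scaling is otherwise impossible) matches what the paper actually proves.
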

\begin{proof}
  Let $\lambda_i$ be $1$ in the event that $e_i$ is in the column space of $Q$, and $0$
  otherwise.  Then an eigendecomposition of $Q$ is
  \[
    Q = \sum_{i=1}^n \lambda_i e_i e_i^T.
  \]
  Therefore,
  \begin{align*}
    (x^T Q v)^2
    &=
    \left(\sum_{i=1}^n \lambda_i x^T e_i e_i^T v \right)^2 \\
    &=
    \sum_{i=1}^n \sum_{j=1}^n \lambda_i \lambda_j x_i x_j v_i v_j.
  \end{align*}
  Taking the expected value, and noting that $\lambda_i$ and $\lambda_j$ are independent,
  and have expected value $\Exv{\lambda_i} = m n^{-1}$,
  \begin{align*}
    \Exv{(x^T Q v)^2}
    &=
    m^2 n^{-2} \sum_{i=1}^n \sum_{j=1}^n x_i x_j v_i v_j \\
    &\hspace{1em} +
    m n^{-1} (1 - m n^{-1}) \sum_{i=1}^n x_i^2 v_i^2.
  \end{align*}
  Since $v$ is part of a subspace that is incoherent,
  \begin{align*}
    \Exv{(x^T Q v)^2}
    &\le
    m^2 n^{-2} \sum_{i=1}^n \sum_{j=1}^n x_i x_j v_i v_j \\
    &\hspace{1em} +
    \mu m r n^{-2} (1 - m n^{-1}) \norm{v}^2 \sum_{i=1}^n x_i^2 \\
    &=
    m^2 n^{-2} (x^T v)^2 \\
    &\hspace{1em} +
    \mu m r n^{-2} \norm{x}^2 \norm{v}^2 \\
    &\le
    (\mu m r + m^2) n^{-2} \norm{x}^2 \norm{v}^2,
  \end{align*}
  as desired.
\end{proof}

\begin{proof}[Proof of $\sigma_a$ bound part of Lemma \ref{lemmaSubspaceAVC}]
  Evaluating the expression we want to bound,
  \begin{align*}
    &\Exv{y^T \tilde A^T W \tilde A y} \\
    &\hspace{1em}=
    r^2 n^4 m^{-4} \Exv{y^T R v v^T Q W Q v v^T R y} \\
    &\hspace{1em}=
    r^2 n^4 m^{-4} \Exv{\Exv{v^T R y y^T R v} \Exv{v^T Q W Q v}}.
  \end{align*}
  Applying Lemma \ref{lemmaIncoherenceMatrix},
  \begin{align*}
    \Exv{y^T \tilde A^T W \tilde A y}
    &\le
    r^2 m^{-4} (\mu m r + m^2)^2 \trace{W} \norm{y}^2 \\
    &=
    r^2 (1 + \mu r m^{-1})^2 \trace{W} \norm{y}^2.
  \end{align*}
  So, we can choose $\sigma_a^2 = r^2 (1 + \mu r m^{-1})^2$, as desired.
\end{proof}

\begin{proof}[Proof of $\sigma_r$ bound part of Lemma \ref{lemmaSubspaceAVC}]
  Evaluating the expression we want to bound,
  \begin{align*}
    \Exv{(y^T \tilde A y)^2}
    &=
    r^2 n^4 m^{-4} \Exv{(y^T Q v v^T R y)^2} \\
    &=
    r^2 n^4 m^{-4} \Exv{\Exv{(y^T Q v)^2} \Exv{(y^T R v)^2}}.
  \end{align*}
  Applying Lemma \ref{lemmaIncoherenceMatrix},
  \begin{align*}
    \Exv{(y^T \tilde A y)^2}
    &\le
    r^2 m^{-4} (\mu m r + m^2)^2 \norm{y}^4 \\
    &=
    r^2 (1 + \mu r m^{-1})^2 \norm{y}^4.
  \end{align*}
  So, we can choose $\sigma_r^2 = r^2 (1 + \mu r m^{-1})^2$, as desired.
\end{proof}

\section{Lower Bound on Alecton Rate}
\label{ssLowerBoundSGD}
In this section, we prove a rough lower bound on the rate of convergence of
an Alecton-like algorithm for bounded sampling distributions.  Specifically,
we analyze the case where, rather than choosing a constant $\eta$, we allow
the step size to vary at each timestep.
Our result shows that we can't hope for a better
step size rule that improves the convergence rate of Alecton to, for example, a
linear rate.

To show this lower bound, we assume we run Alecton with $p = 1$ for some sampling
distribution such that for all $\eta$ and all $y$, for some constant $C$,
\[
  \norm{y + \eta \tilde A y} \le (1 + \eta C) \norm{y}.
\]
Further assume that for some eigenvector $u$ (with eigenvalue $\lambda \ge 0$)
that is not global solution, the sample variance in the direction of $u$ satisfies
\[
  \Exv{\tilde A^T u u^T \tilde A} \ge \sigma^2 I.
\]
We now define $\rho_k$ to be
\[
  \rho_k = \frac{(u^T Y_k)^2}{\norm{Y_k}^2}.
\]
This quantity measures the error of the iterate at timestep $k$ in
the direction of $u$.
We will show that the expected value of $\rho_k$ can only decrease
with at best a $\Omega\left(\frac{1}{K + 1}\right)$ rate.

First, we require a lemma.
\begin{lemma}
  \label{lemmaLowerBoundQuadSGD}
  For any $a \ge 0$, $b \ge 0$, and $0 \le x \le 1$,
  \[
    a (1 - x)^2 + b x^2 \ge \frac{a b}{a + b}.
  \]
\end{lemma}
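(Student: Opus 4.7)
The plan is to recognize the left-hand side as a convex quadratic in $x$ and find its unconstrained minimum. Define $f(x) = a(1-x)^2 + bx^2$. Then $f''(x) = 2(a+b) \ge 0$, so $f$ is convex, and setting $f'(x) = -2a(1-x) + 2bx = 0$ yields the critical point $x^* = a/(a+b)$, which lies in $[0,1]$ whenever $a+b > 0$. Substituting gives
\[
  f(x^*) = a \cdot \frac{b^2}{(a+b)^2} + b \cdot \frac{a^2}{(a+b)^2} = \frac{ab(a+b)}{(a+b)^2} = \frac{ab}{a+b},
\]
which is therefore a lower bound on $f(x)$ for every $x \in \R$ (so in particular for $x \in [0,1]$).

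The only edge case is $a+b = 0$, i.e. $a = b = 0$: here both sides are zero (interpreting $0/0$ as $0$ in the natural way required by the context where the lemma will be applied), so the inequality holds trivially.

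An alternative one-line proof, which I would include as a remark if space permits, applies Cauchy--Schwarz to the two-dimensional vectors $\bigl(\sqrt{a}(1-x), \sqrt{b}\,x\bigr)$ and $\bigl(\sqrt{b}, \sqrt{a}\bigr)$:
\[
  \bigl(\sqrt{ab}(1-x) + \sqrt{ab}\,x\bigr)^2 \le \bigl(a(1-x)^2 + bx^2\bigr)(a+b),
\]
whose left-hand side equals $ab$, yielding the inequality after dividing by $a+b$. There is no real obstacle here; the lemma is a direct computation, and the only care needed is to handle the degenerate case and to note that the constraint $x \in [0,1]$ is not actually used, since the unconstrained minimizer already lies in this interval.
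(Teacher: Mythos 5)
Your proof is correct and follows essentially the same approach as the paper: both arguments identify $ab/(a+b)$ as the minimum value of the convex quadratic $a(1-x)^2+bx^2$, with the paper completing the square and you setting the derivative to zero — two renderings of the same idea. Your attention to the degenerate case $a+b=0$ and the Cauchy--Schwarz remark are fine additions but not substantive departures.
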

\begin{proof}
  Expanding the left side,
  \begin{align*}
    a (1 - x)^2 + b x^2
    &=
    a - 2 a x + (a + b) x^2 \\
    &=
    a - \frac{a^2}{a + b} + \frac{a^2}{a + b} - 2 a x + (a + b) x^2 \\
    &=
    \frac{ab}{a + b} + \frac{(a - (a + b) x)^2}{a + b} \\
    &\ge
    \frac{ab}{a + b},
  \end{align*}
  as desired.
\end{proof}

\begin{theorem}
  Under the above conditions, regardless of how we choose the step size in the
  Alecton algorithm, even if we are able to choose a different step size each
  iteration, the expected error will still satisfy
  \[
    \Exv{\rho_K} \ge \frac{\sigma^2}{\sigma^2 n + C^2 K}.
  \]
\end{theorem}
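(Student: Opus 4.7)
My plan is to establish a clean per-step conditional lower bound on $\Exvc{\rho_{k+1}}{\F_k}$ via the helper lemma, and then iterate it to obtain the claimed bound on $\Exv{\rho_K}$.

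First I would expand $\rho_{k+1}$ using $Y_{k+1}=(I+\eta_k\tilde A_k)Y_k$. The denominator is controlled almost surely by $\norm{Y_{k+1}}^2 \le (1+\eta_k C)^2\norm{Y_k}^2$, by the hypothesis on the sampling distribution. For the numerator, expanding the square and using $\Exvc{\tilde A_k}{\F_k}=A$ (so $\Exvc{u^T\tilde A_k Y_k}{\F_k}=\lambda\, u^T Y_k$), together with the variance hypothesis $\Exv{\tilde A^T u u^T \tilde A}\succeq\sigma^2 I$ applied to the $\F_k$-measurable vector $Y_k$, gives $\Exvc{(u^T\tilde A_k Y_k)^2}{\F_k}\ge\sigma^2\norm{Y_k}^2$. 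Using $\lambda\ge 0$ then yields $\Exvc{(u^T Y_{k+1})^2}{\F_k}\ge(u^T Y_k)^2+\eta_k^2\sigma^2\norm{Y_k}^2$. Dividing numerator by denominator produces the one-step bound $\Exvc{\rho_{k+1}}{\F_k}\ge(\rho_k+\eta_k^2\sigma^2)/(1+\eta_k C)^2$.

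Second, to remove the dependence on the adaptively chosen $\eta_k$, I would substitute $x_k=\eta_k C/(1+\eta_k C)\in[0,1)$, which rewrites the right-hand side as $\rho_k(1-x_k)^2+(\sigma^2/C^2)x_k^2$. Applying Lemma \ref{lemmaLowerBoundQuadSGD} with $a=\rho_k$ and $b=\sigma^2/C^2$ gives the adversary-independent bound $\Exvc{\rho_{k+1}}{\F_k} \ge \rho_k\sigma^2/(\rho_k C^2+\sigma^2)$. Taking reciprocals yields the crucial affine-in-reciprocal recurrence $1/\Exvc{\rho_{k+1}}{\F_k} \le 1/\rho_k + C^2/\sigma^2$.

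Third, I iterate. Taking outer expectations of the per-step bound and using Jensen's inequality on the convex map $1/x$ applied to $g(\rho_k)=\rho_k\sigma^2/(\rho_k C^2+\sigma^2)$, for which $1/g(\rho_k)=1/\rho_k+C^2/\sigma^2$ is affine in $1/\rho_k$, one obtains $1/\Exv{\rho_{k+1}}\le\Exv{1/g(\rho_k)}=\Exv{1/\rho_k}+C^2/\sigma^2$, which telescopes. Combined with $\Exv{\rho_0}=1/n$ under the uniform spherical initialization, this should lead to $1/\Exv{\rho_K}\le n+KC^2/\sigma^2$ and, upon inversion, to the stated lower bound.

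The hard part will be the final iteration, where the concavity of $g$ makes Jensen's inequality point the wrong way if one tries to work directly on the deterministic sequence $\Exv{\rho_k}$, and $\Exv{1/\rho_0}$ is in fact infinite under the uniform spherical initialization. I expect the resolution is to introduce a small regularization $\rho_k+\delta$, run the same chain of inequalities on the regularized process (whose initial reciprocal expectation is finite), telescope, and then pass $\delta\to0$ via a monotone convergence argument; alternatively one may work with $1/\rho_k$ as a submartingale-type process starting at $k=1$ (since the first Alecton step injects a strictly positive variance component into the $u$-direction), conditioning on the high-probability event that $\rho_1$ is bounded away from zero.
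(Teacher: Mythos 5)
Your Steps 1 and 2 match the paper exactly: same almost-sure control of the denominator via $\norm{Y_{k+1}}\le(1+\eta_k C)\norm{Y_k}$, same expansion of the numerator using the variance hypothesis along $u$, and the same change of variables $x_k=\eta_k C/(1+\eta_k C)$ followed by Lemma~\ref{lemmaLowerBoundQuadSGD}. The difference is where Step 3 goes, and there you have a genuine problem you have partly --- but only partly --- diagnosed yourself.

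As written, your chain gives $1/\Exv{\rho_{k+1}}\le\Exv{1/\rho_k}+C^2/\sigma^2$, and this does \emph{not} telescope: the left side is a reciprocal of an expectation while the right side is an expectation of a reciprocal, so you cannot feed $k+1$ back in for $k$ and iterate. You would need either $1/\Exv{\rho_{k+1}}\le 1/\Exv{\rho_k}+C^2/\sigma^2$ or $\Exv{1/\rho_{k+1}}\le\Exv{1/\rho_k}+C^2/\sigma^2$; the second is blocked both by the direction of Jensen applied to the conditional bound and, as you correctly note, by $\Exv{1/\rho_0}=\infty$ for the spherical initialization. This is not a ``hard part'' to be patched later --- it is a gap that your proposed regularization or conditioning ideas would need to actually close, and neither is carried through.

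The paper avoids all of this by treating the step-size sequence $\{\eta_k\}$ as deterministic (a fixed, possibly non-constant schedule, not a function of the random iterate). Under that reading, $\zeta_k$ is a number, so the conditional one-step bound integrates to $\Exv{\rho_{k+1}}\ge(1-\zeta_k)^2\Exv{\rho_k}+\zeta_k^2\sigma^2 C^{-2}$, and Lemma~\ref{lemmaLowerBoundQuadSGD} is applied with $a=\Exv{\rho_k}$ and $b=\sigma^2 C^{-2}$ directly, giving $\Exv{\rho_{k+1}}\ge\sigma^2 C^{-2}\Exv{\rho_k}/(\Exv{\rho_k}+\sigma^2 C^{-2})$. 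Taking reciprocals of both sides is then an exact algebraic identity (no Jensen at all), yielding $1/\Exv{\rho_{k+1}}\le 1/\Exv{\rho_k}+C^2/\sigma^2$, which telescopes from $\Exv{\rho_0}=1/n$ to the stated bound. In short: your Steps 1--2 are the paper's argument; your Step 3 aims at a strictly stronger statement (adaptive, history-dependent $\eta_k$), correctly senses that the naive telescoping fails there, but does not resolve it. To recover the paper's result, restrict $\eta_k$ to be non-random and apply the quadratic lemma at the level of $\Exv{\rho_k}$; to prove the adaptive version you envision, you would need to make the regularization or high-probability conditioning argument precise.
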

\begin{proof}
  Using the Alecton update rule with a time-varying step size $\eta_k$,
  \begin{align*}
    \rho_{k+1} 
    &=
    \frac{(u^T Y_k)^2}{\norm{Y_k}^2} \\
    &=
    \frac{(u^T Y_k + \eta_k u^T \tilde A_k Y_k)^2}{\norm{Y_k + \eta_k \tilde A_k Y_k}^2} \\
    &\ge
    \frac{(u^T Y_k + \eta_k u^T \tilde A_k Y_k)^2}{(1 + \eta_k C)^2 \norm{Y_k}^2}.
  \end{align*}
  Taking the expected value,
  \begin{align*}
    \Exv{\rho_{k+1}}
    &\ge
    \Exv{\frac{(u^T Y_k + \eta_k u^T \tilde A_k Y_k)^2}{(1 + \eta_k C)^2 \norm{Y_k}^2}} \\
    &\ge
    \Exv{\frac{(1 + 2 \eta_k \lambda) (u^T Y_k)^2 + \eta_k^2 \sigma^2 Y_k^T Y_k}{(1 + \eta_k C)^2 \norm{Y_k}^2}} \\
    &=
    \frac{1 + 2 \eta_k \lambda}{(1 + \eta_k C)^2} \Exv{\rho_k} + \frac{\eta_k^2 \sigma^2}{(1 + \eta_k C)^2} \\
    &\ge
    \frac{1}{(1 + \eta_k C)^2} \Exv{\rho_k} + \frac{\eta_k^2 \sigma^2}{(1 + \eta_k C)^2}
  \end{align*}
  Now, if we define $\zeta_k$ as
  \[
    \zeta_k = \frac{\eta_k C}{1 + \eta_k C},
  \]
  then
  \[
    \Exv{\rho_{k+1}}
    \ge
    (1 - \zeta_k)^2 \Exv{\rho_k} + \zeta_k^2 \sigma^2 C^{-2}.
  \]
  Applying Lemma \ref{lemmaLowerBoundQuadSGD},
  \[
    \Exv{\rho_{k+1}}
    \ge
    \frac{\sigma^2 C^{-2} \Exv{\rho_k}}{\Exv{\rho_k} + \sigma^2 C^{-2}}.
  \]
  Taking the inverse,
  \[
    \frac{1}{\Exv{\rho_{k+1}}}
    \le
    \frac{1}{\Exv{\rho_k}}
    +
    \frac{C^2}{\sigma^2}.
  \]
  Therefore, summing across steps,
  \[
    \frac{1}{\Exv{\rho_K}}
    \le
    \frac{1}{\Exv{\rho_0}}
    +
    \frac{C^2 K}{\sigma^2}.
  \]
  Since, by symmetry, $\Exv{\rho_0} = n^{-1}$, we have
  \[
    \frac{1}{\Exv{\rho_K}}
    \le
    n
    +
    \frac{C^2 K}{\sigma^2}.
  \]
  and taking the inverse again produces
  \[
    \Exv{\rho_K}
    \ge
    \frac{\sigma^2}{\sigma^2 n + C^2 K},
  \]
  which is the desired expression.
\end{proof}

\section{Handling Constraints}

Alecton can easily be adapted to solve the problem of finding a low-rank approximation to a
matrix under a \emph{spectahedral constraint}.  That is, we want to solve the problem
\[
  \begin{array}{ll}
    \mbox{minimize} & \normf{A - X}^2 \\
    \mbox{subject to} & X \in \R^{N \times N}, \trace{X} = 1, \\
    & \rank{X} \le 1, X \succeq 0.
  \end{array}
\]
This is equivalent to the decomposed problem
\[
  \begin{array}{ll}
    \mbox{minimize} & \norm{y}^4 - 2 y^T A y + \normf{A}^2 \\
    \mbox{subject to} & y \in \R^N, \norm{y}^2 = 1,
  \end{array}
\]
which is itself equivalent to:
\[
  \begin{array}{ll}
    \mbox{minimize} & 1 - 2 y^T A y + \normf{A}^2 \\
    \mbox{subject to} & y \in \R^N, \norm{y}^2 = 1.
  \end{array}
\]
This will have a minimum when $y = u_1$.  We can therefore solve the problem
using only the angular phase of Alecton, which recovers the vector $u_1$.
The same convergence analysis described above still applies.

For an example of a constrained problem that Alecton cannot handle, because it is
NP-hard, see the elliptope-constrained MAXCUT embedding in Appendix \ref{ssCounterexamples}.
This shows that constrained problems can't be solved efficiently by SGD algorithms in
all cases.

\section{Towards a Linear Rate}

In this section, we consider a special case of the matrix recovery problem: one in
which the samples we are given would allow us to exactly recover $A$.  That is,
for some linear operator $\Omega: \R^{n \times n} \rightarrow \R^s$, we are given
the value of $\Omega(A)$ as an input, and we know that the unique solution of the
optimization problem
\[
  \begin{array}{ll}
    \mbox{minimize} & \norm{\Omega(X - A)}^2 \\
    \mbox{subject to} & X \in \R^{n \times n}, \rank{X} \le p, X \succeq 0
  \end{array}
\]
is $X = A$.  Performing a rank-$p$ quadratic substitution on this problem
results in:
\[
  \begin{array}{ll}
    \mbox{minimize} & \norm{\Omega(Y Y^T - A)}^2 \\
    \mbox{subject to} & Y \in \R^{n \times p}
  \end{array}
\]
The specific case we will be looking at is where the operator
$\Omega$ satisfies the $p$-RIP constraint.
\begin{definition}[Restricted isometry property]
  A linear operator $\Omega: \R^{n \times n} \rightarrow \R^s$ satisfies
  $p$-RIP with constant $\delta$ if for all $X \in \R^{n \times n}$ of rank
  at most $p$,
  \[
    (1 - \delta) \normf{X}^2
    \le
    \norm{\Omega(X)}^2
    \le
    (1 + \delta) \normf{X}^2.
  \]
\end{definition}
This definition encodes the notion that $\Omega$ preserves the norm of 
low-rank matrices under its transformation.
We can prove a simple lemma that extends this to the inner product.
\begin{lemma}
  If $\Omega$ is $(p+q)$-RIP with parameter $\delta$, then for any symmetric
  matrices $X$ and $Y$ of rank at most $p$ and $q$ respectively,
  \[
    \Omega(X)^T \Omega(Y)
    \ge
    \trace{XY} - \delta \normf{X} \normf{Y}
  \]
\end{lemma}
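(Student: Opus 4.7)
The plan is to use the standard polarization-plus-RIP argument, followed by a homogeneity rescaling to turn an arithmetic-mean bound into the desired geometric-mean bound $\delta \normf{X}\normf{Y}$.

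First, I would apply the polarization identity, noting that both $\Omega$ and the Frobenius inner product are bilinear:
\[
  4\,\Omega(X)^T \Omega(Y) = \norm{\Omega(X+Y)}^2 - \norm{\Omega(X-Y)}^2,
\]
\[
  4\,\trace{XY} = \normf{X+Y}^2 - \normf{X-Y}^2.
\]
Because $X$ has rank at most $p$ and $Y$ has rank at most $q$, the matrices $X+Y$ and $X-Y$ both have rank at most $p+q$, so the $(p+q)$-RIP hypothesis applies. Using the lower RIP bound on $\norm{\Omega(X+Y)}^2$ and the upper RIP bound on $\norm{\Omega(X-Y)}^2$ gives
\[
  4\,\Omega(X)^T\Omega(Y) \ge (1-\delta)\normf{X+Y}^2 - (1+\delta)\normf{X-Y}^2.
\]
Collecting terms and using the parallelogram identity $\normf{X+Y}^2 + \normf{X-Y}^2 = 2\normf{X}^2 + 2\normf{Y}^2$ reduces this to
\[
  \Omega(X)^T\Omega(Y) \ge \trace{XY} - \tfrac{\delta}{2}\left(\normf{X}^2 + \normf{Y}^2\right).
\]

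The second step is to upgrade the arithmetic mean $\tfrac{1}{2}(\normf{X}^2+\normf{Y}^2)$ to the geometric mean $\normf{X}\normf{Y}$. The trick is to observe that both sides of the inequality we are trying to prove are invariant under the substitution $X \mapsto tX$, $Y \mapsto t^{-1}Y$ for any $t > 0$, since both $\Omega(X)^T\Omega(Y)$ and $\trace{XY}$ are bilinear, while the product $\normf{X}\normf{Y}$ is unchanged. Applying the just-derived bound to the rescaled pair $(tX, t^{-1}Y)$ gives
\[
  \Omega(X)^T\Omega(Y) \ge \trace{XY} - \tfrac{\delta}{2}\left(t^2\normf{X}^2 + t^{-2}\normf{Y}^2\right),
\]
and optimizing over $t$ by choosing $t^2 = \normf{Y}/\normf{X}$ (assuming both norms are nonzero; otherwise the statement is trivial since $\trace{XY}=0$ and the right-hand side is $0$) yields $t^2\normf{X}^2 + t^{-2}\normf{Y}^2 = 2\normf{X}\normf{Y}$, producing the claimed bound.

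I do not anticipate any real obstacle; the only subtlety is noticing the bilinear rescaling that converts the AM bound one naturally gets from polarization into the sharper GM bound in the statement. The edge case $\normf{X}=0$ or $\normf{Y}=0$ (where the rescaling is undefined) is handled separately and is immediate.
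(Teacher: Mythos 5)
Your proof is correct and uses essentially the same argument as the paper: the paper folds the rescaling directly into a parameterized polarization identity $\trace{\Omega(X)\Omega(Y)} = \frac{1}{4a}\bigl(\norm{\Omega(X+aY)}^2 - \norm{\Omega(X-aY)}^2\bigr)$ and then optimizes $a = \normf{X}/\normf{Y}$, while you apply polarization at $a=1$ first and then introduce the scale parameter via $X \mapsto tX$, $Y \mapsto t^{-1}Y$, which is the same optimization under the identification $a = t^{-2}$. Your handling of the degenerate case $\normf{X}=0$ or $\normf{Y}=0$ is a small extra care the paper omits.
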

\begin{proof}
  For any $a \in \R$, since $\Omega$ is linear,
  \begin{dmath*}
    \trace{\Omega(X) \Omega(Y)}
    =
    \frac{1}{4 a} \left(\norm{\Omega(X) + a \Omega(Y)}^2
      - \norm{\Omega(X) - a \Omega(Y)}^2 \right)
    =
    \frac{1}{4 a} \left(\norm{\Omega(X + a Y)}^2
      - \norm{\Omega(X - a Y)}^2 \right).
  \end{dmath*}
  Since $\rank{X - a Y} \le \rank{X} + \rank{Y} \le p + q$, we can apply our
  RIP inequalities, which produces
  \begin{dmath*}
    \trace{\Omega(X) \Omega(Y)}
    \ge
    \frac{1}{4 a} \left((1 - \delta) \normf{X + a Y}^2
      - (1 + \delta) \normf{X - a Y}^2 \right)
    \ge
    \frac{1}{4 a} \left(
      -2 \delta \normf{X}^2 + 4 a \trace{XY} -2 \delta a^2 \normf{Y}^2
    \right)
    =
    \trace{XY} - \delta \frac{\normf{X}^2 + a^2 \normf{Y}^2}{2 a}.
  \end{dmath*}
  Substituting $a = \frac{\normf{X}}{\normf{Y}}$ results in
  \[
    \trace{\Omega(X) \Omega(Y)}
    \ge
    \trace{XY} - \delta \normf{X} \normf{Y},
  \]
  as desired.
\end{proof}

Finally, we prove our main theorem that shows that the
quadratically transformed objective function is strongly convex in
a ball about the solution.  
\begin{theorem}
  If we define $f(Y)$ as the objective function of the above
  optimization problem, that is for $Y \in \R^{n \times p}$
  and $A \in \R^{n \times n}$ symmetric of rank no greater than
  $p$,
  \[
    f(Y) = \norm{\Omega(Y Y^T - A)}^2,
  \]
  and $\Omega$ is $3p$-RIP with parameter $\delta$, then for all
  $Y$, if we let $\lambda_p$ denote the smallest positive eigenvalue
  of $A$ then
  \[
    \nabla^2_V f(Y)
    \succeq
    2 \left(
      (1 - \delta) \lambda_p
      - (3 + \delta) \normf{Y Y^T - A} 
    \right) I.
  \]
\end{theorem}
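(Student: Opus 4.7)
My plan is to expand the Hessian as a directional derivative, apply the two RIP-type estimates, and reduce the result to the claimed lower bound on the quadratic form.

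First, I would compute the second directional derivative of $f$ in direction $V$. Setting $g(t) := f(Y + tV) = \normf{\Omega(E + t(YV^T + VY^T) + t^2 VV^T)}^2$ with $E := YY^T - A$ and expanding as a polynomial in $t$, the coefficient of $t^2$ yields
\[
  \nabla^2_V f(Y) \;=\; 2\, \normf{\Omega(YV^T + VY^T)}^2 \;+\; 4\, \langle \Omega(E),\, \Omega(VV^T) \rangle.
\]
This decomposes the Hessian into the positive-curvature term coming from the Jacobian of $Y \mapsto YY^T$ and a cross term measuring how the residual $E$ interacts with $VV^T$.

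Next, I would apply the $3p$-RIP hypothesis. Since $YV^T + VY^T$ has rank at most $2p$, direct application of RIP gives $\normf{\Omega(YV^T + VY^T)}^2 \ge (1-\delta)\normf{YV^T + VY^T}^2$. Since $E$ has rank at most $2p$ and $VV^T$ has rank at most $p$ (total at most $3p$), the inner-product lemma proved earlier in this section yields $\langle \Omega(E),\Omega(VV^T)\rangle \ge \trace{E\,VV^T} - \delta\normf{E}\normf{VV^T}$.

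The main algebraic reduction combines these two estimates. Writing $\trace{E\,VV^T} = \trace{V^T YY^T V} - \trace{V^T A V}$ and using the identity $\normf{YV^T + VY^T}^2 = 2\,\trace{V^T V\, Y^T Y} + 2\,\trace{(Y^T V)^2}$ exposes the positive-curvature structure. The $\lambda_p$ term then emerges from Weyl's inequality: when $\normf{E}$ is smaller than $\lambda_p$, we have $\lambda_p(Y^T Y) = \lambda_p(YY^T) \ge \lambda_p - \normf{E}$, so $\trace{V^T V\, Y^T Y} \ge (\lambda_p - \normf{E})\normf{V}^2$. The remaining pieces are bounded using $\normf{VV^T} \le \normf{V}^2$ and Cauchy--Schwarz $|\trace{V^T E V}| \le \normf{E}\normf{V}^2$; collecting coefficients should produce the target bound $2\big((1-\delta)\lambda_p - (3+\delta)\normf{E}\big)\normf{V}^2$.

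The main obstacle is the $\trace{(Y^TV)^2}$ contribution, which can be negative. It must be absorbed either by decomposing $Y^TV$ into its symmetric and antisymmetric parts (the antisymmetric part being exactly the gauge direction along which the Hessian is small) or by algebraic cancellation against the $4\trace{V^T E V}$ cross term from the RIP inner-product bound. Ensuring the net coefficient of $\normf{E}$ collapses to exactly $3 + \delta$, rather than something looser, is the most delicate bookkeeping in the proof.
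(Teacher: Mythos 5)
Your proposal follows essentially the same route as the paper: compute $\nabla^2_V f(Y) = 2\normf{\Omega(YV^T + VY^T)}^2 + 4\langle\Omega(E),\Omega(VV^T)\rangle$ with $E = YY^T - A$, apply the $3p$-RIP estimates to both pieces, and then massage the resulting trace quantities using Weyl and Cauchy--Schwarz. You are also right that the crux is the $\trace{(Y^TV)^2}$ term hidden inside $\normf{YV^T + VY^T}^2 = 2\trace{V^TV\,Y^TY} + 2\trace{(Y^TV)^2}$, and right that it can be negative.

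But this is not merely delicate bookkeeping --- it is a real gap, and it is the same gap the paper's own proof walks past without comment. The paper in effect asserts $\normf{YU^T + UY^T}^2 \ge \lambda_{\text{min}}(Y^TY)\trace{UU^T}$; this fails for $p \ge 2$. Take $V = Y\Lambda$ with $\Lambda$ antisymmetric and $Y^TY = I$: then $YV^T + VY^T = Y(\Lambda + \Lambda^T)Y^T = 0$ while $\trace{VV^T} = \trace{\Lambda^T\Lambda} > 0$, so the asserted inequality reads $0 \ge \trace{\Lambda^T\Lambda}$, which is false. These are exactly the gauge directions you flagged, and at a global optimum $YY^T = A$ the Hessian vanishes along them while the claimed right-hand side is $2(1-\delta)\lambda_p\normf{V}^2 > 0$; the theorem as stated therefore cannot hold for $p \ge 2$. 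Neither of the two strategies you sketch --- symmetric/antisymmetric splitting of $Y^TV$, or cancellation against the cross term --- can rescue the stated bound without restricting $V$ to the orthogonal complement of the gauge directions, i.e.\ replacing strong convexity by restricted strong convexity or strong convexity on the quotient manifold $\R^{n\times p}/\mathcal{O}_p$. Your write-up is, if anything, more honest than the paper's: you surface the obstruction rather than suppressing it, but you do not close the gap, and it cannot be closed without restating the theorem (or restricting to $p = 1$, where $\trace{(Y^TV)^2} = (Y^TV)^2 \ge 0$ and the argument goes through).
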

\begin{proof}
  The directional derivative of $f$ along some direction $V$ will be,
  by the product rule,
  \[
    \nabla_V f(Y)
    =
    2 \Omega(Y Y^T - A)^T \Omega(Y V^T + V Y^T).
  \]
  The second derivative along this same direction will be
  \begin{dmath*}
    \nabla^2_V f(Y)
    =
    4 \Omega(Y Y^T - A)^T \Omega(V V^T)
    + 
    2 \Omega(Y V^T + V Y^T)^T \Omega(Y V^T + V Y^T)
    =
    4 \Omega(Y Y^T - A)^T \Omega(V V^T)
    + 
    2 \norm{\Omega(Y V^T + V Y^T)}^2.
  \end{dmath*}
  To this, we can apply the definition of RIP, and the corollary lemma,
  which results in
  \begin{dmath*}
    \nabla^2_V f(Y)
    \ge
    4 \trace{(Y Y^T - A)(U U^T)}
    - 4 \delta \normf{Y Y^T - A} \normf{U U^T}
    + 
    2 (1 - \delta) \normf{Y U^T + U Y^T}^2.
  \end{dmath*}
  By Cauchy-Schwarz,
  \begin{dmath*}
    \nabla^2_V f(Y)
    \ge
    - 4 \normf{Y Y^T - A} \trace{U U^T}
    - 4 \delta \normf{Y Y^T - A} \trace{U U^T}
    + 
    2 (1 - \delta) \lambda_{\text{min}}(Y^T Y) \trace{U U^T}
    =
    2 \left(
      (1 - \delta) \lambda_{\text{min}}(Y^T Y)
      - 2 (1 + \delta) \normf{Y Y^T - A} 
    \right) \trace{U U^T}.
  \end{dmath*}
  Now, since at the optimum, $\lambda_{\text{min}}(Y^T Y) = \lambda_p$,
  it follows that for general $Y$,
  \[
    \lambda_{\text{min}}(Y^T Y) \ge \lambda_p - \normf{Y Y^T - A}.
  \]
  Substituting this in to the previous expression,
  \begin{dmath*}
    \nabla^2_V f(Y)
    \ge
    2 \left(
      (1 - \delta) (\lambda_p - \normf{Y Y^T - A})
      - 2 (1 + \delta) \normf{Y Y^T - A} 
    \right) \trace{U U^T}
    =
    2 \left(
      (1 - \delta) \lambda_p
      - (3 + \delta) \normf{Y Y^T - A} 
    \right) \normf{U}^2.
  \end{dmath*}
  Since this is true for an arbitrary direction vector $U$, it follows that
  \[
    \nabla^2_V f(Y)
    \succeq
    2 \left(
      (1 - \delta) \lambda_p
      - (3 + \delta) \normf{Y Y^T - A} 
    \right) I,
  \]
  which is the desired result.
\end{proof}

This theorem shows that there is a region of size $O(1)$ (i.e. not dependent
on $n$) within which the above problem is strongly convex.  So, if
we start within this region, any standard convex descent method will converge
at a linear rate.  In particular, coordinate descent will do so.  Therefore,
we can imagine doing the following:
\begin{itemize}
  \item First, use Alecton to, with high probability, recover an
    estimate $Y$ that for which $\normf{Y Y^T - A}$ is sufficiently small
    for the objective function to be strongly convex with some probability.
    This will only require $O(n \log n)$ steps of the angular phase
    of the algorithm per iteration of Alecton, as stated in the main
    body of the paper.  We will need $p$ iterations of the algorithm to
    recover a rank-$p$ estimate, so a total $O(n p \log n)$ iterations
    will be required.
  \item Use a descent method, such as coordinate descent, to recover
    additional precision of the estimate.  This method is necessarily
    more heavyweight than an SGD scheme (see Section \ref{ssLowerBoundSGD}
    for the reason why an SGD scheme cannot achieve a linear rate),
    but it will converge monotonically at a linear rate to the exact
    solution matrix $A$.
\end{itemize}
This \emph{hybrid method} is in some sense a best-of-both worlds approach.
We use fast SGD steps when we can afford to, and then switch to slower
coordinate descent steps when we need additional precision.

% \subsubsection*{References}

\bibliographystylesec{plainnat} 
\bibliographysec{references}

% \bibliographysec{references}
% \bibliographystylesec{icml2015}

\end{document}